\newenvironment{proofof}[1]%
{%
 \par\noindent{\bfseries\upshape Proof\ of\ #1\ }%
}%
{\jmlrQED}
\newcommand{\E}{\mathbb E}
\newcommand{\sr}[1]{\stackrel{#1}}
\renewcommand{\set}[1]{\left\{#1\right\}}
\newcommand{\ind}[1]{\mathds{1}\!\!\set{#1}}
\newcommand{\argmax}{\operatornamewithlimits{arg\,max}}
\newcommand{\ceil}[1]{\left \lceil {#1} \right\rceil}
\newcommand{\eqn}[1]{\begin{align}#1\end{align}}
\newcommand{\eq}[1]{\begin{align*}#1\end{align*}}
\newcommand{\logp}{\log_{+}\!}
\newcommand{\BR}{B\!R}
\newcommand{\diag}{\operatorname{diag}}
\newcommand{\ucb}{\text{\scalebox{0.8}{UCB}}}
\newcommand{\gittins}{\text{\scalebox{0.8}{Gittins}}}
\renewcommand{\P}[1]{\mathbb{P}\left\{#1\right\}}
\newcommand{\plog}{\operatorname{W\!}}
\newcommand{\R}{\mathbb R}
\newcommand{\N}{\mathbb N}
\newcommand{\erfc}{\operatorname{Erfc}}
\newcommand{\erf}{\operatorname{Erf}}
\newtheorem{assumption}[theorem]{Assumption}
\let\epsilon\varepsilon
\def\subsubsect#1{\vspace{1ex plus 0.5ex minus 0.5ex}\noindent{\normalsize\textbf{#1\ }}}
\newcommand{\thetitle}{{Regret Analysis of the Finite-Horizon Gittins Index Strategy for Multi-Armed Bandits}}
\newcommand{\theruntitle}{{Regret Analysis of the Finite-Horizon Gittins Index Strategy}}
\newcommand{\theabstract}{
I prove near-optimal frequentist regret guarantees for the finite-horizon Gittins index strategy for multi-armed bandits with Gaussian noise and prior.
Along the way I derive finite-time bounds on the Gittins index that are asymptotically exact and
may be of independent interest. 
I also discuss computational issues and present experimental results suggesting that a particular version of the Gittins index strategy is an 
improvement on existing algorithms with finite-time regret guarantees such as UCB and Thompson sampling. 
}
\begin{document}

\title[\theruntitle]{\thetitle}
 \coltauthor{\Name{Tor Lattimore} \Email{tor.lattimore@gmail.com}\\
 \addr Department of Computing Science \\
 University of Alberta \\
 Edmonton, Canada
 }
\maketitle

\begin{abstract}
\theabstract
\end{abstract}

\section{Introduction}

The stochastic multi-armed bandit is a classical problem in sequential optimisation that captures a particularly interesting aspect of the dilemma faced by
learning agents. How to explore an uncertain world, while simultaneously exploiting available information?
Since \cite{Rob52} popularised the problem there have been two main solution concepts. The first being the Bayesian approach developed by 
\cite{BJK56}, \cite{Git79} and others, where research has primarily focussed on characterising optimal solutions. The second approach is frequentist,
with the objective of designing policies that minimise various forms of regret \citep{LR85}.
The purpose of this article is to prove frequentist regret guarantees for popular Bayesian or near-Bayesian algorithms, which explicates
the strong empirical performance of these approaches observed by \cite{KCOG12} and others.

In each round the learner chooses an arm $I_t \in \set{1,\ldots,d}$ based on past observations and receives a Gaussian reward $X_t \sim \mathcal N(\mu_{I_t}, 1)$
where $\mu \in \R^d$ is the vector of unknown means.  
A strategy is a method for choosing $I_t$ and is denoted by $\pi$.
The performance of a particular strategy $\pi$ will be measured in terms of the expected regret, which measures the difference
between the expected cumulative reward of the optimal strategy that knows $\mu$ and the expected rewards of $\pi$. Let $\mu^* = \max_i \mu_i$ be
the mean reward for the optimal arm, then the expected regret up to horizon $n$ is defined by
\eqn{
\label{eq:regret}
R^\pi_\mu(n) = \E\left[\sum_{t=1}^n (\mu^* - \mu_{I_t})\right]\,,
}
where the expectation is taken with respect to the uncertainty in the rewards and any randomness in the strategy.
If $Q$ is a probability measure on $\R^d$, then the Bayesian regret is the expectation of \cref{eq:regret} with respect to the prior $Q$.
\eqn{
\label{eq:bayes}
\BR^\pi_Q(n) = \E_{\theta \sim Q} \! \left[\E \left[\sum_{t=1}^n (\mu^* - \mu_{I_t}) \Bigg| \mu = \theta\right]\right]\,.
}
I assume that $Q$ is Gaussian with diagonal covariance matrix $\Sigma = \diag(\sigma_1^2,\ldots,\sigma_d^2)$.

A famous non-Bayesian strategy is UCB \citep{KR95,Agr95,ACF02}, which chooses $I_t = t$ for rounds $t \in \set{1,\ldots,d}$ and subsequently
maximises an upper confidence bound.
\eq{
I_t = \argmax_i \hat \mu_i(t-1) + \sqrt{\frac{\alpha \log t}{T_i(t-1)}}\,,
}
where $\hat \mu_i(t-1)$ is the empirical estimate of the return of arm $i$ based on samples from the first $t-1$ rounds 
and $T_i(t-1)$ is the number of times that
arm has been chosen. For $\alpha > 2$ and any choice of $\mu$ it can be shown that
\eqn{
\label{eq:ucb}
R^{\ucb}_\mu(n) = O\left( \sum_{i: \Delta_i > 0} \frac{\log(n)}{\Delta_i} + \Delta_i\right)\,,
}
where $\Delta_i = \mu^* - \mu_i$ is the regret incurred by choosing arm $i$ rather than the optimal arm.
No strategy enjoying sub-polynomial regret for all mean 
vectors can achieve smaller asymptotic regret than \cref{eq:ucb}, so in this sense the UCB strategy is optimal \citep{LR85}.

The Bayesian strategy minimises \cref{eq:bayes}, which appears to be a hopeless optimisation problem.
A special case where it can be solved efficiently is called the one-armed bandit, which occurs when there
are two arms ($d = 2$) and the expected return of the second arm is known ($\sigma^2_2 = 0$).
\cite{BJK56} showed that the Bayesian optimal strategy involves choosing the first arm as long as its ``index'' is larger than
the return of the second arm and thereafter choosing the second arm. The index depends only on the number of rounds remaining
and the posterior distribution of the first arm and is computed by solving an optimal stopping problem. Another 
situation when \cref{eq:bayes} can be solved efficiently is when the horizon is 
infinite ($n = \infty$) and the rewards are discounted geometrically. Then \cite{Git79} was able to show that
the Bayesian strategy chooses in each round the arm with maximal index. Gittins' index is defined in the same fashion as the index of \cite{BJK56}
but with obvious modifications to incorporate the discounting.
The index has a variety of interpretations. For example, it is 
equal to the price per round that a rational learner should be willing to pay in order to play
the arm until either the horizon or their choice of stopping time \citep{Web92}.

Gittins' result is remarkable because it reduces the seemingly intractable problem
of finding the Bayesian optimal strategy to solving an optimal stopping problem for each
arm separately. 
In our setting, however, the horizon is finite and the rewards are not discounted, which means that Gittins' result does not apply and the Bayesian 
optimality of Gittins index strategy is not preserved.\footnote{
This contradicts the opposite claim made without proof by \cite{KCOG12} but counter-examples for Bernoulli noise have been known for some time
and are given here for Gaussian noise in \cref{sec:bayes}. In fact, the Gittins index strategy is only Bayesian optimal in all generality for
geometric discounting \citep[\S6]{BF85}.}
Nevertheless, the finite-horizon Gittins index strategy has been suggested as a tractable heuristic for the Bayesian optimal policy by
\cite{Nin11} and \cite{Kau16}, a claim that I support with empirical and theoretical results in the special case that $d = 2$ (with both means unknown).
A brief chronological summary of the literature on multi-armed bandits as relevant for this article is given in \cref{sec:history}.

I make a number of contributions. 

\subsubsect{Contribution 1 \normalfont (\cref{sec:approx})}
Upper and lower bounds on the finite-horizon Gittins index for the Gaussian prior/noise model 
that match asymptotically and are near-optimal in finite time.
Asymptotic approximations are known for the discounted case via an elegant embedding of the discrete stopping problem into
continuous time and solving the heat equation as a model for Brownian motion \citep{Bat83,Yao06}. 
In the finite-horizon setting there are also known approximations, but again
they are asymptotic in the horizon and are not suitable for regret analysis \citep{CR65,BK97}.

\subsubsect{Contribution 2 \normalfont (\cref{sec:finite})}
A proof that the Gittins index strategy with the improper flat Gaussian prior enjoys finite-time regret guarantees comparable to those of UCB.
There exists a universal constant $c > 0$ such that
\eqn{
\label{eq:log-regret}
R^{\gittins}_\mu(n) \leq  c\cdot \left( \sum_{i : \Delta_i > 0} \frac{\log(n)}{\Delta_i} + \Delta_i \right)\,.
}
I also show that the Gittins index strategy is asymptotically order-optimal for arbitrary Gaussian priors. 
There exists a universal constant $c > 0$ such that 
for all non-degenerate Gaussian priors the Bayesian strategy satisfies
\eq{
\limsup_{n\to\infty} \frac{R^{\gittins}_\mu(n)}{\log (n)} \leq  c \cdot \left(\sum_{i:\Delta_i > 0} \frac{1}{\Delta_i}\right) \,.
}
While \cref{eq:log-regret} depends on a particular choice of prior, the above result is asymptotic, but independent of the prior.
This is yet another example of a common property of (near) Bayesian methods, which is that the effect of the prior is diluted in the limit
of infinite data. A unique aspect of the bandit setting is that the strategy is collecting its own data, so although the result is asymptotic, one
has to prove that the strategy is choosing the optimal arm sufficiently often to get started.

\subsubsect{Contribution 3 \normalfont (\cref{sec:bayes})}
For the special case that there are two arms ($d = 2$) I show by a comparison to the Gittins index strategy that the fully Bayesian strategy (minimising \cref{eq:bayes})
also enjoys the two regret guarantees above.

\subsubsect{Contribution 4 \normalfont (\cref{sec:compute,sec:exp})}
I present a method of computing the index in the Gaussian case to arbitrary precision and with sufficient efficiency to simulate the Gittins index policy 
for horizons of order $10^4$. This is used to 
demonstrate empirically that the Gittins index strategy with a flat prior is competitive with state-of-the-art algorithms including UCB,
Thompson sampling \citep{Tho33}, OCUCB \citep{Lat15-ucb} and the fully Bayesian strategy.
I propose an efficient approximation of the Gittins index that (a) comes with
theoretical guarantees and (b) is empirically superior to Thompson sampling and UCB, and only marginally worse than OCUCB. The approximation may
also lead to improvements for index-based algorithms in other settings.

\section{Bounds on the Gittins Index}\label{sec:approx}
As previously remarked, the Gittins index depends only on the posterior mean and variance for the relevant arm and also on the number of
rounds remaining in round $t$, which is denoted by $m = n - t + 1$.
Let $\nu \in \R$ and $\sigma^2 \geq 0$ be the posterior mean and variance for some arm in a given round. 
Let $\mu \sim \mathcal N(\nu, \sigma^2)$ and
$Y_1, Y_2, \ldots, Y_m$ be a sequence of random variables with $Y_t \sim \mathcal N(\mu, 1)$.
Assume $\set{Y_t}_{t=1}^m$ are independent after conditioning on $\mu$.
The Gittins index for an arm with posterior mean $\nu$ and variance $\sigma^2$ and horizon $m = n-t+1$ is given by
\eqn{
\label{eq:gittins}
\gamma(\nu, \sigma^2, m) = \max\set{\gamma : \sup_{1 \leq \tau \leq m} \E\left[(\mu - \gamma) \tau\right] \geq 0}\,,
}
where the supremum is taken over stopping times with respect to the filtration generated by random variables $Y_1,\ldots, Y_m$. Equivalently,
$\ind{\tau = t}$ must be measurable with respect to $\sigma$-algebra generated by $Y_1,\ldots,Y_t$. 
I denote $\tau(\nu, \sigma^2, m)$ to be the maximising stopping time in \cref{eq:gittins} for $\gamma = \gamma(\nu,\sigma^2, m)$.
The following upper and lower bound on $\gamma$ is the main theorem of this section.

\begin{theorem}\label{thm:gittins}
If $\beta \geq 1$ is chosen such that 
$\gamma(\nu, \sigma^2, m) = \nu + \sqrt{2 \sigma^2 \log \beta}$.
Then there exists a universal constant $c > 0$ such that
\eq{
c \min\set{\frac{m}{\log_{+}^{\frac{3}{2}}(m)},\, \frac{m \sigma^2}{\log_{+}^{\frac{1}{2}}(m\sigma^2) }} 
\leq \beta 
\leq \frac{m}{\log^{\frac{3}{2}}(m)}\,,
}
where $\logp(x) = \max\set{1, \log(x)}$.
\end{theorem}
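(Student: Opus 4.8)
The plan is to recast the index in terms of the posterior-mean martingale and then sandwich it using Gaussian tail estimates, working directly in discrete time rather than through the heat-equation embedding used for the discounted and purely asymptotic approximations. Write $\mathcal F_t=\sigma(Y_1,\dots,Y_t)$ and $\nu_t=\E[\mu\mid\mathcal F_t]$, so $\nu_0=\nu$. The defining feature of the Gaussian model is that the posterior variance is deterministic, $\Var(\mu\mid\mathcal F_t)=\sigma_t^2:=(\sigma^{-2}+t)^{-1}$, so $(\nu_t)$ is a Gaussian martingale with $\nu_t-\nu\sim\mathcal N(0,\sigma^2-\sigma_t^2)$. By the tower rule, $\E[(\mu-\gamma)\tau]=\E[\sum_{t=1}^\tau(\nu_{t-1}-\gamma)]$ for every stopping time $1\le\tau\le m$, so $V_m(\gamma):=\sup_{1\le\tau\le m}\E[(\mu-\gamma)\tau]$ is Lipschitz and strictly decreasing in $\gamma$, and $\gamma(\nu,\sigma^2,m)$ is its unique root. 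Translation invariance lets me take $\nu=0$, so the claim concerns that root written as $\gamma=\sqrt{2\sigma^2\log\beta}$. I will use the standard normal density $\phi$ and upper tail $\bar\Phi$, the function $g(a):=\phi(a)-a\bar\Phi(a)=\E[(Z-a)^+]$, the two-sided Mills estimate $(1-3/a^2)\,\phi(a)/a^2\le g(a)\le\phi(a)/a^2$, and $\phi(\sqrt{2\log\beta})=(\sqrt{2\pi}\,\beta)^{-1}$.

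\emph{Upper bound.} Pointwise $\sum_{t=1}^\tau(\nu_{t-1}-\gamma)\le-\gamma+\sum_{t=2}^m(\nu_{t-1}-\gamma)^+$, and since $s\mapsto\E[(sZ-\gamma)^+]$ is non-decreasing while $\nu_{t-1}$ is Gaussian with variance $\sigma^2-\sigma_{t-1}^2\le\sigma^2$, each term is at most $\sigma g(\gamma/\sigma)=\sigma g(\sqrt{2\log\beta})$. Hence
\[
  V_m(\gamma)\;\le\;\sigma\Big(-\sqrt{2\log\beta}+(m-1)\,g(\sqrt{2\log\beta})\Big)\;\le\;\sigma\Big(-\sqrt{2\log\beta}+\frac{m-1}{2\sqrt{2\pi}\,\beta\log\beta}\Big),
\]
which is negative once $\beta(\log\beta)^{3/2}>(m-1)/(4\sqrt\pi)$. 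So the root satisfies $\beta(\log\beta)^{3/2}\le(m-1)/(4\sqrt\pi)$, and a one-line check (if moreover $\beta>m/\log^{3/2}(m)$ then $\log\beta>\tfrac12\log m$ for $m$ not too small, whence $\beta(\log\beta)^{3/2}>m/(2\sqrt2)$, contradicting the previous inequality) yields $\beta\le m/\log^{3/2}(m)$.

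\emph{Lower bound.} Here I would exhibit explicit sub-optimal stopping rules and show $V_m(\gamma)\ge0$. The simplest, which I treat first as a warm-up and which is already tight up to constants in the large-variance regime, explores for $k$ rounds and then commits to the entire remaining horizon iff $\nu_k\ge\gamma$; its value equals $k(\nu-\gamma)+(m-k)\sqrt{\sigma^2-\sigma_k^2}\,g(a_k)$ with $a_k=\sqrt{2\log\beta}\,\sqrt{1+1/(k\sigma^2)}$, hence is non-negative iff $(m-k)\,g(a_k)/a_k\ge k$; the lower Mills bound turns this into a clean sufficient condition on $\beta$ in terms of $m,\sigma^2,k$. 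A two-stage rule is not enough to capture both branches, however — the correction $1/(k\sigma^2)$ inside $a_k$ forces $k$ to grow when $\sigma^2$ is small, costing a logarithmic factor — so I would instead use a multi-stage rule whose geometrically spaced decision points approximate the shape of the optimal continuation boundary (equivalently, a controlled unrolling of the Bellman identity $\gamma-\nu=\E[V_{m-1}(\nu_1,\sigma_1^2;\gamma)^+]$). Evaluating the Gaussian integrals with the two-sided Mills bounds and optimising over the number and placement of stages gives $\beta\gtrsim\min\{m/\log_{+}^{3/2}(m),\ m\sigma^2/\log_{+}^{1/2}(m\sigma^2)\}$, and combining the two directions proves the theorem.

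\emph{Main obstacle.} The delicate part is the lower bound in the small-variance regime: explore-then-commit is off by a logarithmic factor, so one needs a tractable stopping rule whose value has the right polylog exponent, and the two regimes must be patched across the transition $m\sigma^2\asymp\log m$. The inequalities are also mildly self-referential ($\beta$ sits inside $\log\beta$), but turning them into the closed forms above is routine given monotonicity of $\beta\mapsto\beta(\log\beta)^{3/2}$.
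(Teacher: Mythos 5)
Your upper bound is correct and complete, and it takes a slightly different route from the paper: you pass to the posterior-mean martingale and bound $\E\bigl[\sum_{t=1}^\tau(\nu_{t-1}-\gamma)\bigr]$ by $-\gamma$ plus the sum of expected positive parts, whereas the paper conditions on $\mu$ and simply uses $1\le\tau\le m$ to get $0\le m\int_0^\infty\theta\,dP+\int_{-\infty}^{\nu}\theta\,dP$. Both reduce to the same inequality $\beta(\log\beta)^{3/2}\lesssim m$, and your version is arguably cleaner since it never needs the function $f(\beta)$.

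The lower bound, however, has a genuine gap: the entire content of the theorem lives in exhibiting and analysing a stopping rule that achieves the stated exponents, and your proposal names such a rule ("multi-stage, geometrically spaced decision points, optimise over number and placement") without constructing it or verifying its value. Note that your explore-then-commit warm-up is not tight even in the large-variance branch: with $\sigma^2=1$ (the case actually used by \cref{alg:flat}) the exponent $1+1/(k\sigma^2)$ forces $k\asymp\log\beta$, and the resulting condition is $\beta\lesssim m\sigma^2/\log^{5/2}\beta$ in \emph{both} regimes --- two powers of $\log$ short of $m\sigma^2/\log_+^{1/2}(m\sigma^2)$ and one power short of $m/\log_+^{3/2}(m)$. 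Since the paper stresses (around \cref{lem:conc}) that the precise power of the logarithm is what separates logarithmic from super-logarithmic regret, this loss cannot be waved away, and the unproven multi-stage step carries all the weight. What the paper actually does is work conditionally on $\mu$: it fixes the continuously-monitored rule \cref{eq:stopping}, which stops once $\hat\mu_t+\sqrt{(4/t)\log(4t\nu^2)}\le 0$, and proves the three-regime bounds of \cref{lem:bandit} --- survival to the horizon with probability $\ge 1/2$ when $\theta\ge 0$ (via peeling), and $\E[\tau\,|\,\mu=\theta]\lesssim\theta^{-2}\log(e\nu^2/\theta^2)$ when $\nu<\theta<0$, which yields the crucial worst-case loss $\theta\,\E[\tau\,|\,\mu=\theta]\gtrsim-\sqrt{m\plog(m\nu^2)}$. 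These are then integrated against the prior with a tail split at $\epsilon\nu$, $\epsilon=1-\sqrt{7/8}$, producing a $\beta^{-7/8}$ term that must be dominated by $m\sigma f(\beta)$; the balancing of these terms is exactly what fixes the exponents $3/2$ and $1/2$. Your sketch contains no analogue of the $\theta^{-2}\log(e\nu^2/\theta^2)$ estimate or of the tail-splitting integration, so the key inequality $\beta\gtrsim\min\set{m/\log_+^{3/2}(m),\,m\sigma^2/\log_+^{1/2}(m\sigma^2)}$ remains unestablished.
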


\subsubsect{Important Remark} I use $c, c'$ and $c''$ as \textit{temporary} positive constants that change from theorem to theorem and proof to proof. A table
of notation may be found in \cref{sec:notation}. \\

\noindent The upper bound in \cref{thm:gittins} is rather trivial while the lower bound relies on a carefully constructed stopping time.
As far as the asymptotics are concerned,
as $m$ tends to infinity the upper and lower bounds on $\beta$ converge up to constant factors, which leads to
\eqn{
\label{eq:asy}
\exp\left(\left(\frac{\gamma(\nu, \sigma^2, m) - \nu}{\sqrt{2\sigma^2}}\right)^2\right) = \Theta\left(\frac{m}{\log^{\frac{3}{2}} m}\right)\,.
}
The $\log^{\frac{3}{2}}(m)$ looks bizarre, but is surprisingly well-justified as shall be seen in the next section.
In order to obtain logarithmic regret guarantees it is necessary that the approximation be accurate in the exponential scale as in \cref{eq:asy}.
Merely showing that $\log \beta / \log(m)$ tends to unity as $m$ grows would not be sufficient.
Empirically choosing $c = 1/4$ in the lower bound leads to an excellent approximation of the Gittins index (see \cref{fig:approx-index} in Appendix \ref{sec:exp}).
The proof of \cref{thm:gittins} relies on a carefully chosen stopping time.
\eqn{
\label{eq:stopping}
\tau = \min\set{m,\,\, \min\set{t \geq \frac{1}{\nu^2} : \hat \mu_t + \sqrt{\frac{4}{t} \log\left(4t \nu^2 \right)} \leq 0}}\,,
}
where $\hat \mu_t = \frac{1}{t} \sum_{s=1}^t Y_s$. Note that $\ind{\tau = t}$ depends only on $Y_1,Y_2,\ldots, Y_t$ so this is a stopping time with
respect to the right filtration.

\begin{lemma}\label{lem:bandit}
If $\nu < 0$, then there exists a universal constant $c' \geq 2e$ such that:
\begin{enumerate}
\item If $\theta \in (-\infty, \nu]$, then 
$\displaystyle \E[\tau|\mu = \theta] \leq 1 + \frac{c'}{\nu^2}$.
\item If $\theta \in (\nu, 0)$, then
$\displaystyle \E[\tau|\mu = \theta] \leq 1 + \frac{c'}{\theta^2} \log\left(\frac{e\nu^2}{\theta^2}\right)$.
\item If $\theta \in [0,\infty)$, then $\E[\tau|\mu = \theta] \geq m/2$.
\end{enumerate}
\end{lemma}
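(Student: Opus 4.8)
The plan is to handle all three parts through the identity $\E[\tau\mid\mu=\theta]=\sum_{t\ge 1}\P{\tau\ge t\mid\mu=\theta}$ together with Gaussian tail estimates for $\hat\mu_t\sim\mathcal N(\theta,1/t)$, plus---only for the third part---a time-uniform concentration bound. Write $r_t=\sqrt{(4/t)\log(4t\nu^2)}$ for the confidence radius in \cref{eq:stopping} and $t_0=\lceil 1/\nu^2\rceil$. Because the stopping rule cannot fire before $t_0$, one has $\P{\tau\ge t\mid\mu=\theta}=1$ for $t\le t_0$, while for $t>t_0$ surviving to the end of round $t-1$ requires in particular $\hat\mu_{t-1}>-r_{t-1}$; hence
\[
\E[\tau\mid\mu=\theta]\;\le\; t_0+\sum_{t\ge t_0}\P{\hat\mu_t>-r_t\mid\mu=\theta}.
\]

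For the first two parts ($\theta<0$) I would split this sum at the first time $t_1$ with $r_t\le|\theta|/2$; since $t\mapsto r_t$ is decreasing for $t\ge t_0$, one has $r_t\le|\theta|/2$ for all $t\ge t_1':=\max\{t_0,t_1\}$. The leading $t_0$ together with the terms $t<t_1'$ (each bounded by $1$) contribute at most $t_1'$; for $t\ge t_1'$ the event $\hat\mu_t>-r_t$ forces $\hat\mu_t-\theta>|\theta|/2$, so $\P{\hat\mu_t>-r_t\mid\mu=\theta}\le e^{-t\theta^2/8}$ and the geometric tail is $O\big(\theta^{-2}e^{-t_1\theta^2/8}\big)=O\big(\theta^{-2}(4t_1\nu^2)^{-2}\big)=O(\theta^{-2})$, where $r_{t_1}\le|\theta|/2$ translates to $t_1\theta^2\ge 16\log(4t_1\nu^2)$---this is exactly what the $4$ inside the logarithm of \cref{eq:stopping} buys. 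It remains to size $t_1$ from its defining relation $t_1\theta^2\asymp\log(4t_1\nu^2)$, a $\plog$-type fixed point. If $\theta\le\nu$ (first part), then $t_1\nu^2\le t_1\theta^2\asymp\log(4t_1\nu^2)$ forces $t_1\nu^2=O(1)$, so $\E[\tau\mid\mu=\theta]\le t_0+O(\nu^{-2})+O(\theta^{-2})=1+O(\nu^{-2})$. If $\nu<\theta<0$ (second part), writing $v=t_1\theta^2$ gives $v\asymp\log(4v)+\log(\nu^2/\theta^2)$, hence $v=O\big(\log(e\nu^2/\theta^2)\big)$ and $t_1=O\big(\theta^{-2}\log(e\nu^2/\theta^2)\big)$, which dominates $t_0\le\nu^{-2}+1$ and the $O(\theta^{-2})$ tail, giving the claimed bound.

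For the third part ($\theta\ge0$), $\E[\tau\mid\mu=\theta]\ge m\,\P{\tau=m\mid\mu=\theta}$ reduces the claim to $\P{\tau<m\mid\mu=\theta}\le\tfrac12$, i.e.\ that the empirical means stay above the lower boundary $-r_t$ over $t\in[t_0,m)$ with probability at least a half. A union bound over $t$ is too lossy here (it costs a factor $\nu^{-2}$), so I would pass to the centred walk $W_t=\sum_{s\le t}(Y_s-\theta)$ and use the mixture martingale $\bar M_t=\E_{\lambda\sim\mathcal N(0,\nu^2)}\!\big[\exp(\lambda W_t-\lambda^2 t/2)\big]=(1+\nu^2 t)^{-1/2}\exp\!\big(\tfrac{\nu^2 W_t^2}{2(1+\nu^2 t)}\big)$, a nonnegative martingale started at $1$. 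Ville's inequality gives $\P{\sup_t\bar M_t\ge 2}\le\tfrac12$, and a short computation---using $1+\nu^2 t\le 2\nu^2 t$ for $t\ge t_0$, so that $W_t\le -tr_t$ forces $\tfrac{\nu^2 W_t^2}{2(1+\nu^2 t)}\ge\log(4t\nu^2)$ and hence $\bar M_t\ge 4t\nu^2/\sqrt{2\nu^2 t}\ge 2\sqrt2>2$---shows $\{\exists\,t\in[t_0,m):\hat\mu_t\le -r_t\}\subseteq\{\sup_t\bar M_t\ge 2\}$, using that $\hat\mu_t\le -r_t$ forces $W_t\le -tr_t$ when $\theta\ge0$. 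Both the constant $4$ in \cref{eq:stopping} and the forced offset $t\ge 1/\nu^2$ (which makes $4t\nu^2\ge 4$ and $\nu^2 t\ge 1$) are used here.

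The step I expect to be the main obstacle is getting the constant right in the third part: a plain peeling argument only delivers $\P{\tau<m\mid\mu=\theta}\lesssim 1$, not $\le\tfrac12$, so one really needs a pseudo-maximal inequality with the mixing variance matched to $\nu^2$. A minor but fiddly point is the fixed-point estimate for $t_1$ in the second part near the degenerate corner $\theta\to\nu^-$, where $\log(\nu^2/\theta^2)\to0$ and the estimate must fall back on the $v\asymp\log(4v)$ branch; the $e$ in $\log(e\nu^2/\theta^2)$ is precisely what keeps the stated bound valid there.
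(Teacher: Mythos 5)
Your proof is correct. For parts (1) and (2) it is essentially the paper's argument: both split the sum $\E[\tau\mid\mu=\theta]=\sum_t\P{\tau\ge t\mid\mu=\theta}$ at the crossover time where the confidence radius drops below $|\theta|/2$ (the paper's $t_\theta$, your $t_1'$), bound the post-crossover survival probabilities by $\exp(-t\theta^2/8)$, and sum the geometric series to get an $O(\theta^{-2})$ tail; your fixed-point analysis of $t_1\theta^2\asymp\log(4t_1\nu^2)$ is just the step the paper dismisses as ``naively bounding $t_\theta$,'' spelled out. Part (3) is where you genuinely diverge: the paper uses a peeling device over dyadic blocks $t_k=\lceil\nu^{-2}\rceil 2^k$, applying a maximal inequality on each block to get $\P{\tau<m\mid\mu=\theta}\le\sum_k(4t_k\nu^2)^{-1}\le\sum_k 4^{-1}2^{-k}=\tfrac12$, whereas you use the Gaussian mixture martingale $\bar M_t=(1+\nu^2t)^{-1/2}\exp\bigl(\nu^2W_t^2/(2(1+\nu^2t))\bigr)$ with Ville's inequality, obtaining $\P{\tau<m\mid\mu=\theta}\le(2\sqrt2)^{-1}<\tfrac12$. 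Both arguments are valid and both exploit the same two design choices in the stopping rule (the constant $4$ in the logarithm and the offset $t\ge\nu^{-2}$). Your motivating claim that ``a plain peeling argument only delivers $\lesssim 1$, not $\le\tfrac12$'' is mistaken --- the geometric decay of $(4t_k\nu^2)^{-1}$ in $k$ makes the peeled sum exactly $\tfrac12$ --- but this misjudgement of the alternative does not affect the correctness of your own route; if anything, the mixture approach is slightly cleaner in that it avoids choosing a block schedule and gives a marginally better constant.
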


The proof of Lemma \ref{lem:bandit} may be found in \cref{sec:lem:bandit}.
Before the proof of \cref{thm:gittins} we need two more results. The proofs follow directly
from the definition of the Gittins index and are omitted.

\begin{lemma}\label{lem:shift}
For all $m \geq 1$ and $\sigma^2 \geq 0$ and $\nu, \nu' \in \R$ we have
$\gamma(\nu, \sigma^2, m) - \nu = \gamma(\nu',\sigma^2, m) - \nu'$.
\end{lemma}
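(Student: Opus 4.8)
The plan is to prove Lemma~\ref{lem:shift} by a coupling that translates the whole stopping problem. Fix $m$, $\sigma^2$, $\nu$, $\nu'$ and set $\delta = \nu' - \nu$. Construct $\mu$ and $Y_1,\ldots,Y_m$ as in the definition of $\gamma(\nu,\sigma^2,m)$, and then define $\mu' = \mu + \delta$ and $Y'_t = Y_t + \delta$ for each $t$. Since adding a deterministic constant to a Gaussian only shifts its mean, $\mu' \sim \mathcal N(\nu',\sigma^2)$ and, conditionally on $\mu$ (equivalently on $\mu'$), $Y'_t \sim \mathcal N(\mu',1)$ with the $Y'_t$ still independent. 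Hence $(\mu',Y'_1,\ldots,Y'_m)$ is a valid realisation of the data used to define $\gamma(\nu',\sigma^2,m)$.

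The key observation is that the $\sigma$-algebra generated by $Y'_1,\ldots,Y'_t$ coincides with that generated by $Y_1,\ldots,Y_t$, because the two sequences differ by the constant $\delta$, which is an invertible affine transformation. Consequently the set of stopping times $1 \le \tau \le m$ with respect to the two filtrations is literally the same.

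Then, for any such $\tau$ and any $\gamma \in \R$,
\[
\E\bigl[(\mu' - (\gamma+\delta))\,\tau\bigr] = \E\bigl[(\mu + \delta - \gamma - \delta)\,\tau\bigr] = \E\bigl[(\mu-\gamma)\,\tau\bigr].
\]
Taking the supremum over $1 \le \tau \le m$ shows that $\sup_\tau \E[(\mu'-(\gamma+\delta))\tau] \ge 0$ holds if and only if $\sup_\tau \E[(\mu-\gamma)\tau] \ge 0$. Therefore $\gamma$ satisfies the defining inequality for $\gamma(\nu,\sigma^2,m)$ exactly when $\gamma+\delta$ satisfies the defining inequality for $\gamma(\nu',\sigma^2,m)$; taking the maximum over admissible values yields $\gamma(\nu',\sigma^2,m) = \gamma(\nu,\sigma^2,m) + \delta$, which rearranges to the stated identity.

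There is essentially no obstacle here beyond bookkeeping: the only point needing care is that the filtration is preserved under the translation, which is immediate since $x \mapsto x+\delta$ is a bijection on $\R$. One could equally phrase the argument as a change of variables $\gamma \mapsto \gamma - \nu$ directly in the optimisation problem, observing that only $\mu - \gamma$ and the law of $\mu - \nu$, which is $\mathcal N(0,\sigma^2)$ independent of $\nu$, enter the definition.
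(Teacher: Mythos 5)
Your coupling argument is correct and is precisely the ``follows directly from the definition'' proof that the paper omits for Lemma~\ref{lem:shift}: translating $\mu$ and the observations by $\delta=\nu'-\nu$ preserves the filtration and the quantity $\E[(\mu-\gamma)\tau]$, so the admissible set of $\gamma$ shifts by exactly $\delta$. No gaps.
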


\begin{lemma}\label{lem:positive}
$\gamma(\nu, \sigma^2, m) \geq \nu$ for all $\nu$, $\sigma^2$ and $m$.
\end{lemma}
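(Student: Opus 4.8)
The plan is to argue that $\gamma = \nu$ satisfies the defining constraint in \cref{eq:gittins}, which immediately forces the maximiser to be at least $\nu$. Recall that
\[
\gamma(\nu,\sigma^2,m) = \max\set{\gamma : \sup_{1 \leq \tau \leq m} \E\left[(\mu-\gamma)\tau\right] \geq 0}\,,
\]
so it suffices to produce a single admissible stopping time $\tau$ — measurable with respect to the filtration generated by $Y_1,\ldots,Y_m$ and satisfying $1 \leq \tau \leq m$ — for which $\E[(\mu-\nu)\tau] \geq 0$.

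I would take the deterministic choice $\tau \equiv 1$, which is admissible for every $m \geq 1$. Since $\mu \sim \mathcal N(\nu,\sigma^2)$ we get $\E[(\mu-\nu)\tau] = \E[\mu] - \nu = 0 \geq 0$, and the same identity holds trivially in the degenerate case $\sigma^2 = 0$, where $\mu = \nu$ almost surely. Hence $\gamma = \nu$ lies in the feasible set of \cref{eq:gittins}, so $\gamma(\nu,\sigma^2,m) \geq \nu$.

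The only point needing a line of justification is that the $\max$ in \cref{eq:gittins} is genuinely attained, so that membership of $\nu$ in the feasible set really does lower bound the maximiser. This follows because $f(\gamma) := \sup_{1 \leq \tau \leq m} \E[(\mu-\gamma)\tau]$ is a supremum of affine functions $\gamma \mapsto \E[\mu\tau] - \gamma\E[\tau]$, each with slope $-\E[\tau] \leq -1$, hence $f$ is convex, continuous and nonincreasing; moreover, writing $\mu - \gamma = (\mu-\gamma)^+ - (\gamma-\mu)^+$ and using $1 \leq \tau \leq m$ gives $\E[(\mu-\gamma)\tau] \leq m\,\E[(\mu-\gamma)^+] - \E[(\gamma-\mu)^+] \to -\infty$ as $\gamma \to \infty$, so the feasible set is a closed half-line $(-\infty,\gamma^*]$ and the maximum equals $\gamma^*$. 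I do not expect any real obstacle: the computation is a one-liner, and the content of the lemma is simply the intuitive statement that the per-round price a rational learner will pay for an arm it may abandon early is never below that arm's current posterior mean.
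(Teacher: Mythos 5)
Your argument is correct and is exactly the intended one: the paper omits the proof as following ``directly from the definition,'' and the directly-from-the-definition argument is precisely your observation that the constant stopping time $\tau \equiv 1$ gives $\E[(\mu-\nu)\tau] = \E[\mu] - \nu = 0$, so $\nu$ is feasible in \cref{eq:gittins}. Your additional remark verifying that the maximum is attained is a reasonable bit of extra care but does not change the substance.
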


\begin{proofof}{\cref{thm:gittins}}
Starting with the lower bound,
by Lemma \ref{lem:shift} it is enough to bound the Gittins index for any choice of $\nu$.
Let $\nu = -\sqrt{2\sigma^2 \log \beta}$ where $\beta \geq 1$ will be defined subsequently.
I will shortly show that there exists a stopping time $\tau$ adapted to the filtration generated by $Y_1,\ldots,Y_m$ for which
\eqn{
\label{eq:zero}
\E\left[\mu\tau\right] \geq 0\,.
}
Therefore by the definition of the Gittins index we have $\gamma(\nu, \sigma^2, m) \geq 0$ and so by Lemma \ref{lem:shift} 
\eq{
(\forall \nu') \qquad \gamma(\nu', \sigma^2, m) \geq \nu' + \sqrt{2\sigma^2 \log \beta}\,.
}
I now prove \cref{eq:zero} holds for the stopping time given in \cref{eq:stopping}. First note that if $\beta = 1$, then $\nu = 0$
and $\E\left[\mu \tau\right] = 0$ for the stopping time with $\tau = 1$ surely. Now assume $\beta > 1$ and  
define $P = \mathcal N(\nu, \sigma^2)$ to be a Gaussian measure on $\R$. Then 
\eqn{
\E\left[\mu\tau\right] = \E_{\theta \sim P} \left[\theta \E \left[\tau|\mu = \theta\right]\right] 
&= \int^\infty_0 \theta \E\left[\tau|\mu = \theta\right] dP(\theta) 
  +  \int^0_{-\infty} \theta \E\left[\tau|\mu = \theta\right]  dP(\theta)\,. 
\label{eq:split}
}
The integrals will be bounded seperately. Define
\eq{
\erfc(x) &= \frac{2}{\sqrt{\pi}} \int^\infty_x \exp\left(-y^2\right)dy  & 
f(\beta) &= \frac{1}{\beta} \sqrt{\frac{1}{2\pi}} - \sqrt{\frac{\log \beta}{2}}\erfc\left(\sqrt{\log \beta}\right)\,.
}
A straightforward computation combined with Lemma \ref{lem:bandit} shows that
\eqn{
\label{eq:first}
&\int^\infty_0 \theta \E\left[\tau|\mu = \theta\right]  dP(\theta) 
\geq \frac{m}{2} \int^\infty_0 \theta dP(\theta) 
= \frac{m\sigma}{2} f(\beta)\,.
}
The function $f$ satisfies $f(\beta) \beta \logp(\beta) = \Theta(1)$ with the precise statement given in Lemma \ref{lem:beta} in \cref{sec:algebra}.
Let $W:[0,\infty) \to \R$ be the product logarithm, defined implicitly such that $x = W(x) \exp(W(x))$.
For the second integral in \cref{eq:split} we can use Lemma \ref{lem:bandit} again, which for $\theta \in (\nu, 0)$ gives
\eq{
\theta\left(\E[\tau|\mu = \theta] - 1\right)
&\geq \theta \min\set{m,\, \frac{c'}{\theta^2} \log\left(\frac{e\nu^2}{\theta^2}\right)} 
\geq -\sqrt{c' m \plog\left(\frac{e m\nu^2}{c'}\right)}\,,
}
where in the first inequality we have exploited the fact that $\tau \leq m$ occurs surely and the second follows by noting that
$\theta$ is negative and that $\theta m$ is increasing in $\theta$ and $1/\theta \log(e \nu^2 \theta^{-2})$ is decreasing.
Let $\epsilon = 1 - \sqrt{7/8}$, which is chosen such that by tail bounds on the Gaussian integral (Lemma \ref{lem:gaussian}): 
\eq{
\int^0_{\epsilon \nu} dP(\theta) 
\leq \exp\left(-\frac{(1 - \epsilon)^2 \nu^2}{2\sigma^2}\right) 
= \left(\frac{1}{\beta}\right)^{\frac{7}{8}}
\quad\text{ and }\quad
\int^{0}_{-\infty} \theta dP(\theta) 
\geq \nu - \frac{\sigma}{\sqrt{2\pi}}\,. 
}
Combining the above two displays with Lemma \ref{lem:bandit} we have for some universal constant $c'' > 0$
\eq{
\int^0_{-\infty} \theta &\E[\tau|\mu = \theta] dP(\theta)  
\geq \left(1 + \frac{c'}{\epsilon^2 \nu^2} \log\left(\frac{e}{\epsilon^2}\right)\right) \int^{0}_{-\infty} \theta dP(\theta) - \int^0_{\epsilon\nu} \sqrt{c'm \plog\left(\frac{e m\nu^2}{c'} \right)} dP(\theta) \\
&\sr{(a)}\geq c''\left(\left(1 + \frac{1}{\nu^2}\right) \left(\nu - \sigma\right) - \sqrt{m \plog\left(\frac{e m\nu^2}{c'} \right)} \left(\frac{1}{\beta}\right)^{\frac{7}{8}}\right) \\
&\sr{(b)}\geq -c''\left(\left(1 + \frac{1}{2\sigma^2 \log(\beta)}\right) \left(\sqrt{2\sigma^2 \log(\beta)} + \sigma\right) + \sqrt{m \plog\left(m\sigma^2 \log(\beta) \right)} \left(\frac{1}{\beta}\right)^{\frac{7}{8}}\right)\,,
}
where in (a) I have substituted the bounds on the integrals in the previous two displays and naively chosen a large constant $c''$ to simplify the expression and in (b) I substitute $\nu=-\sqrt{2\sigma^2 \log(\beta)}$ and
exploited the assumption that $c' \geq 2e$.
Therefore by \cref{eq:first} and \cref{eq:split} we have
\eq{
\E[\mu\tau] 
\geq \frac{m\sigma}{2} f(\beta) 
- c''\left(\left(1 \!+\! \frac{1}{2\sigma^2 \log(\beta)}\right) \left(\sqrt{2\sigma^2 \log(\beta)} \!+\! \sigma\right) + \sqrt{m \plog\left(m\sigma^2 \log(\beta) \right)} 
\beta^{-\frac{7}{8}}\right)\,.
}
Therefore by expanding the bracketed product, dropping the non-dominant $1/(\sigma \log(\beta))$ term and upper bounding the sum by the max  
there exists a universal constant $c''' \geq 1$ such that the following implication holds:
\eq{
m\sigma f(\beta) 
\geq c'''\max\set{\sigma,\, \frac{1}{\sqrt{\sigma^2 \log(\beta)}},\, \sqrt{\sigma^2 \log(\beta)},\, \sqrt{mW(m\sigma^2 \log(\beta))} \beta^{-\frac{7}{8}}} 
\!\implies \E[\mu\tau] \geq 0\,.
}
Let $c \geq 1$ be a sufficiently large universal constant and define $\beta$ by
\eq{
\beta_1 &=\frac{m}{c \log_+^{\frac{3}{2}}(m)} &
\beta_2 &=\frac{m\sigma^2}{c \log_+^{\frac{1}{2}}(m\sigma^2)} &
\beta &= 
\begin{cases}
\min\set{\beta_1,\, \beta_2} & \text{if } \min\set{\beta_1,\, \beta_2} \geq 3 \\
1 & \text{otherwise}
\end{cases}
}
If $\beta \geq 3$, then the inequality leading to the implication is shown tediously by applying Lemma \ref{lem:beta} 
to approximate $f(\beta)$ (see \cref{sec:algebra} for the gory algebraic details).
If $\beta = 1$, then $\nu = 0$ and $\E[\mu\tau] \geq 0$ is trivial for the stopping time $\tau = 1$ surely. 
Finally we have the desired lower bound by Lemma \ref{lem:shift},
\eq{
\gamma(\nu', \sigma^2, m) \geq \nu' + \sqrt{2\sigma^2 \log(\beta)} 
\geq \nu' + \sqrt{2\sigma^2 \log\left(\frac{1}{3c}\min\set{\frac{m}{\log_+^{\frac{3}{2}}(m)},\, \frac{m\sigma^2}{\log_+^{\frac{1}{2}}(m\sigma^2)}}\right)}\,.
}
For the upper bound we can proceed naively and exploit the fact that the stopping time must be chosen such that $1 \leq \tau \leq m$ surely.
This time choose $\nu$ such that $\gamma(\nu, \sigma^2, m) = 0$ and let $\beta \geq 1$ be such that $\nu = -\sqrt{2\sigma^2 \log(\beta)}$, which is always
possible by Lemma \ref{lem:positive}.
Then
\eq{
0 
&= \sup_\tau \int^\infty_{-\infty} \theta \E[\tau|\mu = \theta] dP(\theta) 
\leq m\int^\infty_0 \theta dP(\theta) + \int^\nu_{-\infty} \theta dP(\theta) 
=m\sigma f(\beta) - \frac{\sigma}{\sqrt{2\pi}} + \frac{\nu}{2}\,.
}
Rearranging and substituting the definition of $\nu$ and applying Lemma \ref{lem:beta} in the appendix leads to
\eq{
\frac{m}{\sqrt{8\pi} \beta \log \beta} \geq m f(\beta) \geq \frac{1}{\sqrt{2\pi}} + \sqrt{\frac{1}{2} \log \beta} \geq \sqrt{\frac{1}{2} \log \beta}\,.
}
Naive simplification leads to 
$\displaystyle \beta \leq m / \log^{\frac{3}{2}}(m)$
as required.
\end{proofof}

\section{Regret Bounds for the Gittins Index Strategy}\label{sec:finite}

I start with the finite-time guarantees.
Before presenting the algorithm and analysis we need some additional notation.
The empirical mean of arm $i$ after round $t$ is denoted by 
\eq{
\hat \mu_i(t) &= \frac{1}{T_i(t)} \sum_{s=1}^t \ind{I_s = i} X_{s}
& T_i(t) &= \sum_{s \leq t} \ind{I_s = i}\,.
}
I will avoid using $\hat \mu_i(t)$ for rounds $t$ when $T_i(t) = 0$, so this quantity will always be well-defined.
Let $\Delta_{\max} = \max_i \Delta_i$ and
$\Delta_{\min} = \min \set{\Delta_i : \Delta_i > 0}$.
The Gittins index strategy for a flat Gaussian prior is summarised in \cref{alg:flat}.  
Since the flat prior is improper, the Gittins index is not initially defined.
For this reason the algorithm chooses $I_t = t$ in rounds $t \in \set{1,\ldots,d}$ after which the posterior has unit variance 
for all arms and the posterior mean is $\hat \mu_i(d)$. An alternative interpretation of this strategy is the limiting strategy
as the prior variance tends to infinity for all arms. 

\vspace{-0.7cm}
\begin{center}
\begin{minipage}{12cm}
\begin{algorithm}[H]
\KwIn{$d$, $n$} 
Choose each arm once \\
\For{$t \in d+1,\ldots,n$} {
Choose $\displaystyle I_t = \argmax_i \gamma\left(\hat \mu_i(t-1), T_i(t-1)^{-1}, n - t + 1\right)$
}
\caption{Gittins strategy with flat Gaussian prior}\label{alg:flat}
\end{algorithm}
\end{minipage}
\end{center}

\begin{theorem}\label{thm:finite}
Let $\pi$ be the strategy given in \cref{alg:flat}. Then there exist universal constants $c, c' > 0$ such that
\eq{
R^\pi_\mu(n) &\leq c \sum_{i: \Delta_i > 0} \left(\frac{\log(n)}{\Delta_i} + \Delta_i \right) 
&
R^\pi_\mu(n) &\leq c' \left(\sqrt{dn \log(n)} + \sum_{i=1}^d \Delta_i\right)
}
\end{theorem}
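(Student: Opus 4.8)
My plan is to mimic the classical regret analysis of UCB, using \cref{thm:gittins} to sandwich the Gittins index between two upper-confidence indices, and then to concentrate essentially all of the effort on the one place where the Gittins strategy genuinely departs from UCB: controlling rounds in which the optimal arm has been under-sampled and is temporarily unlucky. I relabel the arms so that arm $1$ is optimal ($\mu_1 = \mu^*$), decompose $R^\pi_\mu(n) = \sum_{i : \Delta_i > 0} \Delta_i \E[T_i(n)]$, and fix a suboptimal arm $i$. Writing $m_t = n - t + 1$ and $\Gamma_j(t) = \gamma(\hat\mu_j(t-1), T_j(t-1)^{-1}, m_t)$ for the index of arm $j$ in round $t$, \cref{thm:gittins} together with \cref{lem:positive} supplies the deterministic sandwich $\hat\mu_j(t-1) \le \Gamma_j(t) \le \hat\mu_j(t-1) + \sqrt{c_0 \logp(n)/T_j(t-1)}$ (the upper bound up to an $O(1)$ slack from the last couple of rounds, where $\logp^{3/2}(m)$ is small), together with the sharper lower bound $\Gamma_j(t) \ge \hat\mu_j(t-1) + \sqrt{(2/T_j(t-1))\logp(c \min\{ m_t/\log_+^{3/2}(m_t),\, (m_t/T_j(t-1))/\log_+^{1/2}(m_t/T_j(t-1))\})}$. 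The upper bound is a UCB index with horizon-independent confidence; the lower bound behaves like a shrinking-horizon UCB whose exploration bonus is largest exactly when an arm is under-sampled, and this is the property that makes the strategy self-correcting.

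For the main term I argue as for UCB. Since $\{I_t = i\} \subseteq \{\Gamma_i(t) \ge \Gamma_1(t)\}$ and $\Gamma_1(t) \ge \hat\mu_1(t-1)$, on the event $\{I_t = i\} \cap \{\hat\mu_1(t-1) \ge \mu^* - \Delta_i/4\} \cap \{\hat\mu_i(t-1) \le \mu_i + \Delta_i/4\}$ the upper bound on $\Gamma_i(t)$ forces $T_i(t-1) \le c_1 \logp(n)/\Delta_i^2$, so there are at most $c_1 \logp(n)/\Delta_i^2 + 1$ such rounds. Rounds with $I_t = i$ and $\hat\mu_i(t-1) > \mu_i + \Delta_i/4$ occur only at pulls of arm $i$, so re-indexing by arm $i$'s pull count bounds their number by $\sum_{s \ge 1} \ind{\hat\mu_{i,s} > \mu_i + \Delta_i/4}$, whose expectation is $O(1/\Delta_i^2)$ by a sub-Gaussian tail bound. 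What remains, and what I expect to be the main obstacle, is the expected number of rounds with $I_t = i$ and $\hat\mu_1(t-1) < \mu^* - \Delta_i/4$: here the optimal arm looks bad and a suboptimal arm is chosen instead, and because $I_t \ne 1$ such rounds can occur in long stretches between consecutive pulls of arm $1$, so the pull-indexing trick does not apply directly.

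To control this term I would use the lower bound of \cref{thm:gittins} in a block argument. Partition time into blocks on which $T_1(t-1)$ equals a constant $s_1$; on such a block $\hat\mu_1(t-1) = \hat\mu_{1,s_1}$ is frozen and $\Gamma_1(t) = \hat\mu_{1,s_1} + \sqrt{(2/s_1)\logp\beta_1(t)}$ with $\beta_1(t)$ non-increasing in $t$ (since $m_t$ decreases). A block with $\hat\mu_{1,s_1} \ge \mu^* - \Delta_i/8$ contributes nothing; a block with $\hat\mu_{1,s_1}$ substantially lower contributes its length, but since the bonus only shrinks over the block, such a block can be long only if its bonus already at the first round is too small to lift $\Gamma_1(t)$ above the indices of the by-then well-sampled suboptimal arms, which by the lower bound on $\beta_1$ requires the remaining horizon $m_t$ to have fallen to about $\exp(s_1 \xi^2/2)$, where $\xi \gtrsim \Delta_{\min}$ is the size of arm $1$'s downward deviation. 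I would then observe: (i) at most one block is terminal (ended only by the horizon), and all non-terminal low blocks together contribute $O(1/\Delta_{\min}^2)$ in expectation by a tail bound on $\hat\mu_{1,s_1}$; (ii) on the event that the terminal block is low, its length is at most $\exp(s_1 \xi^2/2)$ while its probability is $\lesssim \exp(-s_1(\Delta_{\min} + \xi)^2/2)$, so integrating over $\xi$ and summing over $s_1$ gives a bound free of $n$; (iii) the rounds in which some suboptimal arm is not yet well sampled form a burn-in of at most $\sum_{j : \Delta_j > 0} O(\logp(n)/\Delta_j^2)$ rounds in total, which merge into the main term. The delicate part is interleaving these cases — in particular the coupling inside $\beta_1$ between the deviation $\xi$, the count $s_1$, and the horizon $m_t$. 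Assembling the pieces gives $\E[T_i(n)] \le c' \logp(n)/\Delta_i^2 + c''$, hence the first inequality after summing over $i$ and adding the $\sum_i \Delta_i$ cost of the $d$ initial rounds.

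The second inequality is then routine: split the suboptimal arms at $\Delta_0 = \sqrt{d\logp(n)/n}$, bound $\sum_{0 < \Delta_i \le \Delta_0} \Delta_i \E[T_i(n)] \le \Delta_0 n$, and bound $\sum_{\Delta_i > \Delta_0} \Delta_i \E[T_i(n)] \le c(d\logp(n)/\Delta_0 + \sum_i \Delta_i)$ using the first inequality; with this choice of $\Delta_0$ both contributions are $O(\sqrt{dn\logp(n)} + \sum_i \Delta_i)$.
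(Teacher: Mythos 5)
Your skeleton is sound and your diagnosis of the crux is exactly right: the only place the Gittins strategy departs from UCB is when the optimal arm is under-sampled and unlucky, and because the exploration bonus $\sqrt{(2/T_1)\log\beta_1(t)}$ \emph{shrinks} with the remaining horizon, an unlucky optimal arm does not self-correct the way it does under UCB. The easy parts of your decomposition (the $T_i(t-1)\le c_1\logp(n)/\Delta_i^2$ count, the pull-indexed anomaly rounds, the burn-in rounds read as ``rounds where the \emph{pulled} arm is under-sampled'', and the second displayed bound via the split at $\Delta_0=\sqrt{d\logp(n)/n}$) are all fine. The gap is in the block argument for the stuck term, in two places. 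First, your claim (i) that non-terminal low blocks contribute $O(1/\Delta_{\min}^2)$ rounds bounds the \emph{number} of such blocks (via $\sum_{s_1}\P{\hat\mu_{1,s_1}\text{ low}}$), not the number of \emph{rounds} they contain; a non-terminal block ends only when arm $1$ is next pulled, which can happen because a competitor's index drifted down, so such a block is subject to the same $\exp(s_1\xi'^2/2)$-type length bound as the terminal one and nothing shorter. Second, and more seriously, the length-times-probability integral in (ii) only gives $O(1/\Delta^2)$ if the length bound is literally $e^{s_1\xi'^2/2}$; but \cref{thm:gittins} gives $\log\beta_1(t)\ge\log\bigl(c\min\set{m_t/\log_+^{3/2}(m_t),\,(m_t/s_1)/\log_+^{1/2}(m_t/s_1)}\bigr)$, so ``stuck'' only forces $m_t\lesssim s_1 e^{s_1\xi'^2/2}$ (or $e^{s_1\xi'^2/2}(s_1\xi'^2)^{3/2}$ on the other branch). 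Carrying the extra $s_1$ (or $(s_1\xi'^2)^{3/2}$) through the Gaussian integral over $\xi$ and the union bound over $s_1$ inflates the expected stuck-round count to order $1/\Delta^4$ rather than $\logp(n)/\Delta^2$, i.e.\ regret $\min\set{n\Delta,\,1/\Delta^3}$, which exceeds $c\log(n)/\Delta$ for $\Delta$ around $n^{-1/4}$. The root cause is that a round-by-round (or level-by-level in $s_1$) union bound is too lossy here.

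The paper resolves this with two devices you do not have. It defines $Z=\mu_1-\min_{t\le n/2}\gamma_1(t)$ and proves the \emph{maximal} inequality $\P{Z\ge\Delta}\le c(\log(n)+\logp(n\Delta^2))/(n\Delta^2)$ (\cref{lem:Z}, via the peeling argument of \cref{lem:conc} applied to \cref{lem:maximal}), so the entire stuck contribution in the first half is charged as $n\,\E\bigl[Z\ind{Z\ge\Delta_{\min}/2}\bigr]=O(\log(n)/\Delta_{\min})$ in one shot --- peeling over geometric blocks of $s_1$ is precisely what avoids paying the per-$s_1$ union bound, and it is also where the exponents $\tfrac32$ and $\tfrac12$ in \cref{thm:gittins} earn their keep. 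For the second half, where $m_t$ is small and the bonus can vanish entirely, the paper does not use the index lower bound at all: it shows via the sets $A,B$ that some near-optimal arm has $T_i(n/2)\ge n/(4|A|)$, so its \emph{empirical mean alone} (using only $\gamma_i(t)\ge\hat\mu_i(t-1)$) dominates every well-sampled arm with $\Delta_j\ge 4\Delta_i$ for the rest of the game; the residual arms are absorbed by the set $C$ and the symmetry bound $\E[T_i(n)]\le n/i$ (\cref{lem:symmetry}). You would need to import at least the peeling/maximal-inequality step, and some substitute for the first-half/second-half argument, to close your proof.
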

While the problem dependent bound is asymptotically optimal up to constant factors, the problem independent bound is sub-optimal by a factor of $\sqrt{\log(n)}$
with both the MOSS algorithm by \cite{AB09} and OCUCB by \cite{Lat15-ucb} matching the lower bound of $\Omega(\sqrt{dn})$ given by \cite{ACFS95}.
The main difficulty in proving Theorem \ref{thm:finite} comes from the fact that the Gittins index is smaller than the upper confidence
bound used by UCB. This is especially true as $t$ approaches the horizon when the Gittins index tends towards the empirical mean, while the UCB index
actually grows. The solution is (a) to use very refined concentration guarantees and (b) to show that the Gittins strategy chooses near-optimal arms
sufficiently often for $t \leq n/2$, ensuring that the empirical mean of these arms is large enough that bad arms are not chosen too often 
in the second half.
Before the proof we require some additional definitions and lemmas.
Assume for the remainder of this section that 
$\mu_1 \geq \mu_2 \geq \ldots \geq \mu_d$, which
is non-restrictive, since if this is not the case, then the arms can simply be re-ordered.
Let $F$ be the event that there exists a $t \in \set{1,\ldots,n}$ and $i \in \set{1,\ldots, d}$ such that
\eq{
\left|\hat \mu_i(t) - \mu_i\right| \geq \sqrt{\frac{2}{T_i(t)} \log (dn^2)}\,.
}
The index of the $i$th arm in round $t$ is abbreviated to 
$\gamma_i(t) = \gamma(\hat \mu_i(t-1), T_i(t-1)^{-1}, n - t + 1)$,
which means that for rounds $t > d$ \cref{alg:flat} is choosing $I_t = \argmax_i \gamma_i(t)$.
Define random variable 
$Z = \mu_1 - \min_{1 \leq t \leq n/2} \gamma_1(t)$,
which measures how far below $\mu_1$ the index of the first arm may fall 
some time in the first $n/2$ rounds.
For each $i \in \set{1,\ldots, d}$ define 
\eqn{
\label{def:u}
u_i = \ceil{\frac{32}{\Delta_i^2} \log(2dn^2)}\,.
}
Now we are ready for the lemmas. First is the key concentration inequality that controls the probability
that the Gittins index of the optimal arm drops far below the true mean.
It is in the proof of \cref{lem:conc} that the odd-looking powers of the logarithmic terms in the bounds on the Gittins index are justified.
Any higher power would lead to super-logarithmic regret, while a lower power could potentially lead to a sub-optimal trade-off between
failure probability and exploration. 

\begin{lemma}\label{lem:conc}
Let $c > 0$ be a universal constant, and
$\Delta > 0$ and $Y_1,Y_2,\ldots$ be a sequence of i.i.d.\ random variables with $Y_t \sim \mathcal N(0, 1)$ and $S_t = \sum_{s=1}^t Y_s$. 
Then there exist universal constants $c' > 0$ and $n_0 \in \N$ such that whenever $n \geq n_0$.
\eq{
&\P{\exists t : S_t \geq t\Delta + \max\set{0,\,\, \sqrt{2t \log\left(\frac{cn}{2\log_{+}^{\frac{3}{2}}(n/2)}\right)}}} 
\leq c' \cdot \frac{\log(n)}{n\Delta^2} \\
&\P{\exists t : S_t \geq t\Delta + \max\set{0,\,\, \sqrt{2t \log\left(\frac{cn}{2t \log_{+}^{\frac{1}{2}}\!\!\left(\frac{n}{2t}\right)}\right)}}} 
\leq c' \cdot \frac{\logp(n\Delta^2)}{n\Delta^2}\,.
}
\end{lemma}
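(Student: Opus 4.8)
My plan is to reduce both bounds to the probability that the driftless Gaussian walk $S_t=\sum_{s\le t}Y_s$ ever crosses a boundary of the form $t\Delta+\sqrt{2t\,\ell_t}$, and to control this crossing probability by a union bound where that is enough and by a peeling argument over geometrically spaced times where it is not.

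\emph{First bound.} Put $L=\log\!\big(\tfrac{cn}{2\log_+^{3/2}(n/2)}\big)$; for $n\ge n_0$ this is positive and at least $\tfrac12\log n$, so the $\max\{0,\cdot\}$ is inactive. A plain union bound works: $\P{S_t\ge a}\le\tfrac12 e^{-a^2/(2t)}$ with $a=t\Delta+\sqrt{2tL}$ and expansion of the square give $\P{\exists t:\cdots}\le\tfrac12 e^{-L}\sum_{t\ge1}e^{-t\Delta^2/2-\Delta\sqrt{2tL}}$; the summand is decreasing in $t$, so the sum is at most $\int_0^\infty e^{-\Delta\sqrt{2tL}}\,dt=1/(\Delta^2L)$ via the substitution $u=\sqrt t$. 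Substituting $e^{-L}=\tfrac{2\log_+^{3/2}(n/2)}{cn}$ and the crude estimates $L\ge\tfrac12\log n$, $\log_+(n/2)\le\log n$ collapses this to $O(\sqrt{\log n}/(n\Delta^2))$, comfortably inside the claim.

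\emph{Second bound.} A union bound cannot work here --- when $n\Delta^2$ is a moderate constant there are of order $n$ times near the critical scale, each contributing a constant --- so a maximal inequality is needed. We may assume $n\Delta^2$ exceeds the target constant $c'$, as otherwise the asserted bound is at least $1$. Write $\ell_t=\log\!\big(\tfrac{cn}{2t\log_+^{1/2}(n/2t)}\big)$ and let $T_1$ be the last time with $\ell_t\ge1$; since $\ell_t$ is decreasing one checks $T_1=\Theta(n)$, and $\sqrt{2t\ell_t}$ is increasing on $[1,T_1]$. For $t>T_1$, dropping the non-negative bonus leaves the boundary $t\Delta$, and a line-crossing bound built from the martingale $e^{2\Delta S_t-2\Delta^2t}$ (plus a Gaussian control of $S_{T_1}$) gives $\P{\exists t>T_1:S_t\ge t\Delta}\le e^{-T_1\Delta^2/2}=e^{-\Theta(n\Delta^2)}$, which is dominated by $\log_+(n\Delta^2)/(n\Delta^2)$. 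For $t\le T_1$ I peel into blocks $[t_k,t_{k+1})$ with slowly widening ratio $\eta_k=1+1/\ell_{t_k}$; since the boundary increases across each block, Doob's maximal inequality for $e^{\lambda S_t-\lambda^2t/2}$ with $\lambda$ tuned to the boundary at $t_k$ gives a per-block bound of order $e^{-\ell_{t_k}}e^{-\Delta\sqrt{2t_k\ell_{t_k}}-t_k\Delta^2/2}=\tfrac{t_k\log_+^{1/2}(n/2t_k)}{cn}\,e^{-\Delta\sqrt{2t_k\ell_{t_k}}-t_k\Delta^2/2}$ (the blocks with $t_k$ much larger than $1/\Delta^2$ contribute a further, negligible, geometrically small amount). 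Summing over $k$, using that the block spacing $\log\eta_k$ is of order $1/\ell_{t_k}$, i.e.\ of order $1/\log(n/2t)$, to pass to an integral in $t$ and then substituting $s=n/(2t)$, yields $O\!\big(\sqrt{\log(n\Delta^2)}/(n\Delta^2)\big)$, again inside the claim.

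\emph{Main obstacle.} The crux is this last block-sum. Because the exploration bonus grows only like $\sqrt t$ over the whole range $t$ up to order $n$, any fixed block ratio costs a square root in the exponent --- the useless rate $1/\sqrt{n\Delta^2}$ --- which forces $\eta_k\to1$; but then the number of blocks at each ``level'' $\ell=\ell_t$ grows like $\ell$, and it is precisely the $\log_+^{1/2}(n/2t)$ correction inside $\ell_t$, inherited from the Gittins-index estimate of \cref{thm:gittins}, that supplies the slack for this $\ell$ block-count together with the polynomial corrections to the Doob bound to sum to the claimed $\log_+(n\Delta^2)/(n\Delta^2)$ rate rather than a higher power of the logarithm. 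Getting this balance of competing logarithmic factors exactly right, along with the transition region near $T_1$ and the blocks past $1/\Delta^2$, is the bulk of the work.
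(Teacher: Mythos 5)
Your proposal is correct and follows essentially the same route as the paper: a union bound over individual times with the Gaussian tail (the paper routes this through the reflection-principle bound of \cref{lem:maximal}) for the first inequality, and for the second a peeling argument over a geometric grid with near-critical ratio together with a separate linear-boundary bound for the tail region $t \gtrsim n$. The only substantive difference is that you tune the block ratio locally to $1+1/\ell_{t_k}$ and retain the cross term $e^{-\Delta\sqrt{2t\ell_t}}$, where the paper uses a single global ratio $1+1/\logp(n\Delta^2)$ and sums via \cref{lem:tech1}; this is a variation within the same method (and yields a marginally sharper $\sqrt{\logp(n\Delta^2)}$ in place of $\logp(n\Delta^2)$).
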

The proof of Lemma \ref{lem:conc} follows by applying a peeling device and may be found in \cref{sec:lem:conc}.
Note that similar results exist in the literature. For example, by \cite{AB09,PR13} and presumably others.
What is unique here is that the improved concentration guarantees for Gaussians are also being exploited.

\begin{assumption}
Assume that $n \geq n_0$, which is non-restrictive since $n_0$ is a universal constant and \cref{thm:finite} holds trivially
with $c = n_0$ for $n \leq n_0$.
\end{assumption}

\begin{lemma}\label{lem:Z}
There exists a universal constant $c > 0$ such that
for all $\Delta > 0$ we have 
\eq{
\P{Z \geq \Delta} \leq c\cdot \frac{\log(n) + \logp(n\Delta^2)}{n\Delta^2}\,.
}
\end{lemma}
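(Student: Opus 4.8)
The plan is to unfold the definition of $Z$, replace the Gittins index of the optimal arm by the lower bound from \cref{thm:gittins} (using \cref{lem:positive} to make sure the exponent is non-negative), rewrite the resulting event as a statement about the random walk of the centred rewards of arm $1$, and then close with \cref{lem:conc}. The only genuine content is arranging the algebra so that the two thresholds produced by the two branches of the minimum in \cref{thm:gittins} match, term for term, the two thresholds appearing in \cref{lem:conc}.

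First I would fix $\Delta > 0$ and look at any round $t$ in the range $2 \le t \le n/2$ over which the minimum defining $Z$ is taken (arm $1$ has then been pulled at least once, so $\gamma_1(t)$ is defined). Write $s = T_1(t-1) \ge 1$ and $m = n-t+1 \ge n/2$. Applying \cref{thm:gittins} with $\nu = \hat\mu_1(t-1)$ and $\sigma^2 = 1/s$, and using \cref{lem:positive},
\eq{
\gamma_1(t) \ge \hat\mu_1(t-1) + \sqrt{\frac{2}{s}\, \max\set{0,\ \log\left(c\min\set{\frac{m}{\log_{+}^{\frac{3}{2}}(m)},\ \frac{m/s}{\log_{+}^{\frac{1}{2}}(m/s)}}\right)}}\,.
}
Since $x \mapsto x/\log_{+}^{\frac{3}{2}}(x)$ is non-decreasing for $x \ge e^{3/2}$ and $x \mapsto x/\log_{+}^{\frac{1}{2}}(x)$ is non-decreasing on $[1,\infty)$, and since $m \ge n/2$ with $n \ge n_0$ a large universal constant, I may replace $m$ by $n/2$ inside the minimum; this decouples the bound from $t$ except through $s$. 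Using $\sqrt{\max\set{0,\min\set{a,b}}} = \min\set{\sqrt{\max\set{0,a}},\sqrt{\max\set{0,b}}}$ to push the square root through, the event $\set{Z \ge \Delta}$ — that $\gamma_1(t) \le \mu_1 - \Delta$ for some such $t$ — forces, for some $s \ge 1$,
\eq{
\mu_1 - \hat\mu_1(t-1) \ge \Delta + \min\set{\sqrt{\frac{2}{s}\log\!\left(\frac{cn}{2\log_{+}^{\frac{3}{2}}(n/2)}\right)},\ \sqrt{\frac{2}{s}\log\!\left(\frac{cn}{2s\,\log_{+}^{\frac{1}{2}}(n/(2s))}\right)}}\,,
}
where each square root is read as $0$ when its logarithm is negative (exactly the $\max\set{0,\cdot}$ of \cref{lem:conc}).

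Next I would let $X_1^{(1)}, X_1^{(2)},\ldots$ be the rewards of arm $1$ in the order observed (i.i.d.\ $\mathcal N(\mu_1,1)$, independent of the strategy's decisions) and set $S_s = \sum_{j=1}^s (\mu_1 - X_1^{(j)})$, a random walk with i.i.d.\ $\mathcal N(0,1)$ increments, so that $\mu_1 - \hat\mu_1(t-1) = S_{T_1(t-1)}/T_1(t-1)$. Multiplying the last display by $s$ gives $S_s \ge s\Delta + \min\set{\cdot,\cdot}$, and since exceeding the minimum of two non-negative thresholds entails exceeding at least one of them, a union bound yields
\eq{
\P{Z \ge \Delta} &\le \P{\exists s \ge 1 : S_s \ge s\Delta + \max\set{0,\ \sqrt{2s\log\left(\frac{cn}{2\log_{+}^{\frac{3}{2}}(n/2)}\right)}}} \\
&\qquad + \P{\exists s \ge 1 : S_s \ge s\Delta + \max\set{0,\ \sqrt{2s\log\left(\frac{cn}{2s\,\log_{+}^{\frac{1}{2}}(n/(2s))}\right)}}}\,.
}
These are precisely the left-hand sides of the two inequalities of \cref{lem:conc} (which is the whole reason for the two logarithmic powers in \cref{thm:gittins}), so the first is at most $c'\log(n)/(n\Delta^2)$ and the second at most $c'\logp(n\Delta^2)/(n\Delta^2)$; adding them gives the claim with $c = c'$. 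The standing assumption $n \ge n_0$ makes \cref{lem:conc} applicable.

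I expect no deep obstacle: the substantive exploration/failure-probability trade-off is already packaged inside \cref{thm:gittins} and \cref{lem:conc}. The one place needing care is the second paragraph — checking the two monotonicity facts used to replace $m$ by $n/2$, and, more delicately, tracking the $\max\set{0,\cdot}$ correctly when $T_1(t-1)$ is a constant fraction of $n$ (so $m/T_1(t-1)$ is close to $1$ and the second logarithm can be small or negative) so that the events we end up with are literally, not merely approximately, those controlled by \cref{lem:conc}.
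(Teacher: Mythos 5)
Your proposal is correct and follows exactly the route the paper intends: the paper's own proof of \cref{lem:Z} is the one-line instruction ``apply \cref{thm:gittins} and \cref{lem:conc} and the fact that $m = n-t+1 \geq n/2$ for $t \leq n/2$,'' and your write-up is precisely that argument with the details (the reindexing by $T_1(t-1)$, the monotonicity step replacing $m$ by $n/2$, and the splitting of the minimum before the union bound) filled in carefully.
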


\begin{proof}
Apply \cref{thm:gittins} and \cref{lem:conc} and the fact that $m = n - t+1 \geq n/2$ for $t \leq n/2$.
\end{proof}

\begin{lemma}\label{lem:symmetry}
$\displaystyle \E[T_i(n)] \leq n / i$.
\end{lemma}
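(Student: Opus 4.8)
The plan is to exploit a symmetry argument over the first $i$ arms. Because arms $1,\dots,i$ all satisfy $\mu_j \geq \mu_i$ (recall we have assumed $\mu_1 \geq \dots \geq \mu_d$), and more importantly because the Gittins index strategy is a deterministic, symmetric function of the posterior statistics, the number of pulls received by arm $i$ can be bounded by a relabelling trick. Specifically, I would fix the realisation of all reward noise and consider the $i$ permutations $\pi^{(1)},\dots,\pi^{(i)}$ that cyclically permute the roles of arms $1,\dots,i$ while leaving arms $i+1,\dots,d$ fixed. Since \cref{alg:flat} chooses $I_t = \argmax_j \gamma_j(t)$ using only the empirical means, pull counts, and the common index function $\gamma(\cdot,\cdot,\cdot)$ — none of which depend on the arm labels — running the algorithm on a permuted instance simply permutes the sequence of chosen arms accordingly.

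The key step is then: for any fixed round $t$, among the $i$ relabelled copies of the run, arm $i$ is the chosen arm in at most one of them at round $t$ — because in each copy the algorithm picks the (single) arm whose index is largest, and the index values attached to the positions occupied by the original arms $1,\dots,i$ are just a permutation of the same multiset. Summing over $t \in \{1,\dots,n\}$ and averaging over the $i$ copies, the average number of times arm $i$ is pulled is at most $n/i$. Since the noise distribution is exchangeable across arms $1,\dots,i$ only if their means were equal — which is not assumed — I would instead phrase this as a coupling/domination argument: pass to the auxiliary instance where $\mu_1 = \dots = \mu_i$ (all set to $\mu_i$), argue that this only increases $\E[T_i(n)]$ because lowering the means of the better arms can only make arm $i$ look comparatively more attractive (a monotonicity property of the index in the observed rewards, via \cref{lem:shift}), and then apply the exact symmetry argument on the auxiliary instance where all $i$ arms are genuinely exchangeable.

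The main obstacle I anticipate is making the monotonicity/domination step rigorous: one must show that coupling the noise and decreasing $\mu_1,\dots,\mu_{i-1}$ down to $\mu_i$ pointwise increases (or at least does not decrease) the expected pull count of arm $i$. This requires tracking how a pointwise decrease in the reward stream of a competing arm propagates through the $\argmax$ of the Gittins indices over all $n$ rounds — the indices are monotone in each arm's empirical mean by \cref{lem:shift} and \cref{lem:positive}, but the feedback loop (fewer pulls of a down-shifted arm changes its future index non-monotonically through $T_j$) needs care. If that step proves delicate, the cleaner route is the direct permutation argument: condition on the entire table of potential rewards $\{X_{j,s}\}$ for arms $1,\dots,i$, observe that the law of this table is invariant under simultaneous permutation of the $i$ arms \emph{only} when their means agree, so one genuinely does need the reduction to equal means first, or alternatively one invokes that $\E[T_i(n)] \leq \E[T_j(n)]$ for $j < i$ by a stochastic-dominance comparison and then uses $\sum_{j=1}^i \E[T_j(n)] \leq n$.
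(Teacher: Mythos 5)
Your proposal takes essentially the same route as the paper, whose proof is itself only a one-sentence appeal to the ordering $\mu_1 \geq \dots \geq \mu_d$, the symmetry of the algorithm and of the Gaussian noise, and the shift-invariance of the index (\cref{lem:shift}); your closing reduction --- establish $\E[T_i(n)] \leq \E[T_j(n)]$ for $j < i$ by a symmetry-plus-domination argument and then sum $\sum_{j=1}^i \E[T_j(n)] \leq n$ --- is exactly the intended argument. The monotonicity/coupling step you flag as delicate is real, but the paper elides it just as you do, so there is no gap relative to the paper's own level of detail.
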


\begin{proof}
The result follows from the assumption that $\mu_1 \geq\ldots \geq \mu_d$, the definition of the algorithm, the symmetry of the Gaussian density and 
because the exploration bonus due to Gittins index is 
shift invariant (Lemma \ref{lem:shift}).
\end{proof}

\begin{lemma}\label{lem:F}
$\displaystyle \P{F} \leq 1/n$. 
\end{lemma}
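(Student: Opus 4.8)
The plan is to prove \cref{lem:F} by a union bound over the arms and over the (random) number of pulls, which reduces everything to a single Gaussian tail estimate. The standard device is to pass from the random round $t$ to the number of samples: couple the rewards so that there is a doubly-indexed i.i.d.\ array $\set{Y^{(i)}_r : i \leq d,\, r \geq 1}$ with $Y^{(i)}_r \sim \mathcal N(\mu_i,1)$, where the learner observes $Y^{(i)}_r$ the $r$th time it plays arm $i$, and set $\bar\mu_{i,s} = \frac{1}{s}\sum_{r=1}^s Y^{(i)}_r$. Then $\hat\mu_i(t) = \bar\mu_{i,T_i(t)}$ whenever $T_i(t) \geq 1$, and for each \emph{fixed} $s$ the quantity $\bar\mu_{i,s}$ is genuinely an average of $s$ i.i.d.\ $\mathcal N(\mu_i,1)$ variables, since the event below depends only on the array and not on the algorithm. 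Because $T_i(t) \in \set{1,\ldots,n}$ on $F$, we obtain the inclusion
\[
F \subseteq \bigcup_{i=1}^d \bigcup_{s=1}^n A_{i,s}\,, \qquad A_{i,s} = \set{\,\bigl|\bar\mu_{i,s} - \mu_i\bigr| \geq \sqrt{\tfrac{2}{s}\log(dn^2)}\,}\,.
\]
No maximal inequality or peeling is needed here: the confidence width $\sqrt{(2/s)\log(dn^2)}$ is already defined per fixed $s$, so one can simply union over all $n$ values of $s$, paying a factor of $n$ that is exactly absorbed by the $n^2$ inside the logarithm.

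Next I would bound each term. Since $\sqrt{s}\,(\bar\mu_{i,s} - \mu_i) \sim \mathcal N(0,1)$, we have $\P{A_{i,s}} = \P{|\mathcal N(0,1)| \geq \sqrt{2\log(dn^2)}}$, and the Gaussian tail bound $\P{|\mathcal N(0,1)| \geq x} \leq e^{-x^2/2}$ (see \cref{lem:gaussian}) gives $\P{A_{i,s}} \leq e^{-\log(dn^2)} = \frac{1}{dn^2}$. A union bound over the $d$ arms and the $n$ possible sample counts then yields
\[
\P{F} \leq \sum_{i=1}^d \sum_{s=1}^n \P{A_{i,s}} \leq dn \cdot \frac{1}{dn^2} = \frac{1}{n}\,,
\]
which is the claim.

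There is no real difficulty in this argument; the only points that need a little care are (i) the reduction from the random round $t$ to a deterministic sample count $s$, so that $\bar\mu_{i,s}$ is genuinely a sum of $s$ independent Gaussians with the intended distribution, and (ii) using the sharp constant $1$ rather than the cruder $2$ in the two-sided tail inequality $\P{|\mathcal N(0,1)| \geq x} \leq e^{-x^2/2}$ — this is precisely what makes the final bound $1/n$ instead of $2/n$.
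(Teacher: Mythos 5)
Your proof is correct and is essentially the paper's own argument: a union bound over the $d$ arms and the $n$ possible sample counts, after reducing the random round $t$ to a fixed number of pulls, followed by a single Gaussian tail estimate. The only point worth flagging is that \cref{lem:gaussian} as stated is one-sided, so your two-sided bound $\P{|\mathcal N(0,1)|\geq x}\leq e^{-x^2/2}$ relies on the sharp constant $1$ (which is true but requires a small extra check), whereas the paper instead takes the confidence width to be $\sqrt{2\log(2dn^2)/u}$ — consistent with the definition of $u_i$ and of $F$ in \cref{sec:thm:asymptotic} — so that the cruder factor-$2$ two-sided tail bound already yields $1/(dn^2)$ per term.
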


\begin{proof}
For fixed $T_i(t) = u$ apply the standard tail inequalities for the Gaussian (Lemma \ref{lem:gaussian}) to bound
$\mathbb{P}\{|\hat \mu_i(t) - \mu_i| \geq \sqrt{2 \log(2dn^2)/u}\} \leq 1/(dn^2)$.
The result is completed by applying the union bound over all arms and values of $u \in \set{1,\ldots,n}$.
\end{proof}

\begin{lemma}\label{lem:half}
If $F$ does not hold and $i$ is an arm such that $Z < \Delta_i / 2$, then $T_i(n/2) \leq u_i$.
\end{lemma}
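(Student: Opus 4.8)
The plan is a short proof by contradiction. At the round $t$ where the pull-count of arm $i$ would first exceed $u_i$, I would squeeze the Gittins index $\gamma_i(t)$ between two incompatible bounds: from below using the hypothesis $Z<\Delta_i/2$ on the optimal arm, and from above by combining the \emph{upper} bound on the Gittins index (\cref{thm:gittins}) with the concentration event $F^c$. First dispose of the trivial case: if $\Delta_i = 0$ then $u_i = \infty$ and $T_i(n/2)\le u_i$ holds vacuously, so assume $\Delta_i>0$.

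Next I would set up the contradiction. Suppose $F$ does not hold, $Z<\Delta_i/2$, yet $T_i(n/2)>u_i$. Since $T_i(d)=1\le u_i$ and $T_i$ increases by at most one per round, there is a round $t$ with $d<t\le n/2$ such that $I_t=i$ and $T_i(t-1)\ge u_i$. For the lower bound: because $t\le n/2$ and $Z<\Delta_i/2$, the definition of $Z=\mu_1-\min_{1\le s\le n/2}\gamma_1(s)$ gives $\gamma_1(t)\ge \mu_1-Z>\mu_1-\Delta_i/2=\mu_i+\Delta_i/2$, and since $I_t=\argmax_j\gamma_j(t)=i$ we get $\gamma_i(t)\ge\gamma_1(t)>\mu_i+\Delta_i/2$.

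For the upper bound, write $m=n-t+1\ge n/2$. By \cref{thm:gittins} the Gittins index equals $\gamma_i(t)=\hat\mu_i(t-1)+\sqrt{2\,T_i(t-1)^{-1}\log\beta}$ for some $\beta\le m/\log^{3/2}(m)\le n$ (valid since $n\ge n_0$), so its exploration bonus is dominated by the UCB-style bonus $\sqrt{2\,T_i(t-1)^{-1}\log n}$. Since $F$ fails and $T_i(t-1)\ge1$, also $\hat\mu_i(t-1)<\mu_i+\sqrt{2\,T_i(t-1)^{-1}\log(dn^2)}$. Adding the two estimates, using $\log n\le\log(dn^2)$ and $T_i(t-1)\ge u_i\ge 32\Delta_i^{-2}\log(2dn^2)$,
\[
\gamma_i(t) < \mu_i + \sqrt{\frac{2\log(dn^2)}{T_i(t-1)}} + \sqrt{\frac{2\log n}{T_i(t-1)}} \le \mu_i + 2\sqrt{\frac{2\log(2dn^2)}{u_i}} \le \mu_i + \frac{\Delta_i}{2}\,.
\]
This contradicts the lower bound, so no round $t\le n/2$ can have both $I_t=i$ and $T_i(t-1)\ge u_i$; a one-line induction on $t$ with base case $T_i(d)=1$ then yields $T_i(t)\le u_i$ for all $d\le t\le n/2$, in particular $T_i(n/2)\le u_i$.

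The main obstacle is essentially only bookkeeping. The conceptual content is that \cref{thm:gittins}'s upper bound lets one replace the Gittins bonus by a plain $\sqrt{2\,T^{-1}\log n}$ UCB bonus, which together with $F^c$ pins $\gamma_i$ at or below $\mu_i+\Delta_i/2$ as soon as $T_i\ge u_i$ — precisely the level at which \cref{lem:Z} controls the optimal arm's index. The constant $32$ in the definition of $u_i$ is what makes the final inequality just close: combining the two $\sqrt{2/\cdot}$ terms costs a factor $2$, and $(2\sqrt2)^2=8$ against the slack $(\Delta_i/2)^2$ forces $u_i\gtrsim 32\,\Delta_i^{-2}\log(\cdot)$.
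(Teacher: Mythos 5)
Your proof is correct and follows essentially the same route as the paper's: squeeze $\gamma_i(t)$ between the lower bound $\gamma_i(t)\ge\gamma_1(t)\ge\mu_1-Z>\mu_i+\Delta_i/2$ and the upper bound $\gamma_i(t)\le\hat\mu_i(t-1)+\sqrt{2\log(n)/u_i}\le\mu_i+\Delta_i/2$ obtained from \cref{thm:gittins} and $F^c$, with the constant $32$ in $u_i$ making the arithmetic close. The only differences are cosmetic (explicit contradiction framing and handling of $\Delta_i=0$ versus the paper's direct "therefore $I_t\neq i$").
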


\begin{proof}
Let $t \leq n/2$ be some round such that $T_i(t - 1) = u_i$. Then by \cref{thm:gittins} 
and the definitions of $F$ and $u_i$ we have
\eq{
\gamma_i(t) 
&\leq \hat \mu_i(t) + \sqrt{\frac{2}{u_i} \log(n)}  
\leq \mu_i + \sqrt{\frac{2}{u_i} \log(2dn^2)} + \sqrt{\frac{2}{u_i} \log(n)} \\
&\leq \mu_i + \frac{\Delta_i}{2} 
= \mu_1 - \frac{\Delta_i}{2} 
< \mu_1 - Z 
\leq \gamma_1(t)\,.
}
Therefore $I_t \neq i$ and so $T_i(n/2) \leq u_i$.
\end{proof}

\begin{proofof}{\cref{thm:finite}}
The regret may be re-written as
\eqn{
\label{eq:decomp}
R^\pi_\mu(n) = \sum_{i=1}^d \Delta_i \E[T_i(n)]\,.
}
To begin, let us naively bound the regret for nearly-optimal arms. Define a set $C \subset \set{1,\ldots, d}$ by
\eq{
C = \set{i : 0 < \Delta_i \leq 4\sqrt{\frac{8i}{n} \log(2dn^2)}}
= \set{i : 0 < \Delta_i \leq \frac{128i}{n\Delta_i} \log(2dn^2)}\,.
}
Therefore by Lemma \ref{lem:symmetry} we have
\eqn{
\label{eq:C}
\sum_{i \in C} \Delta_i \E[T_i(n)] \leq \sum_{i \in C} \frac{128}{\Delta_i} \log(2dn^2)\,.
}
For arms not in $C$ we need to do some extra work. By Lemma \ref{lem:F} 
\eqn{
\sum_{i \notin C} \Delta_i \E[T_i(n)] 
&\leq \P{F} n \Delta_{\max} + \E\left[\ind{\neg F} \sum_{i \notin C} \Delta_i T_i(n)\right] \nonumber \\
&\leq \Delta_{\max} + \E\left[\ind{\neg F} \sum_{i \notin C} \Delta_i T_i(n)\right]\,. \label{eq:de}
}
From now on assume $F$ does not hold while bounding the second term.
By Lemma \ref{lem:half}, if $Z < \Delta_i / 2$, then $T_i(n/2) \leq u_i$.
Define disjoint (random) sets $A, B \subseteq \set{1,\ldots,d}$ by
\eq{
B = \set{i : Z < \Delta_i / 2 \text{ and } \sum_{j \geq i} u_j \leq \frac{n}{4}} \quad \text{and} \quad A = \set{1,\ldots,d} - B\,.
}
The set $A$ is non-empty because $u_1 = \infty$, which implies that $1 \in A$.
Let $i = \max A$, which satisfies either either $Z \geq \Delta_i / 2$ or $n \leq 4\sum_{j \geq i} u_j$. Therefore
\eqn{
\sum_{k \in A} \Delta_k T_k(n)
&\leq n \Delta_i 
\leq \max\set{2nZ \ind{\Delta_{\min}/2 \leq Z}, 4\Delta_i \sum_{j \geq i} u_j} \nonumber \\
&\leq 2nZ \ind{\Delta_{\min}/2 \leq Z} + 4\sum_{j : \Delta_j > 0} \Delta_j u_j\,. \label{eq:A}
}
The next step is to show that there exists an arm in $A$ that is both nearly optimal and has been chosen sufficiently
often that its empirical estimate is reasonably accurate. This arm can then be used to show that bad arms are not chosen
too often.
From the definition of $B$ and because $F$ does not hold we have
$\sum_{i \in A} T_i(n/2) 
= n/2 - \sum_{i \in B} T_i(n/2)
\geq n/2 - \sum_{i \in B} u_i 
\geq n/4$. 
Therefore there exists an $i \in A$ such that $T_i(n/2) \geq n/(4|A|)$. 
Suppose $j \notin A \cup C$ and $\Delta_j \geq 4\Delta_i$ and $T_j(t) = u_j$. Then
\eqn{
\gamma_j(t) 
&\leq \mu_j + \frac{\Delta_j}{2} 
= \mu_i - \frac{\Delta_j}{2} + \Delta_i 
\leq \mu_i - \frac{\Delta_j}{4} 
< \mu_i - \sqrt{\frac{8j}{n} \log(dn^2)} \nonumber \\ 
&\leq \mu_i - \sqrt{\frac{8|A|}{n} \log(dn^2)}  
\leq \hat \mu_i(t) 
\leq \gamma_i(t)\,. \label{eq:A2}
}
Therefore $I_t \neq j$ and so $T_j(n) \leq u_j$.
Now we consider arms with $\Delta_j < 4\Delta_i$. By the same reasoning as in \cref{eq:A} we have
\eq{
\sum_{j : \Delta_j < 4\Delta_i} \Delta_j T_j(n) \leq 4n\Delta_i \leq 8nZ \ind{\Delta_{\min}/2 \leq Z} + 16 \sum_{j : \Delta_j > 0} \Delta_j u_j\,.
}
Finally we have done enough to bound the regret due to arms not in $C$. By the above display and \cref{eq:A} and the sentence after \cref{eq:A2} we have 
\eq{
\sum_{i \notin C} \ind{\neg F} \Delta_i T_i(n)
\leq 10n Z \ind{\Delta_{\min}/2 \leq Z} + 20 \sum_{j : \Delta_j > 0} \Delta_j u_j + \sum_{j \notin C} \Delta_j u_j\,.
}
Combining this with \cref{eq:C} and the regret decomposition \cref{eq:decomp} and \cref{eq:de} we have
\eqn{
\label{eq:rnear}
R^\pi_\mu(n) \leq \Delta_{\max} + \sum_{j \in C} \frac{128}{\Delta_j} \log(dn^2) + 21 \sum_{j : \Delta_j > 0} \Delta_j u_j + 
10n\E\left[Z \ind{\Delta_{\min}/2 \leq Z}\right]\,.
}
From Lemma \ref{lem:Z} there exists a universal constant $c'' > 0$ such that
\eq{
\E Z \ind{\Delta_{\min} / 2 \leq Z} 
&\leq \int^\infty_{\Delta_{\min}/2} \P{Z \geq z} dz + \frac{\Delta_{\min}}{2} \P{Z \geq \frac{\Delta_{\min}}{2}} \\
&\leq \frac{c'' (\log(n) + \logp(n\Delta_{\min}^2))}{n\Delta_{\min}}\,.
}
Substituting into \cref{eq:rnear} and inserting the definition of $u_i$ and naively simplifying completes the proof of the problem dependent
regret bound. 
To prove the second bound in \cref{thm:finite} it suffices to note that the total regret due to arms with $\Delta_i \leq \sqrt{d/n \log(n)}$ is
at most $\sqrt{nd \log(n)}$.
\end{proofof}

\vspace{-0.75cm}
\subsubsect{Asymptotic regret for non-flat prior}
The previous results relied heavily on the choice of a flat prior.
For other (Gaussian) priors it is still possible to prove regret guarantees, but now with a dependence on the prior.
For the sake of simplicity I switch to asymptotic analysis and show that any negative effects of a poorly chosen prior
wash out for sufficiently large horizons.
Let $\nu_i$ and $\sigma_i^2$ be the prior mean and variance for arm $i$ and let $\nu_i(t)$ and $\sigma_i^2(t)$ denote the posterior 
mean and variance at the end of round $t$. A simple computation shows that
\eqn{
\nu_i(t) &= \left(\frac{\nu_i}{\sigma^2_i} + \sum_{s=1}^t  \ind{I_s = i} X_s\right) \!\!\!\Bigg/ \!\!\! \left(\frac{1}{\sigma^2_i} + T_i(t)\right) & 
\label{eq:update}
\sigma_i^2(t) &= \left(\frac{1}{\sigma^2_i} + T_i(t)\right)^{-1}\,.
}
\begin{flalign}
\text{The strategy chooses }
\label{eq:nonflat}
I_t = \argmax_i \gamma(\nu_i(t-1), \sigma_i^2(t-1), n - t + 1)\,.
&&
\end{flalign}

\begin{theorem}\label{thm:asymptotic}
Assume that $\sigma_i^2 > 0$ for all $i$.
Let $\pi_n$ be the strategy given in \cref{eq:nonflat}, then there exists a universal $c > 0$ such that for $\mu \in \R^d$, \,
$\limsup_{n\to\infty} R^{\pi_n}_\mu(n) / \log n \leq c \sum_{i : \Delta_i > 0} \Delta_i^{-1}$.
\end{theorem}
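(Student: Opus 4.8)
The plan is to transfer the proof of \cref{thm:finite} almost verbatim, treating an arbitrary non-degenerate Gaussian prior as a vanishing perturbation of the flat-prior index: the extra terms it produces will be $o(1)$ or $o(\log n)$ and hence invisible to a $\limsup$. Throughout I fix $\mu$ and the prior and let every threshold (and the horizon beyond which the asymptotics kick in) depend on them, which is legitimate since $c$ need only be universal. The starting point is the set of exact comparisons between posterior and empirical quantities that follow from \cref{eq:update}: $\sigma_i^2(t) \le 1/T_i(t)$, while $\sigma_i^2(t) \ge (T_i(t) + \sigma_i^{-2})^{-1}$ and $\nu_i(t) - \hat\mu_i(t) = (\nu_i - \hat\mu_i(t))/(1 + \sigma_i^2 T_i(t))$, so $|\nu_i(t) - \hat\mu_i(t)| \le (|\nu_i| + |\hat\mu_i(t)|)/(\sigma_i^2 T_i(t))$. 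On $\neg F$ one also has $|\hat\mu_i(t)| \le |\mu_i| + \sqrt{2\log(dn^2)/T_i(t)}$, so the displacement of $\nu_i(t)$ from $\hat\mu_i(t)$ is at most a prior-dependent constant times $T_i(t)^{-1}\sqrt{\log n}$; together with \cref{thm:gittins} and $m = n - t + 1 \ge n/2$ for $t \le n/2$, this means $\gamma_i(t)$ agrees with the flat-prior Gittins index up to an additive $O(T_i(t)^{-1}\sqrt{\log n})$ and a like perturbation of the logarithmic term under the square root. Since \cref{lem:F} only involves $\hat\mu_i$, $\P{F} \le 1/n$ still holds, and all of the above is argued on $\neg F$.

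Next I re-establish the two workhorse lemmas with these perturbations absorbed. For a suboptimal arm I keep $u_i' = \ceil{C\Delta_i^{-2}\log n}$ with a slightly larger universal constant $C$ and rerun \cref{lem:half}: at any round $t \le n/2$ with $T_i(t-1) = u_i'$, \cref{lem:positive} and the comparisons above give $\gamma_i(t) \le \hat\mu_i(t-1) + O(T_i(t-1)^{-1}\sqrt{\log n}) + \sqrt{2\log(n)/u_i'} \le \mu_i + \Delta_i/2$ once $n$ is large enough that the $O(\cdot)$ term falls below $\Delta_i/8$; exactly as before this forces $\gamma_i(t) < \mu_1 - Z' \le \gamma_1(t)$ and hence $I_t \ne i$, where now $Z' = \mu_1 - \min_{t \le n/2}\gamma_1(t)$ is built from the non-flat index. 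For the optimal arm I re-prove \cref{lem:Z} for $Z'$: unwinding $\gamma_1(t) \ge \nu_1(t-1) + \sqrt{2\sigma_1^2(t-1)\log\beta_1(t)}$ with the \cref{thm:gittins} lower bound on $\beta_1(t)$ and the comparisons above reduces $\{Z' \ge \Delta\}$, on $\neg F$, to a peeling event on the running sums $S_{T_1(t)}$ of precisely the form handled by \cref{lem:conc}, except that $\Delta$ is replaced by $\Delta$ minus a prior-dependent $o(1)$ shift; for a fixed threshold the shift is harmless once $n$ is large, so $\P{Z' \ge \Delta}$ obeys the same $O((\log n + \logp(n\Delta^2))/(n\Delta^2))$ bound as in \cref{lem:Z}, whence $n\,\E[Z'\ind{Z' \ge \Delta_{\min}/2}] = O(\log n)$ and is in fact at most a universal multiple of $(\log n)\sum_{i:\Delta_i > 0}\Delta_i^{-1}$.

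With these in hand the combinatorial core of the proof of \cref{thm:finite}, namely the split into the sets $A$, $B$, $C$ and the deduction \cref{eq:rnear}, transfers unchanged, with one simplification: because $\mu$ is fixed, the set $C = \{i : 0 < \Delta_i \le 4\sqrt{(8i/n)\log(2dn^2)}\}$ is empty for all large $n$, so \cref{lem:symmetry}, the one place the symmetry of the flat prior was used, is not needed. The steps \cref{eq:A} and \cref{eq:A2} pick up the extra $O(T_i^{-1}\sqrt{\log n})$ terms, but on the arms appearing there one has $T_i \ge n/(4|A|)$ with $|A| \le d$ fixed, so these terms are $O(1/n)$. One is left with $R^{\pi_n}_\mu(n) \le O(1) + 21\sum_{i:\Delta_i > 0}\Delta_i u_i' + 10 n\,\E[Z'\ind{Z' \ge \Delta_{\min}/2}]$ with the $O(1)$ prior-dependent; dividing by $\log n$ and letting $n \to \infty$, the first term vanishes, the second converges to $21C\sum_{i:\Delta_i > 0}\Delta_i^{-1}$, and the $\limsup$ of the third is at most a universal multiple of $\sum_{i:\Delta_i > 0}\Delta_i^{-1}$ by the previous paragraph, which gives the claim with $c$ a universal constant.

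I expect the one genuine obstacle to be the non-flat analogue of \cref{lem:Z}, namely controlling $Z'$ when $\nu_1$ is a badly pessimistic prior mean and arm $1$ has been pulled only a few times, so that $\nu_1(t)$ is a heavily biased estimate of $\mu_1$. The resolution uses $\sigma_1^2 > 0$ crucially: whenever $T_1(t)$ is bounded, $\sigma_1^2(t) \ge (T_1(t) + \sigma_1^{-2})^{-1}$ is bounded below by a positive constant, and then by \cref{thm:gittins} $\beta_1(t) = \Omega(n/\log^{3/2} n)$, so the exploration bonus $\sqrt{2\sigma_1^2(t)\log\beta_1(t)}$ diverges with $n$ and eventually swamps the fixed bias $|\mu_1 - \nu_1(t)|$; once $T_1(t)$ is large the bias is $o(1)$ and one is back in the flat-prior analysis. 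This is precisely where the hypothesis $\sigma_i^2 > 0$ enters, and it is why the result is only asymptotic: the horizon at which the exploration bonus overtakes the prior's pessimism depends on $\nu_1$, $\sigma_1^2$ and $\mu$.
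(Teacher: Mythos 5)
Your proposal is correct and follows essentially the same route as the paper: decompose the posterior mean as $\nu_i(t)=\eta_i(t)\nu_i+(1-\eta_i(t))\hat\mu_i(t)$ with $\eta_i(t)=O(1/T_i(t))$, show the prior's contribution is eventually dominated (for suboptimal arms after $O(\Delta_i^{-2}\log n)$ pulls, and for the optimal arm because the posterior exploration bonus both exceeds the $(1-\eta_1)\sqrt{2\log(\beta_t)/T_1}$ deviation term and diverges when $T_1$ is bounded, which is exactly where $\sigma_1^2>0$ enters), reduce the bad event to \cref{lem:conc}, and rerun the machinery of \cref{thm:finite} before taking the limit. The only blemishes are minor: you cite \cref{lem:positive} where the upper bound of \cref{thm:gittins} is what is actually used, and your observation that the set $C$ is eventually empty (so \cref{lem:symmetry} is not needed) is a valid small simplification over the paper's "duplicate the argument" step.
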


The proof may be found in \cref{sec:thm:asymptotic}.
It should be emphasised that the limit in \cref{thm:asymptotic} is taken over a sequence of strategies and an increasing horizon. This is in contrast to similar
results for UCB where the strategy is fixed and only the horizon is increasing.

\section{Discussion}\label{sec:conc}
I have shown that the Gittins strategy enjoys finite-time regret guarantees, which explicates
the excellent practical performance. The proof relies on developing tight bounds on the index, which
are asymptotically exact. If the prior is mis-specified, then the resulting Gittins index strategy is order-optimal asymptotically, but its finite-time
regret may be significantly worse.
Experimental results show the Gittins strategy with a flat improper Gaussian prior is never much worse and often much better than the best frequentist algorithms (please see \cref{sec:exp}). 
The index is non-trivial to compute, but reasonable accuracy is possible for horizons of $O(10^4)$. For larger horizons I propose a new index inspired
by the theoretical results that is efficient and competitive with the state-of-the-art, at least in the worst case regime for which experiments were performed.
There are a variety of open problems some of which are described below.

\subsubsect{Alternative prior and noise models}
The Gaussian noise/prior was chosen for its simplicity, but
bounded or Bernoulli rewards are also interesting. In the latter, the Gittins index
can be computed by using a Beta prior and dynamic programming. Asymptotic approximations by \cite{BK97} suggest that one might expect 
to derive the KL-UCB algorithm in this manner \citep{CGMMS13},
but finite-time bounds would be required to obtain regret guarantees. Many of the concentration inequalities used in the Gaussian
case have information-theoretic analogues, so in principle I expect that substituting these results into the current technique should lead to good results
\citep{Gar13}.

\subsubsect{Finite-time impact of mis-specified prior}
For non-flat priors I only gave asymptotic bounds on the regret, showing that the Gittins index strategy will eventually recover from even the most poorly mis-specified
prior.
Of course it would be nice to fully understand how long the algorithms takes to recover by analysing its finite-time regret.
Some preliminary work has been done on this question in a simplified setting for Thompson sampling by \cite{LL15}.
Unfortunately the results will necessarily be quite negative. If a non-flat prior is chosen in such a way that
the resulting algorithm achieves unusually small regret with respect to a particular arm, then the regret it incurs on the remaining arms
must be significantly larger \citep{Lat15-unfair}. In short, there is a large price to pay for favouring one arm over another and the Bayesian
algorithm cannot save you. This is in contrast to predictive settings where a poorly chosen prior is quickly washed away by data. 

\subsubsect{Asymptotic optimality}
\cref{thm:asymptotic} shows that the Gittins index strategy is eventually \textit{order-optimal} for any choice of prior, but the leading constant 
does not match optimal rate. The reason is that the failure probability for which the algorithm suffers linear regret appears to be $O(\log(n)/n)$ and
so this term is not asymptotically insignificant. In contrast, the UCB confidence level is chosen such that the failure probability is $O(1/n)$, which
is insignificant for large $n$.

\subsubsect{Horizon effects}
Recent bandit algorithms including MOSS \citep{AB09} and OCUCB \citep{Lat15-ucb} exploit the knowledge of the horizon. The Gittins strategy also 
exploits this knowledge, and it is not clear
how it could be defined without a horizon (the index tends to infinity as $n$ increases). The 
most natural approach would be to choose a prior on the unknown horizon, but what
prior should you choose and is there hope to compute the index in that case?\footnote{An exponential prior leads to the discounted 
setting for which anytime regret guarantees are unlikely to exist, but where computation is efficient. A power-law would be a more natural
choice, but the analysis becomes very non-trivial in that case.}
You can also ask what is the benefit of knowing the horizon? The exploration bonus of the Gittins
strategy (and MOSS and OCUCB) tend to zero as the horizon approaches, which makes these algorithms more 
aggressive than anytime algorithms such as UCB and Thompson sampling and improves practical performance.

\subsubsect{Extending the model and computation issues}
For finite-armed bandits and simple priors the index may be approximated in polynomial time, with practical computation possible to horizons of $\sim\!\!\! 10^4$.
It is natural to ask whether or not the computation techniques can be improved to compute the index for larger horizons. 
I am also curious to know if the classical results can be extended to more complicated settings such as linear bandits or partial monitoring.
This has already been done to some extent (eg., restless bandits), but there are many open problems. The recent book by \cite{GGW11} is a broad reference for existing
extensions.

\subsection*{Acknowledgements}

My thanks to Marcus Hutter for several useful suggestions and to Dan Russo for pointing out that the finite-horizon Gittins strategy is
not generally Bayesian optimal.
The experimental component of this research was enabled by the support provided 
by WestGrid (\url{www.westgrid.ca}) and Compute Canada (\url{www.computecanada.ca}).

\appendix
\bibliography{all}
\section{Comparison to Bayes}\label{sec:bayes}
As remarked in the introduction, 
it turns out that both geometric discounting and an infinite horizon are crucial for the interchange argument used in all proofs of the Gittins
index theorem and indeed the result is known to be false in general for non-geometric discounting as shown by \cite{BF85}.
Thus the observation that the Gittins index strategy is not Bayesian optimal in the setting considered here should not come as a surprise, but
is included for completeness.
Let $n = 2$ and $\nu_1 = 0$ and $\sigma^2_1 = 1$ and $\sigma^2_2 = 1/2$. Then the Gittins indices are
\eq{
\gamma(\nu_1, \sigma^2_1, 2) &\approx 0.195183 &
\gamma(\nu_2, \sigma^2_2, 2) &\approx \nu_2 + 0.112689\,.
}
Therefore the strategy based on Gittins index will choose the second action if
$\nu_2 \gtrsim 0.082494$.
Computing the Bayesian value of each choice is possible analytically 
\eq{
\sup_\pi \E\left[\sum_{t=1}^n X_t \Bigg| I_1 = 1\right]
&= \int^\infty_{-\infty} \max\set{\nu_2, \delta} \frac{1}{\sqrt{\pi}} \exp\left(-\delta^2 \right) d\delta \\
&= \frac{\exp\left(-\nu_2^2\right) }{2\sqrt{\pi}} + \frac{\nu_2 + \nu_2 \erf(\nu_2)}{2}\,. \\
\sup_\pi \E\left[\sum_{t=1}^n X_t \Bigg| I_1 = 2\right]
&= \nu_2 + \int^\infty_{-\infty} \max\set{0, \nu_2 + \delta} \frac{1}{\sqrt{\pi/3}} \exp\left(-3\delta^2\right) d\delta \\
&= \nu_2 + \frac{\exp\left(-3\nu_2^2\right)}{\sqrt{2\pi}} + \frac{\nu_2 + \nu_2 \erf\left(\sqrt{3} \nu_2\right)}{2}\,.
}
Solving leads to the conclusion that $I_2 = 2$ is optimal only if $\nu_2 \gtrsim 0.116462$ and hence the Gittins strategy is not Bayesian optimal (it does
not minimise \cref{eq:bayes}).
Despite this, the following result shows that the regret analysis for the Gittins index strategy can also be applied to the 
intractable fully Bayesian algorithm when the number of arms is $d = 2$. The idea is to show that the Bayesian algorithm will never
choose an arm for which a UCB-like upper confidence bound is smaller than the largest Gittins index.
Then the analysis in Section \ref{sec:finite} may be repeated to show that 
that Theorems \ref{thm:finite} and \ref{thm:asymptotic} also hold for the Bayesian algorithm when $d = 2$.

\begin{theorem}\label{thm:bayes-arm}
Assume $d = 2$ and the horizon is $n$.
Let $Q$ be the multivariate Gaussian prior measure with mean $\nu \in \R^d$ and covariance matrix $\Sigma = \diag(\sigma^2)$.
If the Bayesian optimal action is $I_1 = 1$, then
\eq{
\nu_1 + \sqrt{2c\sigma^2_1 \log n} \geq \gamma(\nu_2, \sigma^2_2, n)\,,
}
where $c > 0$ is a universal constant.
\end{theorem}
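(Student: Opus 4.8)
The plan is to argue by contraposition: writing $\gamma_2 = \gamma(\nu_2,\sigma_2^2,n)$ and $b_1 = \nu_1 + \sqrt{2c\sigma_1^2\log n}$, I assume $\gamma_2 > b_1$ for a universal constant $c$ to be chosen and show that committing to $I_1 = 2$ is at least as valuable as committing to $I_1 = 1$, so $I_1 = 1$ is not the unique Bayesian optimal first action (which is all the subsequent regret argument needs). Let $V_i$ be the optimal Bayesian value conditioned on $I_1 = i$, let $W_m(s_1;s_2)$ be the optimal value of the two-armed game with $m$ rounds left and posterior states $s_1,s_2$, and for $m \ge 1$ let $R_m(s,a) = \sup_{1\le\tau\le m}\E\big[\sum_{t=1}^{\tau}Y_t + (m-\tau)a\big]$ be the value of the one-armed bandit that may play the arm with state $s$ or retire forever at rate $a$. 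By definition of the index, $R_m(s,a) = ma$ for $a \ge \gamma(s,m)$ and $R_m(s,a) > ma$ otherwise, and $a \mapsto R_m(s,a)$ is convex increasing with $\partial_a R_m(s,a) = m - \E[\tau^{(a)}]$, where $\tau^{(a)} \in [1,m]$ is the optimal retirement time at rate $a$ (it retires when the index first drops below $a$).

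The first step is to squeeze both $V_1$ and $V_2$ between one-armed quantities. For $V_2$: since $\gamma_2 > \nu_1$, playing arm~2 at round~$1$ is already optimal in the one-armed relaxation where arm~1 is the \emph{known} constant $\nu_1$, so the constraint $I_1 = 2$ is not binding there, and executing that (suboptimal) policy in the true game — ignoring arm~1's observations — gives $V_2 \ge R_n(s_2,\nu_1)$. For $V_1$: after the forced first pull of arm~1 its posterior is a random state $s_1^+$, and I would show $W_{n-1}(s_1^+;s_2) \le R_{n-1}\!\big(s_2, U(s_1^+)\big)$, where $U(s) = \nu(s) + \sqrt{2c\sigma^2(s)\log n}$ is the UCB bound of arm~1 at state $s$; i.e.\ replacing the unknown arm~1 by a known arm at its optimistic UCB value does not lower the value of the two-armed game. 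The point is that along any trajectory the index of arm~1 stays below $U(s_1^+)$ except on an event of probability $\lesssim n^{-c}$ (by the Gaussian tail bound of Lemma~\ref{lem:gaussian} and the upper half of Theorem~\ref{thm:gittins}), and a coupling of the two games absorbs that small event. Collapsing the first reward by the tower rule then yields $V_1 \le \nu_1 + \E\, R_{n-1}\!\big(s_2, U(s_1^+)\big)$.

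It remains to prove $R_n(s_2,\nu_1) \ge \nu_1 + \E\, R_{n-1}\!\big(s_2, U(s_1^+)\big)$. Morally the left side is "one extra round of exploring arm~2, then retire to arm~1 at rate $\nu_1$" while the right side is "one round of learning $\mu_1$, then exploit arm~2 optimistically", and since both are within $O(n)$ of a common dominant term the comparison demands cancellation, not crude bounding. I would do it in three moves: (i) lower bound the marginal value of a round $R_n(s_2,\nu_1) - R_{n-1}(s_2,\nu_1)$ by $(\gamma_2 - \nu_1)$ times a lower bound on the expected Gittins stopping time of arm~2, which is exactly what Lemma~\ref{lem:bandit}(3) together with the $\log^{3/2}$-precise index estimate of Theorem~\ref{thm:gittins} supply; (ii) upper bound $\E\, R_{n-1}(s_2,U(s_1^+)) - R_{n-1}(s_2,\nu_1)$ by combining convexity, the envelope identity $\partial_a R_{n-1}(s_2,a) = (n-1) - \E[\tau^{(a)}]$, and the bounds of Lemma~\ref{lem:bandit}(1)--(2) on $\E[\tau^{(a)}]$ for rates $a$ lying far below $\gamma_2$, integrated against the Gaussian law of $U(s_1^+)$, whose upper tail above $\gamma_2$ is negligible by the standing assumption; (iii) choose $c$ large enough that the estimate of (ii) is dominated by that of (i). This forces $b_1 \ge \gamma_2$, the desired contradiction; applying the same statement at an arbitrary round (with the current posterior as the prior) and noting $\gamma(\nu_1,\sigma_1^2,n) \le \nu_1 + \sqrt{2\sigma_1^2\log n} \le b_1$ by Theorem~\ref{thm:gittins} shows a chosen arm dominates the \emph{largest} index, which is what replaying the Section~\ref{sec:finite} analysis for the Bayesian strategy requires.

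The two places I expect trouble are the monotonicity claim $W_{n-1}(s_1^+;s_2) \le R_{n-1}(s_2,U(s_1^+))$ and step~(ii). For the former, a deterministic arm loses the option value of exploiting a fortuitously large $\mu_1$, so the replacement is only valid because $U$ is optimistic enough ($c$ large); making the coupling rigorous, and ruling out that arm~1's index drifts above $U(s_1^+)$ on a non-negligible set, is delicate. For the latter, the naive Lipschitz estimate $|R_{n-1}(s_2,a)-R_{n-1}(s_2,a')| \le (n-1)|a-a'|$ only gives a useless $O(n\sigma_1)$ bound, and everything hinges on the fact that $R_{n-1}(s_2,\cdot)$ \emph{flattens} (its derivative tends to $0$) as its argument approaches $\gamma_2$ from below, while $U(s_1^+)$ concentrates well below $\gamma_2$; turning this into a usable bound on the Gaussian integral is precisely the kind of finite-time, exponential-scale estimate that the machinery of Sections~\ref{sec:approx}--\ref{sec:finite} was built to provide.
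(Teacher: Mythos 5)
Your architecture --- contraposition plus a sandwich of $V_1$ and $V_2$ between one-armed retirement values --- is genuinely different from the paper's, but it contains a step that is quantitatively false, not merely ``delicate.'' The pivot inequality $R_n(s_2,\nu_1) \ge \nu_1 + \E\, R_{n-1}\!\bigl(s_2, U(s_1^+)\bigr)$ cannot hold under the standing assumption alone. Since immediate retirement is always available, $R_{n-1}(s_2,a)\ge (n-1)a$, so the right-hand side is at least $n\nu_1+(n-1)\sqrt{2c\sigma_1^2(1)\log n}$ with $\sigma_1^2(1)=\sigma_1^2/(1+\sigma_1^2)$: replacing arm~1 by a \emph{known} arm at its UCB value inflates the game value by $\Theta(n\sigma_1\sqrt{\log n})$ through the dominant linear term $(n-1)a$, irrespective of arm~2. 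The left-hand side is $n\nu_1+\sup_\tau\E\bigl[\sum_{t\le\tau}(\nu_{2,t}-\nu_1)\bigr]$, and the option-value term is at most $\sum_{t\le n}\E[(\nu_{2,t}-\nu_1)^+]\le n(\nu_2-\nu_1)^+ + n\sigma_2/\sqrt{2\pi}$. Now take $\nu_1=\nu_2=0$, $\sigma_1^2=1$, $\sigma_2^2=4c$: \cref{thm:gittins} gives $\gamma_2\approx\sqrt{8c\log\beta_2}$ with $\log\beta_2\sim\log n$, so the hypothesis $\gamma_2>\sqrt{2c\log n}$ holds, yet the left side is $O(n\sqrt{c}\,)$ while the right side is $\Omega(n\sqrt{c\log n})$. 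The diagnosis is that step~(i) cannot deliver what you need: \cref{lem:bandit}(3) gives $\E[\tau\,|\,\mu=\theta]\ge m/2$ only on the event $\mu_2\gtrsim\gamma_2$, which has prior probability $\approx 1/\beta=O(\log^{3/2}(n)/n)$, while parts (1)--(2) show the arm is abandoned after polylogarithmically many pulls otherwise; hence $\E[\tau(\nu_2,\sigma_2^2,n)]$ is only polylogarithmic in $n$ and $(\gamma_2-\nu_1)\E[\tau]$ cannot absorb an $\Omega(n\sigma_1\sqrt{\log n})$ optimism penalty. The flattening of $a\mapsto R_{n-1}(s_2,a)$ near $\gamma_2$ invoked in step~(ii) does not rescue this, because the loss sits in the linear term, not in the option value. (The domination $W_{n-1}(s_1^+;s_2)\le R_{n-1}(s_2,U(s_1^+))$ that you also flag is repairable by a simulation/coupling argument plus a Gaussian tail bound, but it is moot given the above.)

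The paper's proof avoids any one-armed relaxation of $V_1$. It fixes the optimal policy $\pi^*$ and builds a single competitor $\pi$ that plays arm~2 according to the Gittins stopping rule $\tau=\tau(\nu_2,\sigma_2^2,n)$ first and thereafter replays $\pi^*$, so that $0\ge V^\pi-V^{\pi^*}$ reduces to comparing the surplus arm-2 pulls (worth $\gamma$ on average by the defining property of $\tau$) against the displaced arm-1 pulls, \emph{localised to the event} $\set{\tau>T_2(n)}$. Two ingredients then close the argument: \cref{lem:tau-bound} shows $\P{\tau=n}\ge c'/(n^2\log(n))$, so this event is not too rare, and \cref{lem:cgaussian} bounds the displaced posterior means on an event of probability $\delta_s$ by $\delta_s\sqrt{2\sigma_1^2\log(1/\delta_s)}$; the $\sqrt{\log n}$ in the theorem is exactly $\sqrt{\log(1/\delta)}$ with $\delta$ polynomial in $1/n$. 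If you wish to salvage a relaxation-based route, you would need an upper bound on $V_1$ whose slack is $o(n\sigma_1)$; the UCB replacement cannot provide one.
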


The proof of \cref{thm:bayes-arm} is surprisingly tricky and may be found in \cref{sec:thm:bayes-arm}.
Empirically the behaviour of the Bayesian algorithm and the Gittins index strategy is 
almost indistinguishable, at least for two arms and small horizons (see \cref{sec:exp}).

\section{Computing the Gittins Index}\label{sec:compute}
\newcommand{\V}{\mathcal V}

I briefly describe a method of computing the Gittins index in the Gaussian case.
A variety of authors have proposed sophisticated methods for computing the Gittins index, both in the discounted
and finite horizon settings \citep[and references therein]{Nin11,CM13}. The noise model used here is Gaussian and so continuous, which seems to make 
prior work inapplicable. Fortunately the Gaussian model is rather special, mostly due to the shift invariance (Lemma \ref{lem:shift}), which can be exploited to
compute and store the index quite efficiently.

Let $\nu \in \R$ and $\sigma^2 > 0$ be the prior mean and variance respectively and $m$ be the number of rounds remaining.
For $t \in \set{1,\ldots,m}$ define independent random variables
\eq{
\eta_t \sim \mathcal N\left(0, \frac{\sigma^2}{1 + (t-1) \sigma^2} \cdot \frac{\sigma^2}{1 + t\sigma^2}\right)\,.
}
Let $\nu_1 = \nu$ and $\nu_t = \nu_{t-1} + \eta_{t-1}$ for $t > 1$. Then 
\eqn{
\label{eq:git-alt}
\gamma(\nu, \sigma^2, m) 
&= \max\set{\gamma : \sup_{1\leq \tau\leq m} \E\left[\sum_{t=1}^\tau (\nu_t - \gamma)\right] = 0} 
}
where the stopping time is with respect to the filtration generated by $\eta_1,\ldots,\eta_m$.
\cref{eq:git-alt} is the Bayesian view of \cref{eq:gittins} with the integral over $\mu$ in that equation incorporated
into the posterior means.
As in the proof of \cref{thm:gittins}, let $\nu$ be such that $\gamma(\nu, \sigma^2, m) = 0$. 
Then
\eq{
0 = \gamma(\nu, \sigma^2, m) = \sup_{1 \leq \tau \leq m} \E\left[\sum_{t=1}^\tau \nu_t\right]\,.
}
The optimal stopping problem above can be solved by finding the root of the following Bellman equation for $t = m$.
\eqn{
\label{eq:bellman}
\V_t(x, \sigma^2) = 
\begin{cases}
x + \E_{\eta \sim \mathcal N(0, \sigma^4/(1 + \sigma^2))} \left[\max\set{0, \V_{t-1}\left(x + \eta, \frac{\sigma^2}{1+\sigma^2}\right)}\right] 
& \text{if } t \geq 1 \\
0 & \text{otherwise}\,.
\end{cases}
}
Then by Lemma \ref{lem:shift} the Gittins index satisfies 
\eqn{
\label{eq:root}
\gamma(0, \sigma^2, m) = \gamma(\nu, \sigma^2, m) - \nu = -\nu =  -\max\set{x : \V_m(x, \sigma^2) = 0}\,.
}
Now \cref{eq:bellman} is a Bellman equation and
conveniently there is a whole field devoted to solving such equations
\citep[and references therein]{BT95}. 
An efficient algorithm that computes the Gittins index to arbitrary precision by solving \cref{eq:bellman} 
using backwards induction and 
approximating $\max\set{0, \V_t(x, \sigma^2)}$ using quadratic splines may be found at 
\burl{https://github.com/tor/libbandit}. 
The motivation for choosing the quadratic is because
the expectation in \cref{eq:bellman} can be computed explicitly in terms of the error function and because $\V_t(x, \sigma^2)$ is
convex in $x$, so the quadratic leads to a relatively good fit with only a few splines.
The convexity of $\V$ also means that \cref{eq:root} can be computed efficiently from a sufficiently good approximation of $\V_m$.
Finally, in order to implement \cref{alg:flat} up to a horizon of $n$ we may need $\gamma(\nu, 1/T, m) = \nu + \gamma(0,1/T, m)$ 
for all $T$ and $m$ satisfying $T + m \leq n$. This can computed by solving $n$ copies of \cref{eq:bellman} with
$\sigma^2 = 1$ and $m \in \set{1,\ldots,n-1}$.
The total running time is $O(n^2 N)$ where $N$ is the maximum number of splines required for sufficient accuracy.
The results can be stored in a lookup table of size $O(n^2)$, which makes the actual simulation of \cref{alg:flat} extremely fast.
Computation time for $n = 10^4$ was approximately 17 hours using 8 cores of a Core-i7 machine.

\section{Experiments}\label{sec:exp}
\pgfplotstableread[comment chars={\%}]{data/exp1.txt}{\tableWorstTwo}
\pgfplotstableread[comment chars={\%}]{data/exp2.txt}{\tableWorstFive}
\pgfplotstableread[comment chars={\%}]{data/exp3.txt}{\tableWorstTen}
\pgfplotstableread[comment chars={\%}]{data/exp4.txt}{\tableApprox}
\pgfplotstableread[comment chars={\%}]{data/exp5.txt}{\tableApproxLong}
\pgfplotstableread[comment chars={\%}]{data/exp6.txt}{\tableWorstLongTwo}
\pgfplotstableread[comment chars={\%}]{data/exp7.txt}{\tableWorstLongFive}
\pgfplotstableread[comment chars={\%}]{data/exp8.txt}{\tableWorstLongTen}
\pgfplotstableread[comment chars={\%}]{data/exp9.txt}{\tableFullBayes}

\pgfplotstableread[comment chars={\%}]{data/index}{\tableIndex}
\pgfplotstableread[comment chars={\%}]{data/index2}{\tableIndexTwo}

I compare the Gittins strategy given in \cref{alg:flat} with UCB, OCUCB \citep{Lat15-ucb} and Thompson sampling (TS) with a flat Gaussian prior \citep{Tho33,AG12}.
Due to the computational difficulties in calculating the Gittins index we are limited to modest horizons.
For longer horizons the Gittins index may be approximation by the lower bound in \cref{thm:gittins}.
I also compare the Gittins index strategy to the Bayesian optimal strategy that can be computed with reasonable accuracy 
for the two-armed case and a horizon of $n = 2000$.
Error bars are omitted from all plots because they are too small to see.
All code is available from \burl{https://github.com/tor/libbandit}. 
\begin{table}[H]
\centering
\begin{tabular}{ll}
\toprule
{\textbf {Algorithm}}                  & {\textbf{Index}}
\renewcommand{\arraystretch}{1.6}
                  \\ \midrule
UCB               & $\hat \mu_i(t-1) + \sqrt{\frac{2}{T_i(t-1)} \log t}$                        \\[0.4cm]
OCUCB             & $\hat \mu_i(t-1) + \sqrt{\frac{3}{T_i(t-1)} \log \left(\frac{2 n}{t}\right)}$      \\[0.4cm]
TS & $\sim \mathcal N(\hat \mu_i(t-1), (T_i(t-1)+1)^{-1})$ \\
\bottomrule
\end{tabular}
\caption{Comparison Algorithms}
\end{table}

\subsubsect{Worst-case regret}
I start with the worst case regret for horizons $n \in \set{10^3, 10^4}$. In all experiments the first arm is optimal and has mean $\mu_1 = 0$.
The remaining arms have $\mu_i = -\Delta$.
In \cref{fig:worst} I plot the expected regret of various algorithms with $d \in \set{2,5,10}$ and varying $\Delta$. 
The results demonstrate that the Gittins strategy significantly outperforms 
both UCB and Thompson sampling, and is a modest improvement on OCUCB in most cases.

\pgfplotsset{cycle list={{red,dotted}, {green!50!black,dashdotdotted}, {blue,dashed}, {black}}}
\pgfplotsset{every axis plot/.append style={line width=1.5pt}}

\newenvironment{customlegend}[1][]{%
  \begingroup
      \csname pgfplots@init@cleared@structures\endcsname
       \pgfplotsset{#1}%
}{%
    \csname pgfplots@createlegend\endcsname
    \endgroup
}%

  \def\addlegendimage{\csname pgfplots@addlegendimage\endcsname}

\newcommand{\defaultaxis}{
        xlabel shift=-5pt,
        ylabel shift=-2pt,
        width=5cm,
        height=4cm,
        legend cell align=left,
        compat=newest}

\begin{figure}[H]
\centering
\begin{tikzpicture}[font=\scriptsize]
\begin{customlegend}[
  legend entries={UCB,Thompson Sampling,OCUCB,Gittins (Algorithm 1)},
  legend columns=-1,
  legend style={column sep=2ex,}]

\addlegendimage{red,dotted,sharp plot}
\addlegendimage{green!50!black,dashdotdotted,sharp plot}
\addlegendimage{blue,dashed,sharp plot}
\addlegendimage{black,sharp plot}
\end{customlegend}

\end{tikzpicture}

  \begin{tikzpicture}[font=\scriptsize]
    \begin{axis}[\defaultaxis,
        xmin=0,
        ymin=0,
        xmax=2,
        ytick={0,10,20,25,30},
        yticklabels={0,10,20,\phantom{300},30},
        xtick={0,2},
        xlabel={\begin{minipage}{3cm}\centering $\Delta$ \\ $n = 10^3$, $d = 2$\end{minipage}},
        ylabel={Expected Regret}]

      \addplot+[] table[x index=0,y index=1] \tableWorstTwo;
      \addplot+[] table[x index=0,y index=3] \tableWorstTwo;
      \addplot+[] table[x index=0,y index=2] \tableWorstTwo;
      \addplot+[] table[x index=0,y index=4] \tableWorstTwo;
    \end{axis}
  \end{tikzpicture}
  \begin{tikzpicture}[font=\scriptsize]
    \begin{axis}[\defaultaxis,
        xmin=0,
        ymin=0,
        xmax=2,
        xtick={0,2},
        ytick={0,20,40,60,65,80},
        yticklabels={0,20,40,60,\phantom{300},80},
        xlabel={\begin{minipage}{3cm}\centering $\Delta$ \\ $n = 10^3$, $d = 5$\end{minipage}},]

      \addplot+[] table[x index=0,y index=1] \tableWorstFive;
      \addplot+[] table[x index=0,y index=3] \tableWorstFive;
      \addplot+[] table[x index=0,y index=2] \tableWorstFive;
      \addplot+[] table[x index=0,y index=4] \tableWorstFive;

    \end{axis}
  \end{tikzpicture}
  \begin{tikzpicture}[font=\scriptsize]
    \begin{axis}[\defaultaxis,
        xmin=0,
        ymin=0,
        xmax=2,
        xtick={0,2},
        xlabel={\begin{minipage}{3cm}\centering $\Delta$ \\ $n = 10^3$, $d = 10$\end{minipage}},]

      \addplot+[] table[x index=0,y index=1] \tableWorstTen;
      \addplot+[] table[x index=0,y index=3] \tableWorstTen;
      \addplot+[] table[x index=0,y index=2] \tableWorstTen;
      \addplot+[] table[x index=0,y index=4] \tableWorstTen;
    \end{axis}
  \end{tikzpicture}

  \noindent
  \begin{tikzpicture}[font=\scriptsize]
    \begin{axis}[\defaultaxis,
        xmin=0,
        ymin=0,
        xmax=0.5,
        xtick={0,0.5},
        xticklabels={0,$\frac{1}{2}$},
        ylabel={Expected Regret},
        xlabel={\begin{minipage}{3cm}\centering $\Delta$ \\ $n = 10^4$, $d = 2$\end{minipage}},
        ]

      \addplot+[] table[x index=0,y index=1] \tableWorstLongTwo;
      \addplot+[] table[x index=0,y index=3] \tableWorstLongTwo;
      \addplot+[] table[x index=0,y index=2] \tableWorstLongTwo;
      \addplot+[] table[x index=0,y index=4] \tableWorstLongTwo;
    \end{axis}
  \end{tikzpicture}
  \begin{tikzpicture}[font=\scriptsize]
    \begin{axis}[\defaultaxis,
        xmin=0,
        ymin=0,
        xmax=0.5,
        xtick={0,0.5},
        xticklabels={0,$\frac{1}{2}$},
        xlabel={\begin{minipage}{3cm}\centering $\Delta$ \\ $n = 10^4$, $d = 5$\end{minipage}},
        ]
      \addplot+[] table[x index=0,y index=1] \tableWorstLongFive;
      \addplot+[] table[x index=0,y index=3] \tableWorstLongFive;
      \addplot+[] table[x index=0,y index=2] \tableWorstLongFive;
      \addplot+[] table[x index=0,y index=4] \tableWorstLongFive;
    \end{axis}
  \end{tikzpicture}
  \begin{tikzpicture}[font=\scriptsize]
    \begin{axis}[\defaultaxis,
        xmin=0,
        ymin=0,
        xmax=0.5,
        xtick={0,0.5},
        xticklabels={0,$\frac{1}{2}$},
        xlabel={\begin{minipage}{3cm}\centering $\Delta$ \\ $n = 10^4$, $d = 10$\end{minipage}}]

      \addplot+[] table[x index=0,y index=1] \tableWorstLongTen;
      \addplot+[] table[x index=0,y index=3] \tableWorstLongTen;
      \addplot+[] table[x index=0,y index=2] \tableWorstLongTen;
      \addplot+[] table[x index=0,y index=4] \tableWorstLongTen;
    \end{axis}
  \end{tikzpicture}

  \caption{Worst case regret comparison}\label{fig:worst}
\end{figure}
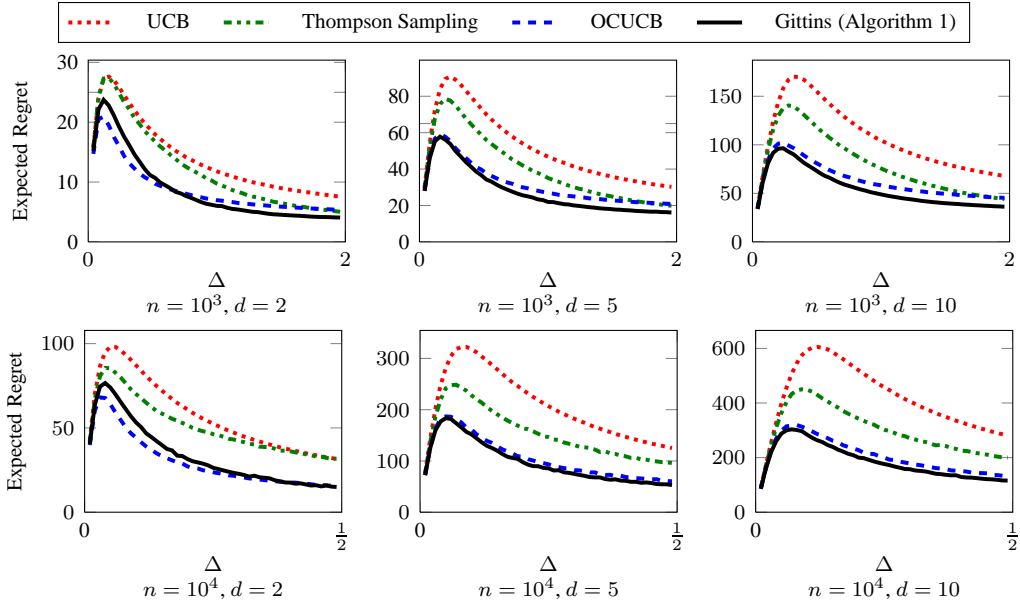

\subsubsect{Approximating the Gittins index}
Current methods for computing the Gittins index are only practical for horizons up to $O(10^4)$.
For longer horizons it seems worthwhile to find a closed form approximation.
Taking inspiration from \cref{thm:gittins}, define $\tilde \gamma(\nu, \sigma^2, m) \approx \gamma(\nu, \sigma^2, m)$ by
\eq{
\tilde \gamma(\nu, \sigma^2, m) &= \nu + \sqrt{2\sigma^2 \log \beta(\sigma^2, m)}
\qquad \text{where} \qquad \\
\beta(\sigma^2, m) &= \max\set{1,\,\, \frac{1}{4} \min\set{\frac{m}{\log^{\frac{3}{2}}(m)},\,\, \frac{m\sigma^2}{\log^{\frac{1}{2}}(m\sigma^2)}}}\,.
}
This is exactly the lower bound in \cref{thm:gittins}, but with the leading constant chosen (empirically) to be $1/4$.
I compare the indices in two key regimes.
First fix $\nu = 0$ and $\sigma^2 = 1$ and vary the horizon.
In the second regime the horizon is fixed to $m = 10^3$ and $\sigma^2 = 1/T$ is varied. The results (\cref{fig:approx-index})
suggest a relatively good fit.

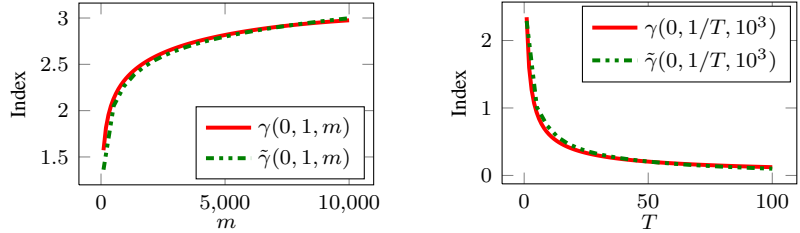
\begin{figure}[H]
\centering
\begin{tikzpicture}[font=\scriptsize]
\begin{axis}[\defaultaxis,
       xlabel={$m$},
       width=5.5cm,
       legend pos=south east,
       scaled x ticks=false,
       xticklabel style={/pgf/number format/fixed},
       ylabel=Index]

\addplot+[solid] table[x index=0,y index=2] \tableIndex;
\addlegendentry{$\gamma(0, 1, m)$};
\addplot+[domain=100:10000] {sqrt(2 * min(ln(x / 4 / ln(x)^(1/2)), ln(x / 4 / ln(x)^(3/2))))};
\addlegendentry{$\tilde \gamma(0, 1, m)$}
\end{axis}
\end{tikzpicture}
\hspace{0.5cm}
\begin{tikzpicture}[font=\scriptsize]
\begin{axis}[\defaultaxis,
       xlabel={$T$},
       width=5.5cm,
       ylabel=Index]

\addplot+[solid] table[x index=1,y index=2] \tableIndexTwo;
\addlegendentry{$\gamma(0, 1/T, 10^3)$};
\addplot+[domain=1:100] {sqrt(2 / x * min(ln(1000 / 4 / x / ln(1000/x)^(1/2)), ln(1000 / 4 / ln(1000)^(3/2))))};
\addlegendentry{$\tilde \gamma(0, 1/T, 10^3)$};
\end{axis}
\end{tikzpicture}

\caption{Approximation of the index}\label{fig:approx-index}
\end{figure}

The approximated index strategy is reasonably competitive with the Gittins strategy (\cref{fig:approx}).
For longer horizons the Gittins index cannot be computed in reasonable time, but the approximation can be compared to other efficient
algorithms such as OCUCB, Thompson sampling and UCB (\cref{fig:approx}). The approximated version Gittins index 
is performing well compared to Thompson sampling and UCB, and is only marginally worse than OCUCB. As an added bonus, by cloning the
proofs of Theorems \ref{thm:finite} and \ref{thm:asymptotic} it can be shown that the approximate index enjoys the same regret guarantees
as the Gittins strategy/UCB.

\begin{figure}[H]
  \begin{tikzpicture}[font=\scriptsize]
    \begin{axis}[\defaultaxis,
        xmin=0,
        ymin=0,
        xmax=2,
        width=7cm,
        xlabel={\begin{minipage}{5cm}\centering $\Delta$ \\ $n = 10^3$, $d = 5$, $\mu_1 = 0$, $\mu_{\geq } = -\Delta$\end{minipage}},
        ylabel={Expected Regret}]
      \addplot+[black,solid] table[x index=0,y index=1] \tableApprox;
      \addlegendentry{Gittins};

      \addplot+[black,dashed] table[x index=0,y index=2] \tableApprox;
      \addlegendentry{Gittins Approximation};
    \end{axis}
  \end{tikzpicture}
  \begin{tikzpicture}[font=\scriptsize]
    \begin{axis}[\defaultaxis,
        xmin=0,
        ymin=0,
        xmax=0.2,
        width=8cm,
        xtick={0,0.05,0.1,0.15,0.2},
        xlabel={$\Delta$},
        xlabel={\begin{minipage}{5cm}\centering $\Delta$ \\ $n = 5\times 10^4$, $d = 5$, $\mu_1 = 0$, $\mu_{\geq } = -\Delta$\end{minipage}},
        xticklabel style={/pgf/number format/fixed},
        ylabel={Expected Regret}]
      \addplot+[] table[x index=0,y index=2] \tableApproxLong;
      \addlegendentry{UCB};

      \addplot+[] table[x index=0,y index=4] \tableApproxLong;
      \addlegendentry{Thompson Sampling};

      \addplot+[] table[x index=0,y index=3] \tableApproxLong;
      \addlegendentry{OCUCB};

      \addplot+[] table[x index=0,y index=1] \tableApproxLong;
      \addlegendentry{Gittins Approximation};

    \end{axis}
  \end{tikzpicture}
  \caption{Regret for approximate Gittins index strategy}\label{fig:approx}
\end{figure}
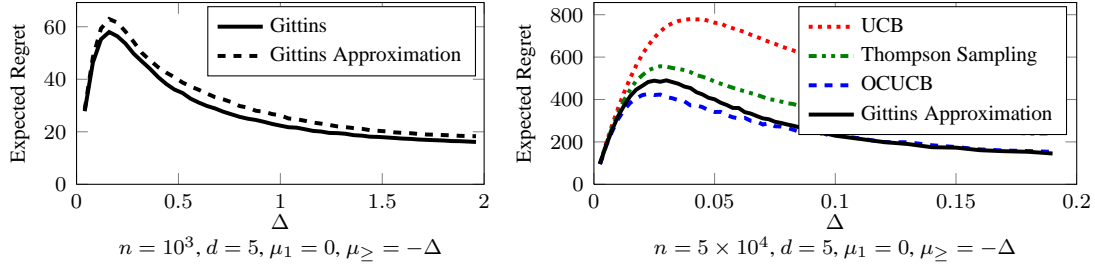

\subsubsect{Comparing Gittins and Bayes}
Unfortunately the Bayesian optimal strategy is not typically an index strategy. The two natural exceptions occur when the rewards
are discounted geometrically or when there are only two arms and the return of the second arm is known. Despite this, the
Bayesian optimal strategy can be computed for small horizons and a few arms using a similar method as the Gittins index.
Below I compare the Gittins index against the Bayesian optimal and OCUCB for $n = 2000$ and $d = 2$ and $\mu_1 = 0$ and $\mu_2 = -\Delta$.
The results show the similar behaviour of the Gittins index strategy and the Bayesian optimal strategy, while OCUCB is making a different
trade-off (better minimax regret at the cost of worse long-term regret).

\begin{figure}[H]
  \centering
  \begin{tikzpicture}[font=\scriptsize]
    \begin{axis}[\defaultaxis,
        xmin=0,
        ymin=0,
        xmax=2,
        width=8cm,
        xtick={0,2},
        xticklabels={0,2},
        xlabel={\begin{minipage}{3cm}\centering $\Delta$ \\ $n = 2000$, $d = 2$\end{minipage}}]

      \addplot+[blue,dashed] table[x index=0,y index=1] \tableFullBayes;
      \addlegendentry{OCUCB};
      \addplot+[black,solid] table[x index=0,y index=2] \tableFullBayes;
      \addlegendentry{Gittins};
      \addplot+[red,dashed] table[x index=0,y index=3] \tableFullBayes;
      \addlegendentry{Bayes};
    \end{axis}
  \end{tikzpicture}
  \caption{Comparing Gittins and Bayesian optimal}\label{fig:bayes}
\end{figure}
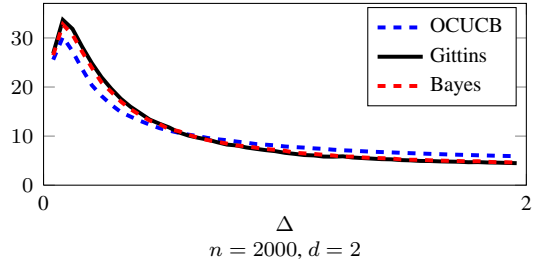

\section{Proof of Lemma \ref{lem:bandit}}\label{sec:lem:bandit}

For parts (1) and (2) we have $\theta \in (-\infty, 0)$. Define
$t_\theta \in \set{1,2,\ldots}$ by
\eq{
t_\theta 
&= \min\set{t \geq \frac{1}{\nu^2} : \sqrt{\frac{4}{t} \log (4t\nu^2)} \leq -\theta/2}\,. 
}
Then
\eq{
\E[\tau|\mu= \theta] 
&= \sum_{t=1}^m \P{\tau \geq t| \mu = \theta} 
\leq t_\theta + \sum_{t=t_\theta+1}^m \P{\tau \geq t | \mu = \theta} \\
&\leq t_\theta + \sum_{t=t_\theta}^{m-1} \P{\hat \mu_t + \sqrt{\frac{4}{t} \log\left(4t\nu^2\right)} > 0\Bigg| \mu = \theta} \\
&\leq t_\theta + \sum_{t=t_\theta}^{m-1} \P{\hat \mu_t \geq \theta / 2 | \mu= \theta}  
\leq t_\theta + \sum_{t=t_\theta}^\infty \exp\left(-\frac{t \theta^2}{8}\right) 
\leq t_\theta + \frac{8}{\theta^2}\,.
}
The results follow by naively bounding $t_\theta$.
For part 3, assume $\mu \geq 0$ and define $S_t = t \hat \mu_t$ and $t_k = \ceil{1/\nu^2} 2^k$. By using the peeling device we obtain
\eq{
\P{\tau < m|\mu = \theta} 
&= \P{\exists t \geq \frac{1}{\nu^2} : \hat \mu_t + \sqrt{\frac{4}{t} \log(4t\nu^2)} \leq 0 \Bigg|\mu = \theta} \\
&\leq \sum_{k=0}^\infty \P{\exists t \leq t_{k+1} : S_t + \sqrt{2 \cdot t_{k+1} \log (4t_k \nu^2)} \leq 0 \Big|\mu = \theta} \\
&\leq \sum_{k=0}^\infty \frac{1}{4t_k \nu^2} 
\leq \sum_{k=0}^\infty \frac{1}{4 \cdot 2^k} 
= \frac{1}{2}\,.
}
Therefore $\E[\tau|\mu = \theta] \geq m/2$.

\section{Concentration of Gaussian Random Variables}\label{sec:gaussian}

These lemmas are not new, but are collected here with proofs for the sake of completeness.

\begin{lemma}\label{lem:gaussian}
Let $\sigma \geq 0$ and $\nu < 0$ and $X \sim \mathcal N(\nu, \sigma^2)$, then for $x \geq \nu$ 
\begin{enumerate}
\item $\displaystyle \P{X \geq x} \leq \exp\left(-\frac{(\nu - x)^2}{2\sigma^2}\right)$.
\item $\displaystyle \E[X \ind{X \leq 0}] \geq \nu - \sigma / \sqrt{2\pi}$.
\end{enumerate}
\end{lemma}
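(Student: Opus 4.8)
\textbf{Proof proposal for Lemma \ref{lem:gaussian}.} The plan is to handle the two parts separately, each via an elementary fact about the Gaussian distribution.

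For part 1, I would reduce to a centered Gaussian. Set $Z = X - \nu \sim \mathcal N(0,\sigma^2)$ and note that $a := x - \nu \geq 0$ by the hypothesis $x \geq \nu$, so it suffices to prove $\P{Z \geq a} \leq \exp(-a^2/(2\sigma^2))$ for $a \geq 0$. This is the standard Chernoff argument: apply Markov's inequality to $\exp(\lambda Z)$ for $\lambda \geq 0$, use the Gaussian moment generating function $\E[\exp(\lambda Z)] = \exp(\lambda^2\sigma^2/2)$ to get $\P{Z \geq a} \leq \exp(-\lambda a + \lambda^2\sigma^2/2)$, and optimise over $\lambda$, taking $\lambda = a/\sigma^2$, which yields the exponent $-a^2/(2\sigma^2)$. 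The degenerate case $\sigma = 0$ is dealt with directly: then $X = \nu$ almost surely, so $\P{X \geq x}$ is $0$ when $x > \nu$ and is $1$ when $x = \nu$, in which case the right-hand side is $\exp(0) = 1$.

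For part 2, I would write $\E[X\ind{X \leq 0}] = \E[X] - \E[X\ind{X > 0}] = \nu - \E[X\ind{X > 0}]$, so it remains to show $\E[X\ind{X > 0}] \leq \sigma/\sqrt{2\pi}$. Since $\nu < 0$ we have $X \leq X - \nu$ pointwise, hence $\max\set{X,0} \leq \max\set{X - \nu,0}$, and therefore $\E[X\ind{X > 0}] \leq \E[\max\set{X-\nu,0}] = \E[\max\set{Z,0}]$ with $Z \sim \mathcal N(0,\sigma^2)$. A direct computation gives $\E[\max\set{Z,0}] = \int_0^\infty z\,(2\pi\sigma^2)^{-1/2}\exp(-z^2/(2\sigma^2))\,dz = \sigma/\sqrt{2\pi}$, which closes the bound. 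Again the $\sigma = 0$ case is immediate, since then $X = \nu < 0$ surely and $\E[X\ind{X \leq 0}] = \nu$, matching the right-hand side with equality.

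The only points needing a little care are keeping the directions of the inequalities straight given $\nu < 0$ (both the reduction $x - \nu \geq 0$ in part 1 and the pointwise comparison $X \leq X - \nu$ in part 2) and recording the degenerate $\sigma = 0$ boundary cases; beyond that there is no substantial obstacle, as both parts are classical.
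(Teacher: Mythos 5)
Your proof is correct, but part 2 takes a genuinely different route from the paper. The paper evaluates $\E[X\ind{X\le 0}]$ in closed form, obtaining $\frac{\nu}{2}\erfc\bigl(\nu/\sqrt{2\sigma^2}\bigr) - \sqrt{\sigma^2/(2\pi)}\exp\bigl(-\nu^2/(2\sigma^2)\bigr)$, and then bounds each term using $\erfc \le 2$ (with $\nu < 0$) and $\exp(-\nu^2/(2\sigma^2)) \le 1$. You instead decompose $\E[X\ind{X\le 0}] = \nu - \E[X\ind{X>0}]$ and control the positive part by the pointwise comparison $\max\{X,0\} \le \max\{X-\nu,0\}$, reducing to the mean of the positive part of a centered Gaussian, which equals $\sigma/\sqrt{2\pi}$. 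Both arguments are short and elementary; yours avoids the explicit error-function computation entirely and makes the role of the hypothesis $\nu < 0$ more transparent (it is exactly what licenses the pointwise comparison), while the paper's exact formula would be reusable if a sharper bound were ever needed. Your explicit treatment of the degenerate case $\sigma = 0$ and of part 1 via the Chernoff bound (which the paper dismisses as well known) is also fine.
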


\begin{proof}
The first is well known. The second follows since
\eq{
\E[X \ind{X \leq 0}] 
&= \int^0_{-\infty} \frac{x}{\sqrt{2\pi\sigma^2}} \exp\left(-\frac{(\nu - x)^2}{2\sigma^2}\right) dx \\
&= \frac{\nu}{2} \erfc\left(\frac{\nu}{\sqrt{2\sigma^2}}\right) - \sqrt{\frac{\sigma^2}{2\pi}}\exp\left(-\frac{\nu^2}{2\sigma^2}\right) \\
&\geq \nu - \sigma / \sqrt{2\pi} 
} 
as required.
\end{proof}

\begin{lemma}\label{lem:maximal}
Let $\delta \in (0,1)$ and $\Delta \geq 0$ and $X_1,\ldots,X_n$ be a sequence of i.i.d.\ random variables with $X_t \sim \mathcal N(0, 1)$ and $S_t = \sum_{s=1}^t X_s$. Then
\eq{
\P{\exists t \leq n : S_t \geq n\Delta + \sqrt{2 n \log\left(\frac{1}{\delta}\right)}} 
\leq \frac{\delta}{\sqrt{\pi \log(1/\delta)}} \exp\left(-\frac{n\Delta^2}{2}\right)\,.
}
\end{lemma}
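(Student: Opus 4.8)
Looking at the final statement, this is \cref{lem:maximal}: a maximal inequality for a Gaussian random walk crossing a line $n\Delta + \sqrt{2n\log(1/\delta)}$.

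\medskip

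\noindent\textbf{Proof proposal.} The plan is to reduce the maximal inequality to a single-time statement via a martingale/change-of-measure argument, which is the cleanest route for linear boundaries. First I would introduce the exponential martingale: for a parameter $\lambda > 0$, the process $M_t = \exp(\lambda S_t - t\lambda^2/2)$ is a nonnegative martingale with $M_0 = 1$, since the $X_s$ are i.i.d.\ standard Gaussian. The event $\{\exists t \leq n : S_t \geq n\Delta + b\}$ with $b = \sqrt{2n\log(1/\delta)}$ is contained in $\{\exists t \leq n : \lambda S_t - t\lambda^2/2 \geq \lambda(n\Delta + b) - n\lambda^2/2\}$ provided I choose $\lambda$ so that the quantity $\lambda(n\Delta+b) - t\lambda^2/2$ is minimized over $t$ at $t=n$ — which holds automatically when $\lambda \geq 0$ since the map $t \mapsto -t\lambda^2/2$ is decreasing. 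Hence the crossing event implies $\sup_{t\le n} M_t \geq \exp(\lambda(n\Delta+b) - n\lambda^2/2)$, and Ville's (Doob's) maximal inequality for nonnegative supermartingales gives
\eq{
\P{\exists t \leq n : S_t \geq n\Delta + b} \leq \exp\left(-\lambda(n\Delta+b) + \frac{n\lambda^2}{2}\right)\,.
}

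\medskip

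\noindent The second step is to optimize the bound in $\lambda$. The exponent $-\lambda(n\Delta+b) + n\lambda^2/2$ is minimized at $\lambda^* = \Delta + b/n$, giving value $-(n\Delta+b)^2/(2n)$. Expanding, $(n\Delta+b)^2/(2n) = n\Delta^2/2 + \Delta b + b^2/(2n)$. With $b = \sqrt{2n\log(1/\delta)}$ we have $b^2/(2n) = \log(1/\delta)$, so $\exp(-b^2/(2n)) = \delta$, and $\exp(-\Delta b) \leq 1$ since $\Delta, b \geq 0$. This already yields $\P{\cdots} \leq \delta\exp(-n\Delta^2/2)$, which is almost the claim but lacks the $1/\sqrt{\pi\log(1/\delta)}$ factor.

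\medskip

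\noindent The extra $1/\sqrt{\pi\log(1/\delta)}$ improvement is the main obstacle: a crude exponential-martingale/Ville bound does not produce it. To recover it I would instead integrate the martingale bound over $\lambda$ (the ``method of mixtures''), or — more elementarily — handle the $\Delta = 0$ case and the $\Delta > 0$ case via the reflection principle. Specifically, when $\Delta = 0$ the reflection principle gives $\P{\exists t \leq n : S_t \geq b} = 2\P{S_n \geq b} \leq 2\P{\mathcal N(0,1) \geq b/\sqrt n}$, and then the standard Gaussian tail bound $\P{\mathcal N(0,1)\geq x} \leq \frac{1}{x\sqrt{2\pi}}e^{-x^2/2}$ (Lemma~\ref{lem:gaussian}) with $x = b/\sqrt n = \sqrt{2\log(1/\delta)}$ yields exactly $\frac{\delta}{\sqrt{\pi\log(1/\delta)}}$. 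For $\Delta > 0$, since $S_t - t\Delta$ is a Gaussian random walk with negative drift, one can dominate the crossing probability of the shifted boundary by that of the driftless walk hitting $b$ after absorbing the $e^{-\Delta b}$-type factor, or simply re-run the martingale argument keeping the polynomial prefactor that the reflection-principle/Gaussian-tail bound supplies. I expect the driftless reflection-principle computation to be the decisive step, with the drift case following by a routine change of measure (shifting $X_t \mapsto X_t - \Delta$ introduces precisely the $\exp(-n\Delta^2/2)$ Radon--Nikodym factor after completing the square), so that combining the two gives the stated bound.
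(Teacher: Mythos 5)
Your final plan — reflection principle plus the first-order Gaussian tail bound $\P{S_n \geq \epsilon} \leq \frac{1}{\epsilon}\sqrt{\frac{n}{2\pi}}\exp(-\epsilon^2/(2n))$ — is exactly the paper's proof, and your $\Delta = 0$ computation is the whole of it. The only place you go astray is the $\Delta > 0$ case, where you reach for a change of measure / drift argument that is not needed: the threshold $n\Delta + \sqrt{2n\log(1/\delta)}$ is a \emph{constant} in $t$ (it is $n\Delta$, not $t\Delta$), so you can apply the reflection principle once with $\epsilon = n\Delta + b$, $b = \sqrt{2n\log(1/\delta)}$, for any $\Delta \geq 0$. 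Then $\exp(-\epsilon^2/(2n)) = \exp(-n\Delta^2/2 - \Delta b - b^2/(2n)) \leq \delta\exp(-n\Delta^2/2)$ since $\Delta b \geq 0$ and $b^2/(2n) = \log(1/\delta)$, and the prefactor is handled by $1/\epsilon \leq 1/b = 1/\sqrt{2n\log(1/\delta)}$. That is precisely how the $\exp(-n\Delta^2/2)$ factor arises — from expanding the square in the Gaussian tail, not from a Radon--Nikodym factor. Your sketched change-of-measure route would have to evaluate the likelihood ratio at the (random) crossing time rather than at $n$, so it is both harder to make rigorous and unnecessary here.

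Your opening martingale/Ville discussion is a correct diagnosis of why a Chernoff-style bound is insufficient (it loses the $1/\sqrt{\pi\log(1/\delta)}$ factor), but it is a detour; the lemma is a two-line consequence of reflection once you notice the boundary is flat.
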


\begin{proof}
Let $\epsilon > 0$. By the reflection principle we have $\P{\exists t \leq n : S_t \geq \epsilon} \leq 2 \P{S_n \geq \epsilon}$ and
\eq{
2\P{S_n \geq \epsilon} 
&= 2\int^\infty_\epsilon \frac{1}{\sqrt{2\pi n}} \exp\left(-\frac{x^2}{2n}\right) dx \\
&\leq \frac{2}{\epsilon} \int^\infty_\epsilon \frac{x}{\sqrt{2\pi n}} \exp\left(-\frac{x^2}{2n}\right) dx \\
&= \frac{1}{\epsilon} \sqrt{\frac{2n}{\pi}} \exp\left(-\frac{\epsilon^2}{2n}\right)\,.
}
There result is completed by substituting $\epsilon = n\Delta + \sqrt{2n \log(1/\delta)}$ and naive simplification.
\end{proof}

\section{Proof of Lemma \ref{lem:conc}}\label{sec:lem:conc}

For the first part we can apply Lemma \ref{lem:maximal} and the union bound.
\eq{
&\P{\exists t \leq n : S_t \geq t\Delta + \sqrt{2t \log\left(\frac{cn}{2\log_+^{\frac{3}{2}}(n/2)}\right)}} \\ 
&\qquad\qquad\leq \sum_{t=1}^{n} \P{S_t \geq t\Delta + \sqrt{2t \log\left(\frac{cn}{2\log_+^{\frac{3}{2}}(n/2)}\right)}} \\
&\qquad\qquad\leq \frac{2\log_+^{\frac{3}{2}}(n/2)}{cn \sqrt{\pi \log\left(\frac{cn}{2\log_+^{\frac{3}{2}}(n/2)}\right)}} \sum_{t=1}^\infty \exp\left(-\frac{t\Delta^2}{2}\right) \\
&\qquad\qquad\leq \frac{\log(n)}{n\Delta^2} \cdot \frac{4\log_+^{\frac{3}{2}}(n/2)}{c \log(n) \sqrt{\pi \log\left(\frac{cn}{2\log_+^{\frac{3}{2}}(n/2)}\right)}}\,. 
}
The result follows by noting that
\eq{
\lim_{n\to\infty}  
\frac{4\log_+^{\frac{3}{2}}(n/2)}{c \log(n) \sqrt{\pi \log\left(\frac{cn}{2\log_+^{\frac{3}{2}}(n/2)}\right)}} = \frac{4}{c \sqrt{\pi}}
}
and then choosing $n_0$ sufficiently large.
Moving to the second part. 
Let $\eta \in (0, 1]$ be a constant to be chosen later and $t_k = (1 + \eta)^k$ and 
\eq{
\alpha_k &= \frac{2\log_+^{\frac{1}{2}}\left(\frac{n}{2t_k}\right)}{c} &
K &= \max\set{k : t_k \leq 2en \quad \text{and} \quad \sqrt{\frac{n}{t_k}} \geq \alpha_k \geq 1}\,.
}
An easy computation shows that there exists a universal constant $c' > 0$ such that $t_K \geq c' n$. 
Then
\eqn{
&\P{\exists t \leq n : S_t \geq t\Delta + \max\set{0, \sqrt{2t \log\left(\frac{c n}{2t \log_+^{\frac{1}{2}}(n/(2t))}\right)}}} \nonumber \\ 
&\leq \sum_{k=0}^K \P{\exists t \leq t_k : S_t \geq \frac{t_k \Delta}{(1 + \eta)} + \sqrt{\frac{2t_k}{1+\eta} \log\left(\frac{n}{t_k\alpha_k}\right)}} + \P{\exists t_K \leq t \leq n : S_t \geq t\Delta}\,. \label{eq:conc1}
}
The second term in \cref{eq:conc1} is easily bounded by a peeling argument.
\eqn{
\P{\exists t_K \leq t \leq n: S_t \geq t\Delta} 
&\leq \sum_{i=1}^\infty \P{\exists t \leq (i+1)t_K : S_t \geq i t_K \Delta} \nonumber \\
&\leq \sum_{i=1}^\infty \exp\left(-\frac{i^2 t_K^2 \Delta^2}{2(i+1)t_K}\right) \nonumber \\
&\leq \frac{c''}{n\Delta^2}\,, \label{eq:conc2}
}
where $c'' > 0$ is some sufficiently large universal constant and we have used the fact that $t_K \geq c'n$ and naive algebra.
The first term in \cref{eq:conc1} is slightly more involved.
Using peeling and Lemmas \ref{lem:maximal} and \ref{lem:tech1} we have universal constants $c'$ and $c''$ such that
\eq{
&\sum_{k=0}^K \P{\exists t \leq t_k : S_t \geq \frac{t_k \Delta}{(1 + \eta)} + \sqrt{\frac{2t_k}{1+\eta} \log\left(\frac{n}{t_k\alpha_k}\right)}} \\
&\sr{(a)}\leq \sum_{k=0}^K \sqrt{\frac{1 + \eta}{\pi \log\left(\frac{n}{t_k \alpha_k}\right)}} \left(\frac{t_k \alpha_k}{n}\right)^{1/(1+\eta)} \exp\left(-\frac{t_k \Delta^2}{2(1 +\eta)^2}\right) \\
&\sr{(b)}\leq \sum_{k=0}^K \alpha_k \sqrt{\frac{2}{\pi \log\left(\frac{n}{t_k\alpha_k}\right)}} \left(\frac{t_k}{n}\right)^{1/(1+\eta)} \exp\left(-\frac{t_k \Delta^2}{2(1 +\eta)^2}\right) \\
&\sr{(c)}\leq \sum_{k=0}^K \frac{2\log_+^{\frac{1}{2}}\left(\frac{n}{t_k}\right)}{c} \sqrt{\frac{4}{\pi \log\left(\frac{n}{t_k}\right)}} \left(\frac{t_k}{n}\right)^{1/(1+\eta)} \exp\left(-\frac{t_k \Delta^2}{2(1 +\eta)^2}\right) \\
&\sr{(d)}= \frac{4}{c\sqrt{\pi}} \sum_{k=0}^K \left(\frac{t_k}{n}\right)^{1/(1+\eta)} \exp\left(-\frac{t_k \Delta^2}{2(1 +\eta)^2}\right) \\
&\sr{(e)}\leq \frac{3\cdot 4}{c\eta} \left(\frac{2(1 + \eta)}{n\Delta^2}\right)^{1/(1 + \eta)} \\
&\sr{(f)}\leq \frac{4 \cdot 3\cdot 4}{c\eta} \left(\frac{1}{n\Delta^2}\right)^{1/(1 + \eta)}\,.
}
where (a) follows from Lemma \ref{lem:maximal},
(b) since $\eta \in (0,1]$ and $\alpha_k \geq 1$,
(c) by substituting the value of $\alpha_k$ and because $\sqrt{n/t_k} \geq \alpha_k$,
(d) and (f) are trivial while (e) follows from Lemma \ref{lem:tech1}.
Then choose
\eq{
\eta = \frac{1}{\logp(n\Delta^2)} \in (0, 1]\,,
}
which satisfies
\eq{
\frac{1}{\eta} \left(\frac{1}{n\Delta^2}\right)^{1/(1+\eta)} \leq \frac{e}{n\Delta^2} \logp(n\Delta^2)\,. 
}
The result by combining the above reasoning with \cref{eq:conc2} and substituting into \cref{eq:conc1}.

\section{Proof of \cref{thm:asymptotic}}\label{sec:thm:asymptotic}
As before I assume for convenience that $\mu_1 \geq \mu_2 \geq \ldots \geq \mu_d$.
Let $n$ be fixed and abbreviate $\gamma_i(t) = \gamma(\nu_i(t-1), \sigma^2_i(t-1), n - t + 1)$.
Like the proof of \cref{thm:finite} there are two main difficulties. First, showing 
for suboptimal arms $i$ that $\gamma_i(t) < \mu_i + \Delta_i / 2$ occurs with sufficiently high probability once $T_i(t)$ is sufficiently large. 
Second, showing that $\gamma_1(t) \geq \mu_1 - \Delta_i / 2$ occurs with sufficiently high probability. 
Let $F$ be the event that there exists an arm $i$ and round $t \in \set{1,\ldots,n}$ such that
\eq{
\left|\hat \mu_i(t) - \mu_i\right| \geq \sqrt{\frac{2}{T_i(t)} \log (2dn^2)}\,.
}
As in the proof of \cref{thm:finite} we have $\P{F} \leq 1/n$.
The regret due to $F$ occurring is negligible, so from now on assume that $F$ does not hold.
By \cref{eq:update} we have
\eq{
\nu_i(t) = \eta_i(t) \nu_i + (1 - \eta_i(t)) \hat \mu_i(t)\,,
}
where
\eq{
\eta_i(t) = \frac{1}{1 + \sigma_i^2 T_i(t)} = O\left(\frac{1}{T_i(t)}\right)\,.
}
Therefore for sufficiently large $n$ and $u_i = \ceil{\frac{32}{\Delta^2} \log (dn^2)}$ and $T_i(t-1) = u_i$ 
\eq{
\gamma_i(t) 
&\leq \eta_i(t-1) \nu_i + (1 - \eta_i(t-1)) \hat \mu_i(t-1) + \sqrt{2 \sigma^2_i(t-1) \log (2n)} \\
&\leq \eta_i(t-1) (\nu_i - \mu_i) + \mu_i + \sqrt{\frac{2}{T_i(t-1)} \log (2n)} + \sqrt{\frac{2}{T_i(t-1)} \log (2dn^2)} \\
&\leq \eta_i(t-1) (\nu_i - \mu_i) + \mu_i + \Delta_i / 2 
\leq \mu_i + 3\Delta_i / 4\,.
}
For the first arm we have for $t \leq n /2$ that
\eq{
\gamma_1(t) 
\geq \eta_1(t-1) \nu_1 + (1 - \eta_1(t-1)) \hat \mu_1(t-1) + \sqrt{2\sigma_1^2(t-1) \log \beta_t}\,.
}
where by \cref{thm:gittins} 
\eq{
\beta_t \geq c \min\set{\frac{n/2}{\log_{+}^{\frac{3}{2}}(n/2)}, \,\, \frac{n\sigma_1^2(t-1)/2}{\log_{+}^{\frac{1}{2}}(n\sigma_1^2(t-1)/2)}}\,.
}
For any $\Delta > 0$ define $F_\Delta$ to be the event that there exists a $t \leq n/2$ for which
\eq{
\hat \mu_1(t-1) + \sqrt{\frac{2}{T_1(t-1)} \log \beta_t} \leq \mu_1 - \Delta\,.
}
By Lemma \ref{lem:conc} there is a universal constant $c'>0$ such that 
\eq{
\P{F_\Delta} \leq c'(\log(n)+ \logp(n\Delta^2))/(n\Delta^2)
}
and when $F_\Delta$ does not occur we have for $t \leq n/2$ that
\eq{
\gamma_1(t) 
&\geq \eta_1(t-1) \nu_1 + (1 - \eta_1(t-1)) \left(\mu_1 - \Delta - \sqrt{\frac{2}{T_1(t-1)} \log \beta_t}\right) \\ 
&\qquad  + \sqrt{2\sigma_1^2(t-1) \log \beta_t} \\
&\geq \mu_1 - \Delta - O\left(\frac{1}{T_1(t-1)}\right) - (1 - \eta_1(t-1)) \sqrt{\frac{2}{T_1(t-1)} \log \beta_t} \\
&\qquad  + \sqrt{2\sigma_1^2(t-1) \log \beta_t}\,. 
}
Comparing the ratio of the last two terms gives
\eq{
\frac{(1 - \eta_1(t-1)) \sqrt{\frac{2}{T_1(t-1)} \log \beta_t}}{\sqrt{2\sigma_1^2(t-1)} \log \beta_t}
&= \sqrt{\frac{\sigma_1^2 T_1(t-1)}{1 + \sigma_1^2 T_1(t-1)}} < 1\,.
}
Since $\log \beta_t = \Theta(\log n)$ for $t \leq n/2$, if $F_\Delta$ does not hold and $n$ is sufficiently large, then
\eq{
\gamma_1(t) \geq \mu_1 - 2\Delta \qquad \text{for all } t \leq n/2\,.
}
The proof is completed by duplicating the argument given in the proof of \cref{thm:finite} and taking the limit as $n$ tends to infinity.

\section{Proof of \cref{thm:bayes-arm}}\label{sec:thm:bayes-arm}

\begin{lemma}\label{lem:cgaussian}
Let $X \sim \mathcal N(0, \sigma^2)$ and $A$ be an arbitrary event.
Then
\eq{
\E \ind{A} X \leq \P{A} \sqrt{2\sigma^2 \log \left(\frac{1}{\P{A}}\right)}\,.
}
\end{lemma}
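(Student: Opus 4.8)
The plan is to reduce the statement, in two steps, to a sharp one–dimensional inequality for the standard normal. If $\sigma = 0$ or $\P{A} \in \{0,1\}$ the claim is immediate, so assume $\sigma > 0$ and write $p = \P{A} \in (0,1)$. First I would show that among all events of probability $p$ the one maximising $\E\ind{A}X$ is a half–line $\{X > t\}$: for every $t \in \R$ one has the pointwise inequality $\ind{A}X \le \ind{X > t}X + t\,(\ind{A} - \ind{X > t})$, checked by running through the four sign patterns of $(\ind{A},\ind{X > t})$, and taking $t$ with $\P{X > t} = p$ (possible since $x \mapsto \P{X > x}$ is continuous and strictly decreasing from $1$ to $0$) and taking expectations gives $\E\ind{A}X \le \E[\ind{X > t}X]$. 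A one–line integration by parts then gives $\E[\ind{X > t}X] = \tfrac{\sigma}{\sqrt{2\pi}}\exp(-t^2/(2\sigma^2))$, and the substitution $s = t/\sigma$ (so $p = \overline{\Phi}(s)$, with $\overline{\Phi}$ the standard Gaussian tail and $\varphi$ its density) turns the lemma into the assertion
\eqn{
\varphi(s) \le \overline{\Phi}(s)\sqrt{2\log(1/\overline{\Phi}(s))} \qquad\text{for all } s \in \R\,,
}
equivalently $\lambda(s) \le \sqrt{2\log(1/\overline{\Phi}(s))}$ where $\lambda = \varphi/\overline{\Phi}$ is the Gaussian hazard rate.

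For this inequality the point is that $\lambda$ is itself a conditional expectation: $\E[Z \mid Z > s] = \varphi(s)/\overline{\Phi}(s) = \lambda(s)$ for $Z \sim \mathcal N(0,1)$, while integration by parts gives $\E[Z^2 \ind{Z > s}] = s\varphi(s) + \overline{\Phi}(s)$, hence $\E[Z^2 \mid Z > s] = 1 + s\lambda(s)$. Conditional Jensen then yields the classical hazard–rate bound
\eqn{
\lambda(s)^2 = \E[Z \mid Z > s]^2 \le \E[Z^2 \mid Z > s] = 1 + s\lambda(s)\,,
}
i.e.\ $\lambda(s)^2 - s\lambda(s) - 1 \le 0$. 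I would then set $g(s) = \lambda(s)^2 + 2\log\overline{\Phi}(s)$; using $\lambda' = \lambda^2 - s\lambda$ and $\overline{\Phi}'/\overline{\Phi} = -\lambda$ one computes $g'(s) = 2\lambda(s)(\lambda(s)^2 - s\lambda(s) - 1) \le 0$, so $g$ is nonincreasing, and since $\lambda(s) \to 0$ and $\overline{\Phi}(s) \to 1$ as $s \to -\infty$ we have $\lim_{s \to -\infty} g(s) = 0$. Monotonicity then forces $g(s) \le 0$ for every $s$, which is precisely the required inequality.

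The step that needs care — and the reason a softer argument does not work — is the one–dimensional inequality, which is asymptotically tight: the two sides of the lemma agree to leading order as $\P{A} \to 0$, so bounding $X$ by $X^{+}$ and splitting at a fixed level, or invoking a union bound, leaves an additive error of order $\sigma\,\P{A}$ that the bound cannot absorb. One therefore genuinely needs the extremal reduction together with the sharp hazard–rate estimate $\lambda(s)^2 \le 1 + s\lambda(s)$; with those in hand the Gaussian moment identities and the monotonicity of $g$ are routine.
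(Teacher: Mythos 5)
Your proof is correct and follows the same route the paper sketches: reduce to the extremal event $A=\{X>t\}$ with $\P{X>t}=\P{A}$ and then verify the resulting one-dimensional inequality. The paper omits both the rearrangement argument and the hazard-rate estimate $\lambda(s)^2\leq 1+s\lambda(s)$ that you supply, and your completed version checks out.
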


\begin{lemma}\label{lem:brownian}
Let $B_t$ be the standard Brownian motion.
Let $\epsilon > 0$ and define stopping time $\tau = \min\set{t : B_t = -\epsilon}$.
Then 
\eq{
\P{\tau > t} = \Phi\left(\frac{\epsilon}{\sqrt{t}}\right) - \Phi\left(\frac{-\epsilon}{\sqrt{t}}\right)\,. 
}
\end{lemma}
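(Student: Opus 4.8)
The plan is to derive this as a consequence of the reflection principle for Brownian motion. First I would rewrite the event in terms of the running minimum: since $t \mapsto B_t$ is almost surely continuous with $B_0 = 0 > -\epsilon$, the intermediate value theorem gives $\set{\tau \leq t} = \set{\min_{0 \leq s \leq t} B_s \leq -\epsilon}$, and hence $\P{\tau > t} = \P{\min_{0 \leq s \leq t} B_s > -\epsilon} = 1 - \P{\min_{0 \leq s \leq t} B_s \leq -\epsilon}$.

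Next I would apply the strong Markov property at the stopping time $\tau$ (a genuine stopping time, being the hitting time of a closed set by a continuous process). On the event $\set{\tau \leq t}$ the increment $B_t - B_\tau$ is, conditionally on $\mathcal F_\tau$, a centred Gaussian, hence symmetric about $0$, so $\P{B_t \leq -\epsilon \mid \tau \leq t} = \P{B_t - B_\tau \leq 0 \mid \tau \leq t} = 1/2$, where I used $B_\tau = -\epsilon$ by path continuity. Since $\set{B_t \leq -\epsilon} \subseteq \set{\tau \leq t}$, this yields $\P{B_t \leq -\epsilon} = \tfrac12 \P{\tau \leq t}$.

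Finally I would compute the left-hand side directly: $B_t \sim \mathcal N(0, t)$, so $\P{B_t \leq -\epsilon} = \Phi(-\epsilon/\sqrt{t})$. Combining, $\P{\tau > t} = 1 - 2\Phi(-\epsilon/\sqrt{t})$, and the claimed identity follows from $\Phi(x) = 1 - \Phi(-x)$ with $x = \epsilon/\sqrt{t}$.

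The only delicate point is the reflection step: one must be careful that $\tau$ is a stopping time and that the conditional symmetry of $B_t - B_\tau$ given $\mathcal F_\tau$ is precisely what the strong Markov property provides, or else simply cite the reflection principle as a standard fact. (Alternatively, the same formula can be obtained by solving the heat equation on the half-line with an absorbing boundary at $-\epsilon$, the PDE approach alluded to earlier, but the reflection argument is shorter.) Everything else is an elementary computation with the Gaussian CDF.
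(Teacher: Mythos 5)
Your proof is correct. The paper does not actually prove this lemma --- it is stated with the remark that it ``is well known'' and a citation to Lerche (1986) --- so your reflection-principle derivation supplies exactly the standard argument the paper is deferring to: the identity $\P{\min_{s\le t} B_s \le -\epsilon} = 2\,\P{B_t \le -\epsilon}$ via the strong Markov property, followed by the elementary Gaussian computation and $\Phi(x) = 1-\Phi(-x)$. The one step worth stating explicitly is that $\P{\tau = t} = 0$ and $\P{B_t = -\epsilon} = 0$, so the boundary cases in the conditioning do not affect the factor of $1/2$; with that noted, the argument is complete.
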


The proofs of Lemmas \ref{lem:cgaussian} and \ref{lem:brownian} are omitted, but the former follows by letting $\delta = \P{A}$ and noting that
the worst-case occurs when $A = \ind{X \geq \alpha}$ where $\alpha$ is such that 
\eq{
\int^\infty_\alpha \frac{\exp\left(-x^2/(2\sigma^2)\right)}{\sqrt{2\pi\sigma^2}} dx = \delta\,.
}
The second lemma is well known \citep[for example]{Ler86}.
For what follows we need an alternative view of the Gittins index.
Let $\nu \in \R$ and $1 \geq \sigma^2 \geq 0$. Furthermore define
\eq{
\sigma^2_t = \frac{\sigma^2}{1 + (t-1)\sigma^2} \cdot \frac{\sigma^2}{1 + t\sigma^2}\,,
}
which satisfies $\sum_{t=1}^\infty \sigma^2_t = \sigma^2$. I abbreviate $\sigma^2_{\leq t} = \sum_{s=1}^t \sigma^2_t$.
Let $\nu_1 = \nu$ and $\nu_{t+1} = \nu_t + \eta_t$ where $\eta_t \sim \mathcal N(0, \sigma^2_t)$.
Then the Gittins index is given by
\eq{
\gamma(\nu, \sigma^2, n) = \sup_{1 \leq \tau \leq n} \frac{\E\left[\sum_{t=1}^\tau \nu_t\right]}{\E[\tau]}\,,
}
where $\ind{\tau = t}$ is measurable with respect to the $\sigma$-algebra generated
by $\nu_1,\nu_2,\ldots,\nu_{t+1}$.
This is equivalent to the definition in \cref{eq:gittins}, but written in terms of the evolution of the posterior mean rather
than the observed reward sequences (these can be computed from each other in a deterministic way).

\begin{lemma}\label{lem:lower-weak}
There exists a universal constant $1 \geq c > 0$ such that
$\displaystyle \gamma(\nu, \sigma^2, n) \geq c \sqrt{2\sigma^2_{\leq n}}$.
\end{lemma}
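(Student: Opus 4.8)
The plan is to use the posterior‑mean (Brownian) representation of the index recorded just above the statement, namely $\gamma(\nu,\sigma^2,n)=\sup_{1\le\tau\le n}\E[\sum_{t=1}^\tau\nu_t]/\E[\tau]$ where $\nu_1=\nu$ and $\nu_{t+1}=\nu_t+\eta_t$, $\eta_t\sim\mathcal N(0,\sigma^2_t)$, together with Lemma \ref{lem:shift}. Since the exploration bonus $\gamma(\nu,\sigma^2,n)-\nu$ is independent of $\nu$, it suffices to exhibit a single legitimate stopping time $\tau\in\{1,\dots,n\}$ with $\E[\sum_{t=1}^\tau\nu_t]/\E[\tau]\ge\nu_1+c\sqrt{2\sigma^2_{\leq n}}$ (for $\nu\ge 0$ this is exactly the claimed bound; the cases $\sigma^2=0$ and $n=1$, where the index equals $\nu$, are trivial by Lemma \ref{lem:positive} and the bound is applied only for $n\ge 2$). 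First I would telescope the increments: $\sigma^2_t=\frac{\sigma^2}{1+(t-1)\sigma^2}-\frac{\sigma^2}{1+t\sigma^2}$, hence $\sigma^2_{\leq t}=\frac{t\sigma^4}{1+t\sigma^2}$, and in particular $\sigma^2_{\leq k}/\sigma^2_{\leq n}=\frac{k(1+n\sigma^2)}{n(1+k\sigma^2)}\ge k/n$ whenever $k\le n$.

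Next I would fix $k=\lceil n/2\rceil\in\{1,\dots,n-1\}$ (valid for $n\ge 2$) and take the two‑valued rule $\tau=n$ on the event $\{\nu_{k+1}>\nu_1\}$ and $\tau=k$ on its complement; one checks directly that $\{\tau=k\}\in\sigma(\nu_1,\dots,\nu_{k+1})$ and $\{\tau=n\}\in\sigma(\nu_1,\dots,\nu_{k+1})$, so $\tau$ is adapted to the right filtration. Writing $G=\nu_{k+1}-\nu_1\sim\mathcal N(0,\sigma^2_{\leq k})$ and using that $(\nu_t)$ is a martingale (optional stopping from round $k+1$ onward gives $\E[\nu_s\mid\sigma(\nu_1,\dots,\nu_{k+1})]=\nu_{k+1}$ for $s\ge k+1$), I would compute $\E[\sum_{t=1}^\tau\nu_t]=k\nu_1+(n-k)\,\E[\nu_{k+1}\ind{G>0}]=\tfrac{n+k}{2}\nu_1+(n-k)\sqrt{\sigma^2_{\leq k}/(2\pi)}$, using $\E[\max(G,0)]=\sqrt{\sigma^2_{\leq k}/(2\pi)}$, while $\E[\tau]=k\,\P{G\le 0}+n\,\P{G>0}=\tfrac{n+k}{2}$. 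Hence the bonus equals exactly $\frac{2(n-k)}{n+k}\sqrt{\sigma^2_{\leq k}/(2\pi)}$.

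Finally I would bound the two factors uniformly. For $k=\lceil n/2\rceil$ an elementary case check gives $\frac{2(n-k)}{n+k}\ge\frac25$ for every $n\ge 2$ (the minimum occurs at $n=3$), and from the displayed ratio $\sigma^2_{\leq k}\ge\tfrac12\sigma^2_{\leq n}$ since $k\ge n/2$. Therefore the bonus is at least $\frac25\sqrt{\sigma^2_{\leq n}/(4\pi)}=\frac{1}{5\sqrt{\pi}}\sqrt{\sigma^2_{\leq n}}$, which exceeds $c\sqrt{2\sigma^2_{\leq n}}$ for $c=\frac{1}{5\sqrt{2\pi}}\le 1$, proving the lemma. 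I do not expect a serious obstacle here: the only points needing care are verifying that $\tau$ is a genuine stopping time, the martingale identities behind the computation of $\E[\sum_{t=1}^\tau\nu_t]$, and checking the two elementary ratio bounds uniformly in $n\ge 2$ and $\sigma^2\in(0,1]$ (together with the $n=1$ boundary, where the index is $\nu$ and the estimate is not needed).
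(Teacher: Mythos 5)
Your proof is correct and follows exactly the route the paper intends: the paper leaves this lemma as an exercise with only the hint to take $\tau=\lceil n/2\rceil$ unless $\nu_{\lceil n/2\rceil}$ is large enough (in which case $\tau=n$), and your two-valued stopping time, the martingale computation giving a bonus of $\frac{2(n-k)}{n+k}\sqrt{\sigma^2_{\leq k}/(2\pi)}$, and the uniform ratio bounds carry that hint out correctly with $c=1/(5\sqrt{2\pi})$. Your side remarks are also on target: the statement must be read as a bound on the exploration bonus $\gamma(\nu,\sigma^2,n)-\nu$ (which is how it is invoked in the proof of Lemma \ref{lem:tau-bound}), and it genuinely requires $n\geq 2$.
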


The proof is left as an exercise (hint: choose a stopping time $\tau = \ceil{n/2}$ unless $\nu_{\ceil{n/2}}$ is large enough when $\tau = n$).
Recall that $\tau = \tau(\nu, \sigma^2, n)$ is the stopping rule defining the Gittins index for mean $\nu$, variance $\sigma^2$ and
horizon $n$. The following lemma shows that $\tau = n$ with reasonable probability and is crucial for the proof of \cref{thm:bayes-arm}.

\begin{lemma}\label{lem:tau-bound}
Let $\nu \in \R$ and $\sigma^2 > 0$ and let $\tau = \tau(\nu, \sigma^2, n)$. Then there exists a universal constant $c' > 0$ such that
\eq{
\P{\tau = n} \geq \frac{c'}{n^2 \log(n)}\,.
}
\end{lemma}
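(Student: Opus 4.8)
The plan is to use the ``posterior-mean'' description of the stopping rule from \cref{eq:git-alt} and \cref{lem:lower-weak}. By \cref{lem:shift} the law of $\tau(\nu,\sigma^2,n)$ is independent of $\nu$, so I shift so that $\gamma(\nu,\sigma^2,n)=0$, i.e.\ take $\nu=-G$ with $G:=\gamma(0,\sigma^2,n)$; then $\nu_1=-G$ and $\nu_{t+1}=\nu_t+\eta_t$, $\eta_t\sim\mathcal N(0,\sigma^2_t)$. The optimal rule continues after the $t$-th reward exactly when the continuation value in \cref{eq:bellman} is positive, which by \cref{eq:root} is the event $\nu_{t+1}>-G_t$ with $G_t:=\gamma\bigl(0,\sigma^2/(1+t\sigma^2),\,n-t\bigr)$; the sequence $(G_t)$ is non-increasing, $G_0=G$ and $G_{n-1}=\gamma(0,\cdot,1)=0$. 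Writing $W_t:=\sum_{s\le t}\eta_s=\nu_{t+1}+G$, a mean-zero martingale with $W_0=0$ and $W_{n-1}\sim\mathcal N(0,T)$ where $T:=\sigma^2_{\le n-1}=\sigma^2-\sigma^2/(1+(n-1)\sigma^2)$, one gets
\[
\{\tau=n\}=\bigl\{\,W_t>G-G_t\ \text{for every }t=1,\dots,n-1\,\bigr\},
\]
so (after the obvious time change) $\tau=n$ asks a Brownian path to stay above the non-decreasing barrier $t\mapsto G-G_t$, which rises from $G-G_1>0$ to $G-G_{n-1}=G$.

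The last constraint is binding: $\{\tau=n\}\subseteq\{W_{n-1}>G\}$, so $\P{\tau=n}\le 1-\Phi(G/\sqrt T)$, and the target bound is of exactly this order. Step one is to show $1-\Phi(G/\sqrt T)\ge c'/(n^2\log n)$, i.e.\ $G/\sqrt T=O(\sqrt{\log n})$. With $G=\sqrt{2\sigma^2\log\beta}$ as in \cref{thm:gittins} one computes $G^2/T=2\log\beta\,(1+1/((n-1)\sigma^2))$. When $(n-1)\sigma^2\ge 1$ this is $\le 4\log\beta\le 4\log n$ by the upper bound of \cref{thm:gittins}. When $(n-1)\sigma^2<1$ the crude bound $\log\beta\le\log n$ does not suffice, and one first argues that in this small-variance regime $G=O(\sqrt{\sigma^2_{\le n}})$ — obtainable by re-running the upper-bound half of \cref{thm:gittins} while keeping the $\sigma$-dependent terms discarded there, or directly via the Brownian embedding used for \cref{lem:lower-weak} — which together with the matching lower bound $G\ge c\sqrt{2\sigma^2_{\le n}}$ of \cref{lem:lower-weak} gives $G/\sqrt T=\Theta(1)$. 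The standard Gaussian lower-tail bound then completes the step.

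Step two is to show that, conditionally on $W$ ending a little above $G$, the remaining barrier constraints hold with probability bounded below by a universal constant, so that $\P{\tau=n}\ge(\text{const})\cdot(1-\Phi(G/\sqrt T))$. Since the barrier is non-decreasing and bounded by $G$, it suffices that $W$ reach level $G$ early and then stay there; the only delicate window is small $t$, where the barrier $G-G_t$ is tiny (comparable to $\tfrac12 t\sigma^2 G$) while $W$ still has tiny variance, so one cannot simply demand $W_t>G$ there. I would peel the time axis along dyadic scales of the posterior variance, picking $t_0<t_1<\dots$ with $\sigma^2/(1+t_j\sigma^2)\approx\sigma^2 2^{-j}$, so that by the $\sqrt{\cdot}$-scaling of the index (made rigorous through \cref{thm:gittins}) one has $G_{t_j}\approx G\,2^{-j/2}$ and $\sigma^2_{\le t_j}\approx\sigma^2(1-2^{-j})$. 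On the block $[t_{j-1},t_j]$ the barrier lies below $G(1-2^{-j/2})$, so it is enough to keep $W$ above that level there; a Markov step at each $t_j$ together with \cref{lem:brownian} (lower bounding the chance that a Brownian increment does not dip below a fixed level) and \cref{lem:maximal} and \cref{lem:cgaussian} for the tail bookkeeping shows the probability of surviving block $j$, given a suitable overshoot at $t_{j-1}$, is at least $1-Ce^{-cj}$, and the product over $j$ converges to a positive constant. The initial block $t\in[1,t_0]$, where the barrier is genuinely negligible, is handled by a single application of \cref{lem:brownian}, using the estimate $(G-G_1)/\sqrt{\sigma^2_1}=O(\sqrt{\log n})$ (again from \cref{thm:gittins}) to see that $\P{W_t>G-G_t,\ 1\le t\le t_0}$ is a constant multiple of $1-\Phi(G/\sqrt T)$.

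The main obstacle is step two: making the peeling estimates uniform in $\sigma^2$ and $n$. The scale $t^*\approx\sigma^{-2}$ at which the index begins to decrease, the number of relevant dyadic blocks, and the size of the early-window contribution all depend on $\sigma^2$, and one must ensure the product of the per-block survival probabilities does not decay with $n$; moreover the regime of very small prior variance forces one first to establish the sharper index bound $\gamma(0,\sigma^2,n)=O(\sqrt{\sigma^2_{\le n}})$, which does not follow from \cref{thm:gittins} as stated. Combining the two steps then yields $\P{\tau=n}\ge c'/(n^2\log n)$.
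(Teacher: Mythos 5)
Your reduction of $\{\tau=n\}$ to the event that $W_t=\nu_{t+1}+G$ clears the moving barrier $G-G_t$ at every step is correct, and the observation that $\P{\tau=n}\le 1-\Phi(G/\sqrt{T})$ does identify the right order of magnitude. But the proposal is a plan rather than a proof, and the part you yourself flag as ``the main obstacle'' (step two) is a genuine gap, not a technicality. The assertion that, conditionally on $W_{n-1}>G$, the whole barrier is cleared with probability bounded below by a \emph{universal} constant appears to be false in the small-variance regime: for $t\lesssim\sigma^{-2}$ the barrier grows like $\tfrac12 G\sigma^2 t$ while the bridge mean grows like $G\sigma^2 t$ with per-step standard deviation $\approx\sigma^2$, so the centered walk must stay above a line of slope $-\sqrt{\sigma^2\log\beta/2}$ in standardized units; the survival probability of that ballot-type event is $\Theta\bigl(\min\{1,\sqrt{\sigma^2\log\beta}\}\bigr)$, which is not bounded away from zero when $\sigma^2\ll 1/\log\beta$. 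One can hope the degradation is compensated by the larger Gaussian tail in that regime, but carrying out that trade-off uniformly in $\sigma^2$ and $n$ is exactly what you have not done, and your per-block estimate $1-Ce^{-cj}$ does not survive it as stated. Step one has a second acknowledged hole: in the small-variance case you need $\gamma(0,\sigma^2,n)=O(\sqrt{\sigma^2_{\le n}})$, which is not in \cref{thm:gittins} and is only sketched.

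The paper's proof avoids all of this by never controlling the path against the moving barrier. Normalising so that $\gamma(\nu,\sigma^2,n)=0$, hence $\nu=-\mathcal E$ with $\mathcal E\ge c\sqrt{2\sigma^2_{\le n}}$ by \cref{lem:lower-weak}, the optimality identity $\E\bigl[\sum_{t\le\tau}\nu_t\bigr]=0$ forces the positive contributions to the sum to total at least $\mathcal E/2$, and \cref{lem:cgaussian} then produces a \emph{single} time $t$ with $\delta_t=\P{\tau\ge t,\ \nu_t\ge\mathcal E/(2n)}\gtrsim 1/(n\log n)$. From that one time it suffices that $\nu_s\ge 0$ for all $s\ge t$ (a flat barrier, since the continuation region always contains the nonnegative posterior means), and \cref{lem:brownian} prices this at an extra factor of order $\mathcal E/(n\sigma_{\le n})\gtrsim 1/n$. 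The product gives $c'/(n^2\log n)$ with no bridge decomposition, no two-sided estimate of $G$, and no uniformity issue in $\sigma^2$. Your route, if completed, would yield the sharper statement $\P{\tau=n}=\Theta(1-\Phi(G/\sqrt{T}))$, but as written it does not establish the lemma.
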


\begin{proof}
Assume without loss of generality (by translation and Lemma \ref{lem:shift}) that $\gamma(\nu, \sigma^2, n) = 0$ and let 
$\mathcal E = -\nu \geq c\sqrt{2\sigma^2_{\leq n}}$ by Lemma \ref{lem:lower-weak}.
By definition of the Gittins index we have
\eq{
\E\left[\sum_{t=1}^\tau \nu_t\right] = 0\,.
}
Let $\delta_t = \P{\tau \geq t \text{ and } \nu_t \geq \mathcal E/(2n)}$ and
$\alpha_t = \E[\nu_t|\tau \geq t \text{ and } \nu_t \geq \mathcal E/(2n)]$.
By Lemma \ref{lem:cgaussian} we have $\alpha_t \leq \sqrt{2\sigma_{\leq t}^2 \log(1/\delta_t)} - \mathcal E \leq \sqrt{2\sigma_{\leq t}^2 \log(1/\delta_t)}$.
Therefore
\eq{
0 
= \E\left[\sum_{t=1}^\tau \nu_t\right]
\leq -\mathcal E/2 + \sum_{t=2}^n \delta_t \alpha_t\,.
}
Therefore there exists a $t$ such that $\delta_t \alpha_t \geq \mathcal E / (2n)$ and so
\eq{
\delta_t \sqrt{2\sigma^2_{\leq t} \log\frac{1}{\delta_t}} \geq \delta_t \alpha_t \geq \frac{\mathcal E}{2n}
\geq \frac{c \sqrt{2\sigma^2_{\leq n}}}{2n}\,.
}
Straightforward analysis shows that
\eq{
\delta_t \geq \frac{c}{2n \log\left(\frac{2n}{c}\right)}\,.
}
Now we apply Lemma \ref{lem:brownian}. First note that if $\tau \geq t$ and $\nu_s \geq 0$ for all $s \geq t$, then $\tau = n$.
Therefore
\eq{
\P{\tau = n} 
&\geq \P{\tau \geq t \text{ and } \nu_s \geq 0 \text{ for all } s \geq t} \\
&\geq \frac{c}{2n \log\left(\frac{2n}{c}\right)} \left(\Phi\left(\frac{\mathcal E}{2n \sigma_{\leq n}}\right) - \Phi\left(-\frac{\mathcal E}{2n \sigma_{\leq n}}\right) \right) \\
&\geq \frac{c}{2n \log\left(\frac{2n}{c}\right)} \left(\Phi\left(\frac{c}{2n}\right) - \Phi\left(-\frac{c}{2n}\right) \right) \\
&\geq \frac{c^2}{8n^2 \log\left(\frac{2n}{c}\right)}
}
as required.
\end{proof}

\begin{remark}
It is rather clear that the above proof is quite weak. Empirically it appears that $\P{\tau = n} = \Omega(1/n)$, but the result above is sufficient 
for our requirements.
\end{remark}

\begin{proofof}{\cref{thm:bayes-arm}}
Let $\pi^*$ be the Bayesian optimal policy, which by assumption chooses $I_1 = 1$. 
Define $\pi$ be the policy that chooses arm $2$ until $t$ such that $\gamma_2(t) < \gamma_2(1)$ and thereafter
follows $\pi^*$ as if no samples had been observed and substituting the observed values for the second arm as it is chosen.
Let $\tau = \tau(\nu_2, \sigma^2_2, n)$, then $\pi$ chooses the second time until at least $t = \tau$.
For any policy $\pi$ let $T^\pi_i(n)$ be the number of plays of arm $i$ by policy $\pi$, which is a random variable.
Let $\nu_{i,s}$ be the posterior mean of the $i$th arm after $s-1$ plays. Then the value of policy $\pi$ is
\eq{
V^\pi = \E\left[\sum_{i=1}^d \sum_{s=1}^{T^\pi_i(n)} \nu_{i,s}\right]\,,
}
which is maximised by the Bayesian optimal policy $\pi^*$.
Therefore if we abbreviate $T^{\pi^*}_i(n)$ by $T_i(n)$ and $\gamma_2(1)$ by $\gamma$, then
\eqn{
\nonumber 0 
\nonumber &\geq V^\pi - V^{\pi^*} \\ 
\nonumber &\geq \E \left[\ind{\tau > T_2(n)} \sum_{s=T_1(n) - \tau + T_2(n)}^{T_1(n)} (\gamma - \nu_{1,s})\right] \\
\nonumber &\geq \sum_{s=1}^n \E\left[\ind{\tau - T_2(n) \geq s} (\gamma - \max_{s \leq n} \nu_{1,n})\right] \\
\nonumber &\geq \sum_{s=1}^n \P{\tau - T_2(n) \geq s} \left(\gamma - \nu_1 - \sqrt{2\sigma_1^2 \log \left(\frac{1}{\P{\tau - T_2(n) \geq s}}\right)}\right) \\
&= \sum_{s=1}^n \delta_s \left(\gamma - \nu_1 - \sqrt{2\sigma_1^2 \log \left(\frac{1}{\delta_s}\right)}\right)\,,
\label{eq:bayes-arm} 
}
where $\delta_s = \P{\tau - T_2(n) \geq s}$.
Since the optimal policy chooses action $1$ in the first round by assumption, if $\tau = n$, then $\tau - T_2(n) \geq 1$ is guaranteed.
Therefore by Lemma \ref{lem:tau-bound} we have 
\eq{
\delta_1 \geq \frac{c'}{n^2 \log(n)}\,.
}
By rearranging \cref{eq:bayes-arm} we have for any $\delta \leq \delta_1$ that
\eq{
\gamma 
&\leq \nu_1 + \frac{\sum_{s=1}^n \delta_s \sqrt{2\sigma_1^2 \log \left(\frac{1}{\delta_s}\right)}}{\sum_{s=1}^n \delta_s} \\
&\leq \nu_1 + \sqrt{2\sigma_1^2 \log\left(\frac{1}{\delta}\right)} + \frac{\sum_{s:\delta_s < \delta} \delta_s \sqrt{-2\sigma_1^2 \log \delta_s}}{\delta_1} \\ 
&\leq \nu_1 + \sqrt{2\sigma_1^2 \log\left(\frac{1}{\delta}\right)} + \frac{\sum_{s:\delta_s < \delta} \sqrt{2\sigma_1^2 \delta_s}}{\delta_1} \\
&\leq \nu_1 + \sqrt{2\sigma_1^2 \log\left(\frac{1}{\delta}\right)} + \frac{n\sqrt{2\sigma_1^2 \delta}}{\delta_1}\,.
}
The result is completed by choosing $\delta = \delta_1^2 n^{-2}$.  
\end{proofof}

\section{Technical Lemmas}\label{sec:tech}

\begin{lemma}\label{lem:tech1}
Let $\eta \in (0,1)$ and $\Delta > 0$. Then
\eq{
\sum_{k=0}^\infty (1 + \eta)^{\frac{k}{1 + \eta}} \exp\left(-(1 + \eta)^k \Delta^2\right)
\leq \frac{3}{\eta} \left(\frac{1}{\Delta^2}\right)^{1/(1 + \eta)}\,.
}
\end{lemma}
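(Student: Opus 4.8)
The plan is to recognise the sum as a sampling, at the non-negative integers, of a single continuous unimodal function, and then apply a sharp sum-versus-integral comparison. Concretely, set $f(x) = (1+\eta)^{x/(1+\eta)}\exp\!\bigl(-(1+\eta)^x\Delta^2\bigr)$ for $x \geq 0$, so that the left-hand side is exactly $\sum_{k=0}^\infty f(k)$. Differentiating $\log f$ gives $(\log f)'(x) = \ln(1+\eta)\bigl(\tfrac{1}{1+\eta} - (1+\eta)^x\Delta^2\bigr)$, which changes sign at most once (from $+$ to $-$) on $[0,\infty)$; hence $f$ is unimodal, possibly degenerately monotone. First I would establish the general fact that for a non-negative unimodal $f$ on $[0,\infty)$ one has $\sum_{k=0}^\infty f(k) \le \sup_{x\ge0} f(x) + \int_0^\infty f(x)\,dx$: writing $m = \lfloor x^*\rfloor$ for the integer floor of the peak location, each term with $k<m$ and each with $k>m+1$ is bounded by the integral of $f$ over an adjacent unit interval on which $f$ is monotone, while the two straddling terms obey $f(m)+f(m+1) \le \sup f + \int_m^{m+1} f$ because $f$ attains its minimum on $[m,m+1]$ at an endpoint.

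Next I would bound the two pieces. For the supremum, the critical point is $(1+\eta)^x = \bigl((1+\eta)\Delta^2\bigr)^{-1}$, at which $f$ equals $(\Delta^2)^{-1/(1+\eta)}\exp\!\bigl(-\tfrac{1+\ln(1+\eta)}{1+\eta}\bigr)$; since $u\mapsto \tfrac{1+\ln u}{u}$ is decreasing on $[1,\infty)$ we get $\tfrac{1+\ln(1+\eta)}{1+\eta} \ge \tfrac{1+\ln 2}{2}$ for $\eta\in(0,1]$, hence $\sup f \le \tfrac{1}{\sqrt{2e}}(\Delta^2)^{-1/(1+\eta)}$. For the integral, the substitution $s=(1+\eta)^x$ gives $\int_0^\infty f(x)\,dx = \tfrac{1}{\ln(1+\eta)}\int_1^\infty s^{\frac{1}{1+\eta}-1}e^{-s\Delta^2}\,ds \le \tfrac{1}{\ln(1+\eta)}\int_0^\infty s^{\frac{1}{1+\eta}-1}e^{-s\Delta^2}\,ds = \tfrac{\Gamma(1/(1+\eta))}{\ln(1+\eta)}(\Delta^2)^{-1/(1+\eta)}$. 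I would then use $\Gamma(1/(1+\eta))\le\Gamma(1/2)=\sqrt\pi$ (because $\Gamma$ is decreasing on $(0,1.46]$ and $1/(1+\eta)\in[1/2,1)$) together with $\ln(1+\eta)\ge\eta\ln 2$ (concavity of $\ln$ on $[0,1]$), so that $\int_0^\infty f \le \tfrac{\sqrt\pi}{\eta\ln 2}(\Delta^2)^{-1/(1+\eta)}$.

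Combining these and using $\eta<1$ to absorb the $\sup f$ term, $\sum_{k=0}^\infty f(k) \le \tfrac1\eta\bigl(\tfrac{1}{\sqrt{2e}}+\tfrac{\sqrt\pi}{\ln 2}\bigr)(1/\Delta^2)^{1/(1+\eta)}$, and since $\tfrac{1}{\sqrt{2e}}+\tfrac{\sqrt\pi}{\ln 2} < 0.43+2.56 < 3$ the claim follows. The delicate point lies entirely in the constants: the crude comparison $\sum f(k)\le 2\sup f + \int f$, or replacing $\Gamma(1/2)=\sqrt\pi$ by the easy bound $\Gamma(1/(1+\eta))\le 2$, or bounding $\ln(1+\eta)\ge\eta/2$, would each push the leading constant above $3$. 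So the main obstacle is getting the sharp sum-versus-integral inequality (the $f(m)+f(m+1)\le\sup f+\int_m^{m+1}f$ step) and then keeping every auxiliary estimate tight enough to land below $3$.
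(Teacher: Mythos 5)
Your proof is correct, and it takes a genuinely different route from the paper's. The paper splits the sum at the first index $L$ with $(1+\eta)^L \geq 1/\Delta^2$, bounds the head by dropping the exponential and summing a geometric series, and bounds the tail by reindexing and comparing $\sum_k (1+\eta)^k e^{-(1+\eta)^k}$ to the elementary integral $\int_0^\infty (1+\eta)^k e^{-(1+\eta)^k}\,dk = 1/\log(1+\eta)$; the lost factor of $(1+\eta)^{1/(1+\eta)} \leq \sqrt{2}$ is then absorbed in the final step $\frac{2}{\eta}\bigl(\frac{1+\eta}{\Delta^2}\bigr)^{1/(1+\eta)} \leq \frac{3}{\eta}\bigl(\frac{1}{\Delta^2}\bigr)^{1/(1+\eta)}$. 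You instead treat the whole sum at once via a unimodal sum-versus-integral comparison, $\sum_{k\geq 0} f(k) \leq \sup f + \int_0^\infty f$, and evaluate the integral exactly as $\Gamma(1/(1+\eta))\,(\Delta^2)^{-1/(1+\eta)}/\log(1+\eta)$. I checked the pieces: the one-sided comparison with a single $\sup f$ correction is valid for unimodal $f$ exactly as you argue; the critical-point evaluation $\sup f \leq e^{-(1+\log 2)/2}(\Delta^2)^{-1/(1+\eta)} = (2e)^{-1/2}(\Delta^2)^{-1/(1+\eta)}$ is right (and covers the degenerate case where the peak lies left of $0$, since you bound the global supremum); $\Gamma(1/(1+\eta)) \leq \Gamma(1/2) = \sqrt{\pi}$ holds because $1/(1+\eta) \in (1/2,1)$ lies in the region where $\Gamma$ is decreasing; $\log(1+\eta) \geq \eta \log 2$ follows from concavity; and the final numerical check $1/\sqrt{2e} + \sqrt{\pi}/\log 2 < 0.43 + 2.56 < 3$ is accurate, if only barely. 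The trade-off is that the paper's argument is more elementary (no Gamma function, no knowledge of where $\Gamma$ is monotone) and has more slack at each step, whereas yours is more systematic and reusable but must keep every constant tight to land under $3$ --- as you correctly point out, weakening any one of your three auxiliary estimates would break the bound, so the correctness of your proof genuinely hinges on those numerical margins.
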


\begin{proof}
Let $L = \min\set{L : (1 + \eta)^k \geq \frac{1}{\Delta^2}}$. Then
\eq{
\sum_{k=0}^\infty &(1 + \eta)^{\frac{k}{1 + \eta}} \exp\left(-(1 + \eta)^k \Delta^2\right) \\
&=\sum_{k=0}^{L-1} (1 + \eta)^{\frac{k}{1 + \eta}} \exp\left(-(1 + \eta)^k \Delta^2\right) 
    + \sum_{k=L}^{\infty} (1 + \eta)^{\frac{k}{1 + \eta}} \exp\left(-(1 + \eta)^k \Delta^2\right) \\
&\leq\sum_{k=0}^{L-1} (1 + \eta)^{\frac{k}{1 + \eta}}
    + \sum_{k=L}^{\infty} (1 + \eta)^{\frac{k}{1 + \eta}} \exp\left(-(1 + \eta)^k \Delta^2\right) \\
&\leq \frac{1}{\eta} \left(\frac{1}{\Delta^2}\right)^{1/(1 + \eta)}
    + \sum_{k=L}^{\infty} (1 + \eta)^{\frac{k}{1 + \eta}} \exp\left(-(1 + \eta)^k \Delta^2\right) \\
&\leq \frac{1}{\eta} \left(\frac{1}{\Delta^2}\right)^{1/(1 + \eta)}
    + \left(\frac{1+\eta}{\Delta^2}\right)^{1/(1+\eta)} \sum_{k=0}^{\infty} (1 + \eta)^{\frac{k}{1 + \eta}} \exp\left(-(1 + \eta)^k \right) \\
&\leq \frac{1}{\eta} \left(\frac{1}{\Delta^2}\right)^{1/(1 + \eta)}
    + \left(\frac{1+\eta}{\Delta^2}\right)^{1/(1+\eta)} \sum_{k=0}^{\infty} (1 + \eta)^k \exp\left(-(1 + \eta)^k \right) \\
&\leq \frac{1}{\eta} \left(\frac{1}{\Delta^2}\right)^{1/(1 + \eta)}
    + \left(\frac{1+\eta}{\Delta^2}\right)^{1/(1+\eta)} \left(\frac{1}{e} + \int^\infty_0 (1 + \eta)^k \exp\left(-(1 + \eta)^k \right) dk\right) \\
&\leq \frac{1}{\eta} \left(\frac{1}{\Delta^2}\right)^{1/(1 + \eta)}
    + \left(\frac{1+\eta}{\Delta^2}\right)^{1/(1+\eta)} \frac{1}{e}\left(1 + \frac{1}{\log (1 + \eta)}\right) \\
&\leq \frac{2}{\eta} \left(\frac{1+\eta}{\Delta^2}\right)^{1/(1 + \eta)} \\
&\leq \frac{3}{\eta} \left(\frac{1}{\Delta^2}\right)^{1/(1 + \eta)}
}
as required.
\end{proof}

\section{Completing Proof of \cref{thm:gittins}}\label{sec:algebra}

First we need and easy lemma.

\begin{lemma}\label{lem:beta}
$f(\beta) = \frac{1}{\beta} \sqrt{\frac{1}{2\pi}} - \sqrt{\frac{\log \beta}{2}} \erfc\left(\sqrt{\log \beta}\right)$ satisfies:
\begin{enumerate}
\item $\lim_{\beta \to \infty} f(\beta) \beta \log(\beta) = \frac{1}{\sqrt{8\pi}}$.
\item $f(\beta) \geq \frac{1}{10\beta \log(\beta)}$ for all $\beta \geq 3$.
\item $f(\beta) \leq \frac{1}{\sqrt{8\pi}} \cdot \frac{1}{\beta \log \beta}$.
\end{enumerate}
\end{lemma}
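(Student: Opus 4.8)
The plan is to reduce all three parts to a single pair of sharp non-asymptotic estimates on $\erfc(\sqrt{\log\beta})$ coming from integration by parts. Write $L=\log\beta$, so that $\beta>1$ is exactly $L>0$ and $e^{-L}=1/\beta$. Starting from $\int_x^\infty e^{-y^2}\,dy=\frac{e^{-x^2}}{2x}-\frac12\int_x^\infty \frac{e^{-y^2}}{y^2}\,dy$ and applying the same step once more to $\int_x^\infty y^{-2}e^{-y^2}\,dy$, one obtains for every $x>0$
\[
\frac{e^{-x^2}}{x\sqrt{\pi}}\Bigl(1-\frac{1}{2x^2}\Bigr)\ \le\ \erfc(x)\ \le\ \frac{e^{-x^2}}{x\sqrt{\pi}}\Bigl(1-\frac{1}{2x^2}+\frac{3}{4x^4}\Bigr),
\]
the lower bound being valid for all $x>0$ since it is trivial when $1-\tfrac1{2x^2}<0$. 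Setting $x=\sqrt{L}$ and using $\sqrt{L/2}\cdot\frac{e^{-L}}{\sqrt{\pi L}}=\frac{1}{\beta\sqrt{2\pi}}$ turns these into bounds on $\sqrt{L/2}\,\erfc(\sqrt L)$, and subtracting from the leading term $\frac{1}{\beta\sqrt{2\pi}}$ of $f(\beta)$ gives, for every $\beta>1$ (recall $2\sqrt{2\pi}=\sqrt{8\pi}$),
\[
\frac{1}{\sqrt{8\pi}\,\beta\log\beta}\Bigl(1-\frac{3}{2\log\beta}\Bigr)\ \le\ f(\beta)\ \le\ \frac{1}{\sqrt{8\pi}\,\beta\log\beta}.
\]

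The right-hand inequality is precisely claim (3), with nothing further to prove. Claim (1) follows by multiplying the sandwich by $\beta\log\beta$ and letting $\beta\to\infty$: both bounding quantities tend to $\frac{1}{\sqrt{8\pi}}$, so $\lim_{\beta\to\infty} f(\beta)\beta\log\beta=\frac{1}{\sqrt{8\pi}}$. For claim (2), the left-hand inequality already gives $f(\beta)\beta\log\beta\ge \frac{1}{\sqrt{8\pi}}\bigl(1-\frac{3}{2\log\beta}\bigr)$, which exceeds $\tfrac1{10}$ as soon as $1-\frac{3}{2\log\beta}\ge \frac{\sqrt{8\pi}}{10}$, i.e.\ for all $\beta\ge\beta_0$ with $\beta_0$ an explicit constant (say $\beta_0=23$).

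It remains to cover the finite window $3\le\beta\le\beta_0$, and this is the only delicate point: near $\beta=3$ the inequality in (2) is genuinely close to tight — one has $f(3)\cdot 3\log3\approx 0.104$, barely above $0.1$ — and the integration-by-parts estimates are too crude there because $L=\log 3\approx 1.1$ is small. On this compact interval I would argue by monotonicity. Writing $x=\sqrt{\log\beta}$ and $M(x)=e^{x^2}\int_x^\infty e^{-y^2}\,dy$ for the Mills ratio, one has the identity $f(\beta)\beta\log\beta=\sqrt{\tfrac2\pi}\,x^2\bigl(\tfrac12-xM(x)\bigr)$, and it suffices to check that this is increasing on $x\in[\sqrt{\log 3},\sqrt{\log\beta_0}]$ — using $M'(x)=2xM(x)-1$ to compute the derivative together with a sharp upper bound on $M$ on this bounded range — after which its value at the left endpoint, $f(3)\cdot 3\log 3>\tfrac1{10}$, completes the proof; alternatively this last step is a routine numerical verification over a compact interval. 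The main obstacle throughout is exactly this: forcing the constant in (2) to hold uniformly all the way down to $\beta=3$, since the clean asymptotic estimates only yield it for moderately large $\beta$. Parts (1) and (3) are immediate from the integration-by-parts bounds.
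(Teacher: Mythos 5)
The paper offers no proof of this lemma at all (it is introduced as ``an easy lemma'' and used without justification), so there is nothing to compare against; your integration-by-parts sandwich
$\frac{1}{\sqrt{8\pi}\beta\log\beta}\bigl(1-\frac{3}{2\log\beta}\bigr)\le f(\beta)\le\frac{1}{\sqrt{8\pi}\beta\log\beta}$
is correct (I verified the two expansions of $\int_x^\infty e^{-y^2}dy$ and the substitution $x=\sqrt{\log\beta}$), and it settles (1) and (3) completely. Your diagnosis of where the difficulty lies is also exactly right: the only genuine content of the lemma is the constant $\tfrac1{10}$ in (2) near the left endpoint.

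Two corrections on that endgame. First, the margin at $\beta=3$ is thinner than you state: $f(3)\cdot 3\log 3\approx 0.1006$, not $0.104$, so the endpoint check must be carried out to three significant figures of $\erfc(1.048)$ -- it holds, but with only about a $0.6\%$ margin. Second, the monotonicity route is more delicate than ``a sharp upper bound on $M$'' suggests. Your derivative computation reduces $g'\ge0$ (for $g(x)=\sqrt{2/\pi}\,x^2(\tfrac12-xM(x))$) to the inequality $M(x)\le\frac{1+x^2}{x(3+2x^2)}$, and this bound is asymptotically tight to third order in $1/x^2$: the standard Komatsu-type bound $M(x)\le\bigl(x+\sqrt{x^2+4/\pi}\bigr)^{-1}$ implies it only for $x^2\lesssim 1.05$, i.e.\ $\beta\lesssim 2.9$, and so fails on essentially all of your interval $[\sqrt{\log 3},\sqrt{\log 23}]$. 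What does work is the third continued-fraction convergent bound for the Mills ratio, $e^{x^2}\!\int_x^\infty e^{-y^2}dy<\frac{1+x^2}{x(3+2x^2)}$ for all $x>0$, which is classical but must be invoked explicitly -- with it, $g$ is in fact increasing on all of $(0,\infty)$, which gives (2) from the single endpoint evaluation and even re-derives (3) from (1). Absent that, your fallback of a finite numerical verification on the compact interval (with a Lipschitz bound on $g$) is legitimate and closes the argument. So: approach sound, (1) and (3) done, (2) needs either the specific continued-fraction bound or the explicit compact-interval check spelled out.
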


\newcommand{\betaone}{\frac{m}{c\log^{\frac{3}{2}}_+(m)}}
\newcommand{\betatwo}{\frac{m\sigma^2}{c \log^{\frac{1}{2}}_+(m\sigma^2)}}

\subsubsect{Completing \cref{thm:gittins}}
To begin we choose $c = (40c''')^4$.
Recall that we need to show that
\eq{
m\sigma f(\beta) &\geq  
c''' \max\set{\sigma, \sqrt{\sigma^2 \log(\beta)}, \frac{1}{\sqrt{\sigma^2 \log(\beta)}}, \sqrt{mW\left(m\sigma^2 \log(\beta)\right)} \beta^{-\frac{7}{8}}}\,,
}
where
\eq{
\beta_1 &= \frac{m}{c \log_+^{\frac{3}{2}}(m)} &
\beta_2 &=\frac{m\sigma^2}{c \log_+^{\frac{1}{2}}(m\sigma^2)} &
\beta &= \min\set{\beta_1,\, \beta_2}
}
and $\beta \geq 3$ is assumed.
This is equivalent to showing:
\eq{
(1) \qquad m\sigma f(\beta) &\geq c'''\sigma  && \text{and}\\ 
(2) \qquad m\sigma f(\beta) &\geq c'''\sqrt{\sigma^2 \log(\beta)} && \text{and} \\
(3) \qquad m\sigma f(\beta) &\geq c'''/\sqrt{\sigma^2 \log(\beta)} && \text{and}  \\
(4) \qquad m\sigma f(\beta) &\geq c''' \sqrt{mW(m\sigma^2 \log(\beta))} \beta^{-\frac{7}{8}}\,.
}
By \cref{lem:beta} we have
\eq{
m\sigma f(\beta) &\geq \frac{m\sigma}{10 \beta \logp(\beta)} 
\geq \frac{m \sigma}{10 \cdot \betaone \logp\left(\betaone\right)}  \\
&\geq c''' \sigma \sqrt{\logp(m)} \geq c''' \sqrt{\sigma^2 \log(\beta)} \geq c''' \sigma\,.
}
Therefore (1) and (2) hold. For (3) we have
\eq{
m\sigma f(\beta) 
&\geq \frac{m\sigma}{10\beta \logp(\beta)} 
\geq \frac{m \sigma}{10\cdot \betatwo \log^{\frac{1}{2}}_+\left(\betatwo\right) \sqrt{\logp(\beta)}} \\
&\geq \frac{c'''}{\sqrt{\sigma^2 \log(\beta)}}\,.
}
Therefore (3) holds. Finally 
\eq{
m\sigma f(\beta) \beta^{\frac{7}{8}}
&\sr{(a)}\geq \frac{m\sigma}{10\beta^{\frac{1}{8}} \logp(\beta)}
\sr{(b)}\geq \frac{m\sigma}{40\beta^{\frac{1}{4}}} 
\sr{(c)}\geq \frac{m\sigma}{40 \left(\frac{m\sigma^2}{(40 c''')^4}\right)^{\frac{1}{4}}}
\sr{(d)}= c'''\sqrt{m} \left(m\sigma^2\right)^{\frac{1}{4}} \\
&\sr{(e)}\geq c'''\sqrt{m W(m\sigma^2 \log(m\sigma^2))} 
\sr{(f)}\geq c''' \sqrt{m W(m\sigma^2 \log(\beta))} \,,
}
where (a) follows from Lemma \ref{lem:beta}.
(b) since $\beta^{\frac{1}{8}} \log(\beta) \leq 4 \beta^{\frac{1}{4}}$ for $\beta \geq 3$.
(c) by substituting and naively upper bounding $\beta \leq \beta_2$.
(d) is trivial.
(e) since $x^{\frac{1}{4}} \geq \sqrt{W(x \log(x))}$ for all $x \geq 0$.
(f) is true because $\beta \leq \beta_2 \leq m\sigma^2$.

\section{A Brief History of Multi-Armed Bandits}\label{sec:history}

This list is necessarily not exhaustive. It contains in (almost) chronological order the papers that are most relevant for the present work, with a special
focus on earlier papers that are maybe less well known. Please contact me if there is a serious omission in this list. 

\begin{itemize}
\item \cite{Tho33} proposes a sampling approach for finite-horizon undiscounted Bernoulli bandits now known as Thompson sampling. No theoretical
results are given.
\item \cite{Rob52} sets out the general finite-horizon bandit problem and gives some Hannan consistent algorithms and general discussion. Robbins seems
to have been unaware of Thompson's work.
\item \cite{BJK56} present the optimal solution for the Bayesian one-armed bandit in the finite-horizon undiscounted setting. They also
prove many of the counter-intuitive properties of the Bayesian optimal solution for multi-armed bandits. The index used in the present article could just
as well be called the Bradt--Johnson--Karlin index.
\item \cite{Git79} presents the optimal solution for the Bayesian multi-armed bandit in the infinite-horizon geometrically discounted setting.
Gittins seems to have been unaware of the work by \cite{BJK56} who defined a similar index. Gittins' result does not apply in the
finite-horizon setting.
\item \cite{Whi80,Web92,Tsi94} all give alternative proofs and/or characterisations of the Gittins index for the infinite-horizon discounted case.
\item \cite{Bat83} gives an asymptotic approximation of the infinite-horizon Gittins index (see also the work by \cite{Yao06}).
\item The textbook by \cite{BF85} summarises many of the earlier results on Bayesian multi-armed bandits. They also prove that geometric discounting
is essentially necessary for the Gittins index theorem to hold and give counter-examples in the Bernoulli case.
\item \cite{LR85} develop asymptotically optimal frequentist algorithms for the multi-armed bandits and prove the first lower bounds.
\item \cite{Agr95} and \cite{KR95} independently develop the UCB algorithm.
\item \cite{BK97} give asymptotic approximations for the finite-horizon Gittins index (more specific results in a similar vein are given by \cite{CR65}).
\item \cite{ACF02} proves finite-time guarantees for UCB. 
\item \cite{Nin11} presents some methods for computing the finite-horizon Gittins index in the discrete case, and also suggest that index algorithm
is a good approximation for the intractable Bayesian solution (no reference given).
\item \cite{KKM12} gave promising empirical results for the finite-horizon Gittins index strategy for the Bernoulli case, but did not study its
theoretical properties. They also incorrectly claim that the Gittins strategy is Bayesian optimal. Their article played a large role in motivating
the present work.
\end{itemize}

\section{Table of Notation}\label{sec:notation}

\noindent
\renewcommand{\arraystretch}{1.5}
\hspace{-0.3cm}
\begin{tabular}{p{2.8cm}p{10cm}}
$d$                                       & number of arms \\
$n$                                       & horizon \\
$t$                                       & current round \\
$\mu_i$                                   & expected return of arm $i$ \\
$\mu^*$                                   & maximum mean reward, $\mu^* = \max_i \mu_i$ \\
$\hat \mu_i(t)$                           & empirical estimate of return of arm $i$ after round $t$ \\
$T_i(t)$                                  & number of times arm $i$ has been chosen after round $t$ \\
$I_t$                                     & arm chosen in round $t$ \\
$X_t$                                     & reward in round $t$ \\
$\nu_i$                                   & prior mean of arm $i$ \\
$\sigma_i^2$                              & prior variance for mean of arm $i$ \\
$\Delta_i$                                & gap between the expected returns of the best arm and the $i$th arm \\ 
$\Delta_{\max}$                           & maximum gap, $\max_i \Delta_i$ \\
$\Delta_{\min}$                           & minimum non-zero gap, $\min \set{\Delta_i : \Delta_i > 0}$ \\
$\logp(x)$                                & $\max\set{1, \log(x)}$  \\
$\plog(x)$                                & product logarithm $x = W(x) exp(W(x))$ \\
$\diag(\sigma_1^2,\ldots,\sigma_d^2)$     & diagonal matrix with entries $\sigma_1^2,\ldots,\sigma_d^2$ \\
$\gamma(\nu, \sigma^2, m)$                & Gittins index for mean/variance $(\nu, \sigma^2)$ and horizon $m$ \\
$\tau(\nu, \sigma^2, m)$                  & stopping time that determines the Gittins index \\
$\erfc(x)$                                & complementary error function 
\eq{
\erfc(x) = \frac{2}{\sqrt{\pi}} \int^\infty_x \exp(-y^2) dy} \\
$\erf(x)$                                 & error function $\erf(x) = 1 - \erfc(x)$ \\
$\mathcal N(\mu, \sigma^2)$               & Gaussian with mean $\mu$ and variance $\sigma^2$ \\ 
$\Phi(x)$                                 & standard Gaussian cdf
\eq{
\Phi(x) = \frac{1}{\sqrt{2\pi}}\int^x_{-\infty} \exp\left(-\frac{t^2}{2}\right) dt
}
\end{tabular}

\end{document}